\journal{Knowledge-Based Systems}
\newcommand{\nop}[1]{}
\newcommand{\thickhline}{%
    \noalign {\ifnum 0=`}\fi \hrule height 0.6pt
    \futurelet \reserved@a \@xhline
}
\newtheorem{definition}{Definition}
\newtheorem*{proof}{Proof}
\newtheorem{lemma}{Lemma}
\begin{document}

\title{Unsupervised feature selection via self-paced learning and low-redundant regularization}


\begin{frontmatter}

\author[mymainaddress,mysecondaryaddress]{Weiyi Li}
\ead{weiyili@my.swjtu.edu.cn}

\author[mymainaddress,mysecondaryaddress]{Hongmei Chen\corref{mycorrespondingauthor}}
\ead{hmchen@swjtu.edu.cn}
\cortext[mycorrespondingauthor]{Corresponding author}

\author[mymainaddress,mysecondaryaddress]{Tianrui~Li}
\ead{trli@swjtu.edu.cn}

\author[mymainaddress,mysecondaryaddress]{Jihong Wan}
\ead{jhwan@my.swjtu.edu.cn}

\author[mymainaddress,mysecondaryaddress]{Binbin Sang}
\ead{sangbinbin@my.swjtu.edu.cn}


\address[mymainaddress]{School of Computing and Artificial Intelligence, Southwest Jiaotong University, China}
\address[mysecondaryaddress]{National Engineering Laboratory of Integrated Transportation Big Data Application Technology, Southwest Jiaotong University, China}
\begin{abstract}
Much more attention has been paid to unsupervised feature selection nowadays due to the emergence of massive unlabeled data.
The distribution of samples and the latent effect of training a learning method using samples in more effective order need to be considered so as to improve the robustness of the method. Self-paced learning is an effective method considering the training order of samples. In this study, an unsupervised feature selection is proposed by integrating the framework of self-paced learning and subspace learning. Moreover, the local manifold structure is preserved and the redundancy of features is constrained by two regularization terms. $L_{2,1/2}$-norm is applied to the projection matrix,  which aims to retain discriminative features and further alleviate the effect of noise in the data. Then, an iterative method is presented to solve the optimization problem. The convergence of the method is proved theoretically and experimentally. The proposed method is compared with other state of the art algorithms on nine real-world datasets. The experimental results show that the proposed method can improve the performance of clustering methods and outperform other compared algorithms.
\end{abstract}

\begin{keyword}
Unsupervised feature selection, Self-paced learning, Subspace learning, Redundancy reduction, Local manifold structure
\end{keyword}
\end{frontmatter}

\section{Introduction}
\label{sec:Introduction}  

As information technology develops, high-dimensional data can be obtained easily. Although data with high dimension is able to provide abundant useful information, there are redundant and noisy features which may result in poor performance of the corresponding algorithm \cite{art39,art40}. Feature selection, which aims to select a subset from the original feature set, is a commonly used strategy to reduce the dimension to an extent that only discriminative features are retained \cite{art57}. The selected informative features can keep the semantic information of raw data, which is helpful in the subsequent data analysis. However, due to the high cost and difficulty of acquiring labels, significance has been attached to unsupervised feature selection methods, where features are selected considering the intrinsic properties and structure of the high-dimensional data \cite{art58,art59}. It has been applied in many fields, such as machine learning, pattern recognition, and text mining \cite{art41,art42,art43,art44}.

Unsupervised methods can be bracketed into three kinds by and large, namely, filter, wrapper and embedded methods. Filter methods rank features according to the certain characteristic of data, $e.g.$, information related to labels and topological structure. Laplacian Score, max variance and trace ratio are three traditional approaches of this kind \cite{art12,art45,art46}. Wrapper methods use a learning model to evaluate a subset $e.g.$ methods varFnMS \cite{art47} and UFSACO \cite{art48}. Although the performance of wrapper methods is better than that of filter methods, the model needs to be trained repeatedly, which may result in high computational cost. Therefore, they are unsuitable for large-scale datasets. To address this problem, embedded methods integrate feature selection with model optimization \cite{art60}. Different from the aforementioned two methods, they carry out feature selection automatically in the training process of the learner. Compared with wrapper methods, embedded methods avoid repeated training for the learner to evaluate every feature subset. Hence, the optimal one can be gained rapidly, which indicates the high efficiency of the methods.

In general, there are two kinds of models utilized in embedded methods, that is to say regression-based model and self-representation based model. Regression-based model converts unsupervised feature selection into a supervised one by learning the pseudo labels of data. M. G. Parsa et al. proposed unsupervised feature selection based on adaptive similarity learning and subspace clustering (SCFS) \cite{art49}. Symmetric nonnegative matrix factorization is employed to get the cluster indicator matrix and a regression model is exploited to optimize the coefficient matrix so as to select the most important features. Given the fact that large amounts of data are associated with several clusters instead of a single cluster in real-world applications, selecting features under the guidance of hard labels is very likely to degrade the effectiveness of the algorithms. As a consequence, Wang et al. proposed unsupervised soft-label feature selection (USFS) which combines soft-label learning with the framework of regression \cite{art50}. As for the self-representation based model, it is assumed that each feature can be represented by the linear combination of its relevant features. Lu et al. proposed structure preserving unsupervised feature selection which also maintains the local manifold structure of data \cite{art51}. To further suppose each sample can be reconstructed by the linear combination of its relevant samples and take the advantages of learning the similarity matrix adaptively into account, Tang et al. proposed robust unsupervised feature selection via dual self-representation and manifold regularization (DSRMR) \cite{art52}. As is known to all, negative elements play an insignificant role in practical problems \cite{art53}. Based on self-representation, subspace learning is introduced, which embeds the potential characteristics of the data into the low-dimensional space through a projection matrix \cite{art18}. Through subspace learning, $l$ features are chosen first to delete irrelevant features. Then the original high-dimensional data is reconstructed from the representative features by means of the coefficient matrix, which can prevent the influence of noisy features as much as possible. Wang et al. proposed a subspace learning algorithm for unsupervised feature selection via matrix factorization (MFFS) \cite{art18}. The algorithm imposes subspace learning to select a feature subset that is capable of representing the remaining features. Nevertheless, it doesn’t take the sparsity of the indicator matrix into account. To overcome this problem, Zheng et al. proposed a robust unsupervised feature selection, $i.e.$, nonnegative sparse subspace learning (NSSLFS) \cite{art19}. NSSLFS adds the ${l_1}$-loss and ${l_{2,1}}$-norm minimization to the objective function. Therefore, the sparsity and robustness are achieved.

Moreover, it is widely accepted that redundancy and noises in features and data tend to degrade the performance of a learning method. So, some low-redundant methods have emerged in recent years. Liu et al. proposed a diversity induced self-representation for unsupervised feature selection algorithm (DISR) \cite{art20}. The algorithm takes the diversity of features into consideration to reduce the redundancy. For the reason that DISR gives little care to the manifold structure, Shang et al. proposed sparse and low-redundant subspace learning-based graph regularized feature selection (SLDSR) \cite{art21}. SLDSR focuses on the local geometric structure of feature and data space to promote its performance. Unlike DISR, a novel diversity term is introduced, which utilizes the inner product of feature weight vectors. On the basis of the framework of regularized regression, Lim et al. proposed feature dependency-based unsupervised feature selection (DUFS) \cite{art22}. Mutual information is employed in the algorithm to evaluate the dependency among features.

Usually, ${l_{2,p}}$-norm is imposed on the matrix to enhance the robustness and avoid the problem caused by noises. Zhu et al. proposed co-regularized unsupervised feature selection (CUFS) \cite{CUFS}. The algorithm takes data reconstruction and cluster structure into account simultaneously. By applying ${l_{2,1}}$-norm to the projection matrix and cluster base matrix, sparsity can be guaranteed. Liu et al. proposed robust neighborhood embedding for unsupervised feature selection (RNE) \cite{art31}. Considering that the commonly used ${l_{2,1}}$-norm requires to assume the coefficient distribution, RNE replaces it with ${l_1}$-norm to achieve robust result. Miao et al. proposed unsupervised feature selection by non-convex regularized self-representation (NOVRSR) \cite{NOVRSR}. ${l_{2,1 - 2}}$-norm which is non-convex but Lipschitz continuous is imposed on the representation coefficient matrix. And an efficient iterative algorithm is designed to address its non-convexity. However, the above three algorithms just solve the relaxed problem from the original ${l_{2,0}}$-norm problem, which has a tendency to weaken the performance. Nie et al. proposed unsupervised feature selection with constrained ${l_{2,0}}$-norm and optimized graph (RSOGFS) \cite{RSOGFS}. RSOGFS tackles the ${l_{2,0}}$-norm problem directly so as to choose the needed features at a time instead of one by one. Thus, the optimal combination of features can be obtained. Since the optimization algorithms of ${l_1}$-norm, ${l_{2,1 - 2}}$-norm and ${l_{2,0}}$-norm are much more complex than ${l_{2,1}}$-norm and ${l_{2,{1 \mathord{\left/
 {\vphantom {1 2}} \right.
 \kern-\nulldelimiterspace} 2}}}$-norm has been testified to have more robust and sparser results than ${l_{2,1}}$-norm, utilizing ${l_{2,{1 \mathord{\left/
 {\vphantom {1 2}} \right.
 \kern-\nulldelimiterspace} 2}}}$-norm to ensure the row-sparsity of the matrix may be a better choice \cite{art27}.

Under the assumption that easy samples with smaller loss ought to be selected in the early stage while complex samples with larger loss are supposed to be selected later or not, self-paced learning is raised \cite{art25}. In the iterative process, there are a growing number of complex samples to be involved in the model until the model is “mature”. As a result, it is possible for the relatively complex samples to be excluded from the model or be included with smaller weights, which is another way to strengthen its robustness \cite{art23}. At present, self-paced learning is rarely deployed in unsupervised feature selection. Zheng et al. proposed unsupervised feature selection by self-paced learning regularization which integrates self-paced learning with the framework of self-representation to achieve promising performance \cite{art56}. However, the local geometric structure and the diversity of features which take a vital part in feature selection process are overlooked. Consequently, the effectiveness of the algorithm can be further improved.

From what has been mentioned above, we present unsupervised feature selection via self-paced learning and low-redundant regularization (SPLR). It is devised to settle the following two shortcomings of the existent unsupervised feature selection algorithms: 1) Noisy and redundant features which are likely to depress the performance are not removed during the training process. 2) The distribution of samples and the latent effect of training a learning method using samples in more effective order are rarely taken into consideration. To be specific, self-paced learning and subspace learning are united to not only reduce the influence of noises but also make the reconstruction information more accurate. Through self-paced learning, samples with smaller loss are given larger weights initially and vice versa. In the process of learning, an increasing number of samples are involved until the model is robust \cite{art23,art24,art25,art26}. Through subspace learning, data is first embedded from the original high-dimensional space to the relatively low-dimensional subspace, and then reconstructed to the space with high dimension. Additionally, on account of the fact that local geometric structure with regard to data plays a much more crucial role than the global structure, a local structure preserving term is introduced to the objective function. What’s more, given the fact that features which are closely correlated stand a chance of having a negative effect on the performance of the algorithm, a diversity term is brought up. The primary contributions of the proposed SPLR are as follows.
\begin{itemize}
  \item Diversity from the perspective of both features and data is considered. In other words, a regularization term is leveraged to select low-redundant features. At the same time, self-paced learning is intended for the exclusion of outliers.
  \item Global reconstruction information of data is preserved by subspace learning. In the meantime, local manifold structure with regard to data is retained, which is of great significance in the feature selection process.
  \item ${l_{2,{1 \mathord{\left/
 {\vphantom {1 2}} \right.
 \kern-\nulldelimiterspace} 2}}}$-norm is imposed to constrain the projection matrix, which has been confirmed to achieve sparser result. In addition, to reach the goal of minimizing the corresponding objective function, an iterative algorithm is raised. And convergence is analyzed. Experimental results verify the effectiveness of SPLR.
\end{itemize}

The rest of the paper is organized as follows. Section~\ref{sec:RelatedWork} illustrates some research relevant to the proposed algorithm. Section~\ref{sec:Method} introduces the proposed algorithm in detail. Optimization algorithm and convergence analysis are included as well. Section~\ref{sec:Experiments} demonstrates the experimental results of the proposed algorithm compared with other state-of-the-art algorithms. Finally, conclusions are drawn in Section~\ref{sec:Conclusions}.

\section{Related works}
\label{sec:RelatedWork}  

In this section, we first introduce the notations used in this paper. Then we give a brief introduction of the framework of subspace learning and self-paced learning, which is closely correlated with the proposed method.

\subsection{Notations}
In this paper, matrices and vectors are denoted as bold uppercase letters and bold lowercase letters, respectively. For an arbitrary matrix $A \in {\mathbb{R}^{m \times n}}$, ${a_i}$ represents the $i$th row of the matrix, and ${a^j}$ represents the $j$th column of the matrix. ${A_{ij}}$ denotes the element located in the $i$th row and $j$th column of the matrix $A$. ${A^{\rm T}}$ means the transpose of the matrix $A$. ${\rm{tr}}\left( A \right)$ and ${A^{ - 1}}$ refer to the trace and inverse of $A$ under the condition that $A$ is square. To avoid ambiguity, more details are shown in Table \ref{tab::table1}.

\newcommand{\tabincell}[2]{\begin{tabular}{@{}#1@{}}#2\end{tabular}}
\begin{table}[htbp]
\begin{center}
\caption{Notations used in the paper}\label{tab::table1}
\begin{tabular}[c]{ll}
\toprule
Notation& Description\\
\midrule
$X \in {\mathbb{R}^{n \times d}}$ & Original data matrix\\
$W \in {\mathbb{R}^{d \times K}}$ & Projection matrix\\
$H \in {\mathbb{R}^{K \times d}}$ & Reconstruction matrix\\
$S \in {\mathbb{R}^{n \times n}}$ & Similarity matrix of feature space\\
$Z \in {\mathbb{R}^{d \times d}}$ & Similarity matrix of data space\\
$L \in {\mathbb{R}^{n \times n}}$ & Laplacian matrix of data space\\
${x_i} \in {\mathbb{R}^{1 \times d}}$ & The $i$th row of $X$\\
${x^j} \in {\mathbb{R}^{n \times 1}}$ & The $j$th column of $X$\\
$v \in {\mathbb{R}^{n \times 1}}$ & The weight vector\\
${\rm{tr}}\left( X \right)$ & The trace of $X$\\
${X^{\rm T}}$ & The transpose of $X$\\
${X^{ - 1}}$ & The inverse of square matrix $X$\\
${\left\| X \right\|_F}$ & \tabincell{l}{The Frobenius norm of $X$, $i.e.$, \\${\left\| X \right\|_F} = \sqrt {\sum\limits_{i = 1}^n {\sum\limits_{j = 1}^d {x_{ij}^2} } } $}\\
${\left\| X \right\|_{p,q}}$ & \tabincell{l}{The ${l_{p,q}}$-norm of $X$, $i.e.$, \\${\left\| X \right\|_{p,q}} = {\left( {\sum\limits_{i = 1}^n {{{\left( {\sum\limits_{j = 1}^d {{{\left| {{x_{ij}}} \right|}^p}} } \right)}^{\frac{q}{p}}}} } \right)^{\frac{1}{q}}}$}\\
\bottomrule
\end{tabular}
\end{center}
\end{table}

\subsection{The framework of subspace learning}
Subspace learning is to learn a subspace from the original space with minimal loss. Zhang et al. proposed algorithm MFFS to accomplish this goal \cite{art18}. Precisely speaking, by means of matrix factorization, subspace learning can be achieved by optimizing the following problem.
\begin{equation}
\label{eqn::eq1}
\begin{array}{l}
\arg \mathop {\min }\limits_{W,H} \frac{1}{2}\left\| {X - XWH} \right\|_F^2\\
s.t.\;\;\;\;\;\;\;W \ge 0,\;{W^{\rm T}}W = {E_K}
\end{array}
\end{equation}

where $X = {\left( {x_1^{\rm T},x_2^{\rm T}, \ldots ,x_n^{\rm T}} \right)^T} \in {\mathbb{R}^{n \times d}}$ denotes the original data matrix, in which ${x_i}$ denotes the $i$th data sample. $W \in {\mathbb{R}^{d \times K}}$ is a projection matrix, aiming to select the most discriminative features. $H \in {\mathbb{R}^{K \times d}}$ represents the reconstruction matrix, whose purpose is to reconstruct the high-dimensional space from the low-dimensional one. It should be emphasized that through the non-negative constraint and orthogonal constraint, $W$ is a binary matrix, in which each row and column of it have one non-zero element at most.

It is widely accepted that real-world data is usually nonnegative. For this reason, a constraint is added supposing that the positive linear combination of the selected features has the ability to reconstruct all features. Accordingly, the objective function can be rewritten as
\begin{equation}
\label{eqn::eq2}
\begin{array}{l}
\arg \mathop {\min }\limits_{W,H} \frac{1}{2}\left\| {X - XWH} \right\|_F^2\\
s.t.\;\;\;\;\;\;\;W \ge 0,\;H \ge 0,\;{W^{\rm T}}W = {E_K}
\end{array}
\end{equation}

As can be seen from Eq. \eqref{eqn::eq2}, samples are embedded to a $K$-dimensional subspace through $W$ in the first place. Then, by means of $H$, the low-dimensional samples are reconstructed to the original high-dimensional space. In such a manner, noisy features can be eliminated to some extent. It is worth noting that the dimension of the subspace $K$ is not necessarily identical with the number of selected features $N$ \cite{art38}. In general, we set $K \ge N$ for impressive performance.

\subsection{The framework of self-paced learning}
In 2009, Bengio et al. proposed curriculum learning \cite{curriculumLearning}. And based on this, M. Kumar et al. put forward self-paced learning (SPL) \cite{art25}.

Inspired by the human cognitive mechanism, data is gradually added to the training model from easy to hard in curriculum learning. And during the process, the entropy of the training set is also increasing, which means that more information is included. The essential problem of curriculum learning is the choice of the ranking function, from which each sample is assigned to a learning priority, and samples with higher priority stand a good chance of being selected earlier.

In most cases, the ranking function of a specific problem is determined by the prior knowledge intuitively, which is not elegant. Information that can be obtained during the training process is not taken full advantage of. Self-paced learning includes curriculum learning in a more uniform form. For the sake of considering the training order of samples and eliminating the influence of noises as much as possible, the target of SPL is to minimize the following objective function.

\begin{equation}
\label{eqn::eq3}
\mathop {\min }\limits_{w,v} E\left( {w,v;\lambda } \right) = \sum\limits_{i = 1}^n {\left( {{v_i}L\left( {{y_i},g\left( {{x_i},w} \right)} \right)} \right) + f\left( {{v_i},\lambda } \right)}
\end{equation}

where ${v_i}$ denotes the weight of the $i$th sample ${x_i}$. $f\left( {v,\lambda } \right)$ is a regularization term. $L\left( {{y_i},g\left( {{x_i},w} \right)} \right)$ is the loss function, which characterizes the residual error between the true label ${y_i}$ and the predicted one $g\left( {{x_i},w} \right)$. The smaller the loss is, the larger the value of ${v_i}$ is. That is to say, if the predicted label is similar to the ground truth label, ${v_i}$ will approach 1. And whether to single ${x_i}$ out for model training depends on the value of ${v_i}$. The closer ${v_i}$ is to 1, the more likely ${x_i}$ is to be selected. And the closer ${v_i}$ is to 0, the less likely ${x_i}$ is to be selected. In each iteration, $v$ is fixed when updating $w$ and the learned $w$ is fixed when updating $v$. It stops when the model is “mature”. Therefore, the most representative and informative samples are exploited.

As is illustrated above, a crucial task is to determine the self-paced regularizer. It can be defined as long as the following three conditions are satisfied.
\begin{enumerate}[i.]
\setlength{\itemindent}{1em}
\item $f\left( {v,\lambda } \right)$ is convex with respect to $v \in \left[ {0,1} \right]$;

\item ${v^ * }\left( {\lambda ;l} \right)$ is monotonically decreasing with respect to $l$, and it holds that $\mathop {\lim }\limits_{l \to 0} {v^ * }\left( {\lambda ;l} \right) = 1,\;\mathop {\lim }\limits_{l \to \infty } {v^ * }\left( {\lambda ;l} \right) = 0$;

\item ${v^ * }\left( {\lambda ;l} \right)$ is monotonically increasing with respect to $\lambda $, and it holds that $\mathop {\lim }\limits_{\lambda  \to \infty } {v^ * }\left( {\lambda ;l} \right) \le 1,\;\mathop {\lim }\limits_{\lambda  \to 0} {v^ * }\left( {\lambda ;l} \right) = 0$;
\end{enumerate}

where ${v^ * }\left( {\lambda ;l} \right) = \arg \mathop {\min }\limits_{v \in \left[ {0,1} \right]} vl + f\left( {v,\lambda } \right)$.

Listed below are three commonly used regularizers which are called the hard (${f^H}\left( {v;\lambda } \right)$), linear (${f^L}\left( {v;\lambda } \right)$) and mixture regularizer (${f^M}\left( {v;\lambda ,\gamma } \right)$) respectively and the corresponding solutions.
\begin{enumerate}[i.]
\setlength{\itemindent}{1em}
\item The hard regularizer
\begin{equation}
\label{eqn::eq4}
{f^H}\left( {v;\lambda } \right) =  - \lambda v,
\end{equation}
\begin{equation}
\label{eqn::eq136}
{v^*}\left( {\lambda ;l} \right) = \left\{ \begin{array}{l}
1,\;if\;l < \lambda ;\\
0,\;if\;l \ge \lambda.
\end{array} \right.
\end{equation}
\item The linear regularizer
\begin{equation}
{f^L}\left( {v;\lambda } \right) = \lambda \left( {\frac{1}{2}{v^2} - v} \right),
\end{equation}
\begin{equation}
{v^*}\left( {\lambda ;l} \right) = \left\{ \begin{array}{l}
 - {l \mathord{\left/
 {\vphantom {l \lambda }} \right.
 \kern-\nulldelimiterspace} \lambda } + 1,\;if\;l < \lambda ;\\
0,\;\;\;\;\;\;\;\;\;\;if\;l \ge \lambda.
\end{array}\right.
\end{equation}
\item The mixture regularizer
\begin{equation}
{f^M}\left( {v;\lambda ,\gamma } \right) = \frac{{{\gamma ^2}}}{{v + {\gamma  \mathord{\left/
 {\vphantom {\gamma  \lambda }} \right.
 \kern-\nulldelimiterspace} \lambda }}},\;
 \end{equation}
 \begin{equation}
 {v^*}\left( {\lambda ,\gamma ;l} \right) = \left\{ \begin{array}{l}
1,\;\;\;\;\;\;\;\;\;\;\;\;\;\;\;if\;l \le {\left( {\frac{{\lambda \gamma }}{{\lambda  + \gamma }}} \right)^2};\\
0,\;\;\;\;\;\;\;\;\;\;\;\;\;\;\;if\;l \ge {\lambda ^2};\\
\gamma \left( {\frac{1}{{\sqrt l }} - \frac{1}{\lambda }} \right),\;otherwise.
\end{array} \right.
\end{equation}
\end{enumerate}

\section{The proposed method}
\label{sec:Method}

In this section, a feataure selection method via self-paced learning and low-redundant regularization (SPLR) is developed. Owing to the non-convexity of the objective function when optimizing variables simultaneously, an iterative updating algorithm is exploited to solve the minimization problem. Eventually, we analyze the convergence of SPLR theoretically.

\subsection{Subspace learning with low redundancy}
Given the fact that redundancy between features plays an important role in feature selection, it is necessary to introduce a regularization term to eliminate its negative impact. For this reason, Liu et al. proposed a novel term taking the pairwise similarity of features into consideration \cite{art20}. In order to achieve low redundancy between features, we add the pairwise similarity as regularizer to the framework of subspace learning.

Dot product ${s_{ij}} = \frac{{{{\left( {{f^i}} \right)}^{\rm T}} \cdot {f^j}}}{{\left\| {{f^i}} \right\| \cdot \left\| {{f^j}} \right\|}},\;i,j = 1,2, \ldots ,d$ is applied to calculate the similarity between the $i$th feature and $j$th feature. Apparently, the larger the value is, the less diverse the two features are. Hence, the similarity matrix can be defined as
\begin{equation}
\label{eqn::eq5}
S = {X^{\rm T}}X
\end{equation}

Additionally, the row-sum of the projection matrix is utilized to measure the significance of each feature. As a result, the regularization term is denoted as
\begin{equation}
\label{eqn::eq6}
{\rm{tr}}\left( {{S^{\rm T}}W{\bf{1}}{W^{\rm T}}} \right) = \sum\limits_{i = 1}^d {\sum\limits_{j = 1}^d {\left( {{{\tilde w}_i}{{\tilde w}_j}} \right){s_{ij}}} }.
\end{equation}

where ${\tilde w_i} = \sum\nolimits_{j = 1}^d {{w_{ij}}} $, and ${\bf{1}} \in {\mathbb{R}^{K \times K}}$ is a matrix with all the elements equal to 1.

As is shown in Eq. \eqref{eqn::eq6}, if the $i$th feature and $j$th feature are very similar, then ${s_{ij}}$ is close to 1, so ${\tilde w_i}{\tilde w_j}$ ought to be small. Thus, if ${\tilde w_i}$ is large, ${\tilde w_j}$ should be small. And if ${\tilde w_j}$ is large, ${\tilde w_i}$ should be small. In other words, it is impossible for both ${\tilde w_i}$ and ${\tilde w_j}$ to be large. In this way, redundant features are less likely to be selected simultaneously.

Incorporating Eq. \eqref{eqn::eq6} into Eq. \eqref{eqn::eq2}, the framework of subspace learning with low redundancy is obtained as follows.
\begin{equation}
\label{eqn::eq7}
\begin{array}{l}
\arg \mathop {\min }\limits_{W,H} \frac{1}{2}\left\| {X - XWH} \right\|_F^2 + {\lambda _1}{\rm{tr}}\left( {{S^{\rm T}}W{\bf{1}}{W^{\rm T}}} \right)\\
s.t.\;\;\;\;\;\;\;W \ge 0,\;H \ge 0,\;{W^{\rm T}}W = {E_k}
\end{array}
\end{equation}

By optimizing the above problem, the projection matrix can be learned, and the learned $W$ in turn contributes to the learning of $H$. They constrain each other. And the adverse effect of both noisy features and redundant features is reduced.

\subsection{Local manifold structure preservation}
It is universally acknowledged that the local manifold structure is important for retaining the topological structure of data in feature selection. Based on the fact that similar samples in the original space are supposed to be similar when embedded into the subspace, the aforementioned target can be met by minimizing the following problem.
\begin{equation}
\label{eqn::eq8}
\arg \mathop {\min }\limits_W \frac{1}{2}\sum\limits_{i,j} {\left\| {{x_i}W - {x_j}W} \right\|_2^2{z_{ij}}}  = {\rm{tr}}\left( {{W^{\rm T}}{X^{\rm T}}LXW} \right)
\end{equation}

where ${z_{ij}} = \frac{{{x_i} \cdot x_j^{\rm T}}}{{\left\| {{x_i}} \right\| \cdot \left\| {{x_j}} \right\|}},\;i,j = 1,2, \ldots ,n$ evaluates the similarity between samples. $L = D - Z$ is the Laplacian matrix. And $D$ is a diagonal matrix with ${D_{ii}} = \sum\limits_{j = 1}^n {{z_{ij}}} $. With a view to minimizing the objective function, the distance between embedded samples ${x_i}W$ and ${x_j}W$, $i.e.$ $\left\| {{x_i}W - {x_j}W} \right\|_2^2$, should be small if samples in the high-dimensional space ${x_i}$ and ${x_j}$ are close, which represents a large value of similarity ${z_{ij}}$. And it is consistent with the local manifold structure preservation strategy.

\subsection{The framework of SPLR}
Combining Eq. \eqref{eqn::eq7} and Eq. \eqref{eqn::eq8} together, the expression is transformed as follows.
\begin{equation}
\label{eqn::eq9}
\begin{array}{l}
\arg \mathop {\min }\limits_{W,H} \frac{1}{2}\left\| {X - XWH} \right\|_F^2 + {\lambda _1}{\rm{tr}}\left( {{S^{\rm T}}W{\bf{1}}{W^{\rm T}}} \right)+ {\lambda _2}{\rm{tr}}\left( {{W^{\rm T}}{X^{\rm T}}LXW} \right)\\
s.t.\;\;\;\;\;\;\;W \ge 0,\;H \ge 0,\;{W^{\rm T}}W = {E_k}
\end{array}
\end{equation}

It can be learned from Eq. \eqref{eqn::eq4} and Eq. \eqref{eqn::eq136} that the hard regularizer is unable to distinguish between two samples with different importance for the reason that the weight is set to 1 as long as the loss is less than a given value $\lambda $. If the loss of sample ${x_i}$ is close to 0 and the loss of sample ${x_j}$ approaches $\lambda $, it is apparent that ${x_i}$ plays a much more crucial role than ${x_j}$ and ${x_i}$ should be regarded as an easier sample. But with the hard regularizer, they are given the same weight, resulting in performance deterioration. On the contrary, if the loss of sample ${x_p}$ is a little bit smaller than $\lambda $ and the loss of sample ${x_q}$ is a little bit larger than $\lambda $, they will have totally different weights, which produces less reasonable solution. As for the linear regularizer, the significance of samples can be discriminated but small errors can’t be tolerated. Samples with different loss values are assigned to different weights. However, if both the loss of sample ${x_i}$ and that of sample ${x_j}$ are small enough, it is sensible to set their weights to 1. Thus, the mixture regularizer is employed which can not only enjoy the advantages of the aforementioned regularizers, but also tolerate small errors up to a certain point.

A sparsity regularization term is also added so as to select representative and robust features. One commonly used term is ${l_{2,1}}$-norm. However, Wang et al. have certified the superiority of ${l_{2,{1 \mathord{\left/
 {\vphantom {1 2}} \right.
 \kern-\nulldelimiterspace} 2}}}$-norm \cite{art27}. Several experiments are conducted and the results demonstrate that the regularization term with ${l_{2,{1 \mathord{\left/
 {\vphantom {1 2}} \right.
 \kern-\nulldelimiterspace} 2}}}$-norm outperforms others in terms of classification error.

The contour maps of ${l_{2,{1 \mathord{\left/
 {\vphantom {1 2}} \right.
 \kern-\nulldelimiterspace} 2}}}$-norm, ${l_{2,1}}$-norm, and ${l_{2,2}}$-norm are presented in Fig. \ref{fig::picture8}. It serves to show that ${l_{2,{1 \mathord{\left/
 {\vphantom {1 2}} \right.
 \kern-\nulldelimiterspace} 2}}}$-norm can obtain sparser results than the other two norms during the minimizing process.

\begin{figure*}[htbp]  
\centering
\subfigure[${l_{2,{1 \mathord{\left/
 {\vphantom {1 2}} \right.
 \kern-\nulldelimiterspace} 2}}}$-norm]{
\begin{minipage}{0.3\linewidth}
\centering
  \includegraphics[width=\textwidth]{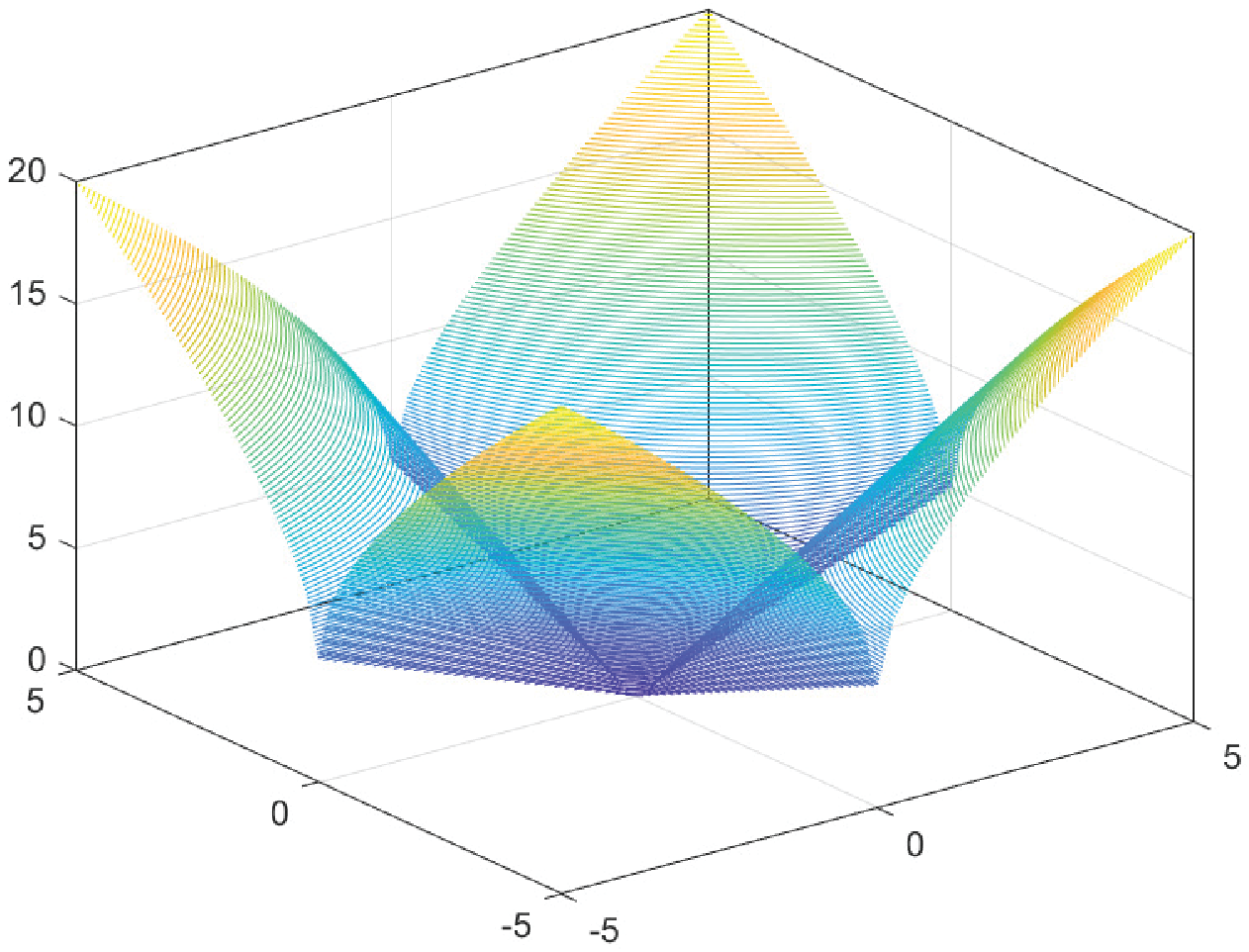}
\end{minipage}
}
\subfigure[${l_{2,1}}$-norm]{
\begin{minipage}{0.3\linewidth}
\centering
  \includegraphics[width=\textwidth]{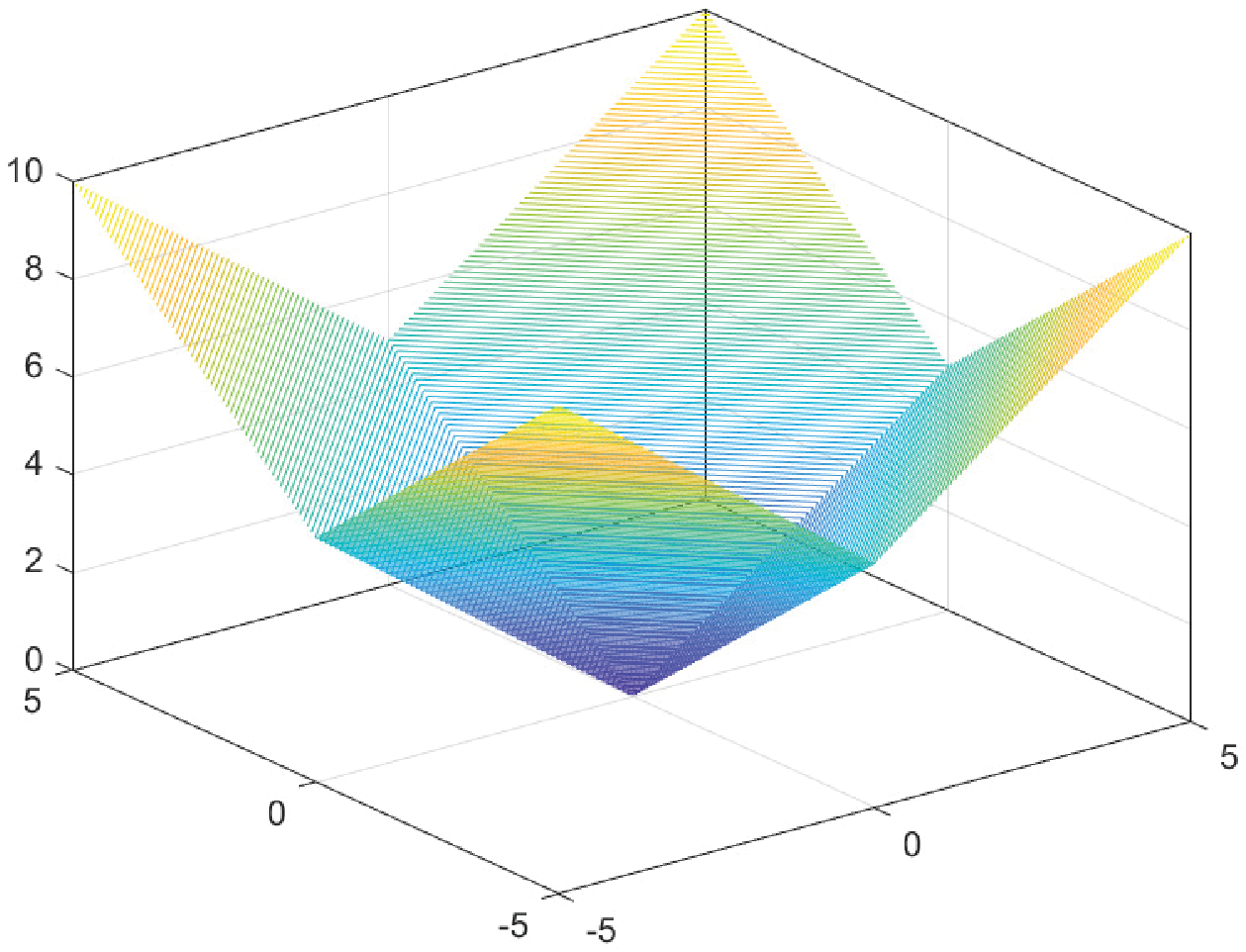}
\end{minipage}
}
\subfigure[${l_{2,2}}$-norm]{
\begin{minipage}{0.3\linewidth}
\centering
  \includegraphics[width=\textwidth]{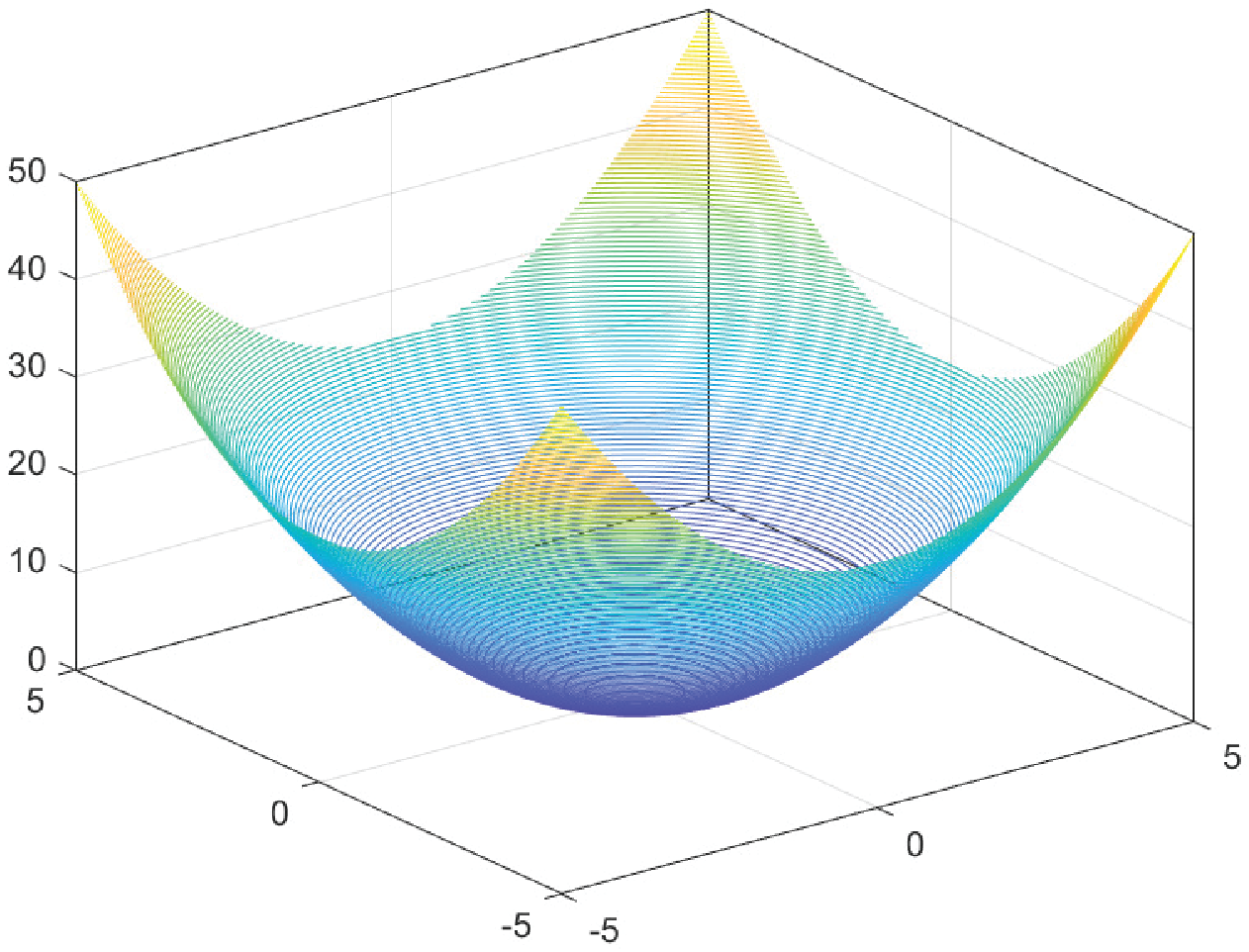}
\end{minipage}
}

\centering
\caption{The contour maps of ${l_{2,{1 \mathord{\left/
 {\vphantom {1 2}} \right.
 \kern-\nulldelimiterspace} 2}}}$-norm, ${l_{2,1}}$-norm, and ${l_{2,2}}$-norm}\label{fig::picture8}
\end{figure*}

Benefiting from the merits of ${l_{2,{1 \mathord{\left/
 {\vphantom {1 2}} \right.
 \kern-\nulldelimiterspace} 2}}}$-norm, the final objective function of the proposed algorithm is formulated as follows.
\begin{equation}
\label{eqn::eq10}
\begin{array}{l}
\mathop {\min }\limits_{W,H,v} \sum\limits_{i = 1}^n {{v_i}\left\| {{x_i} - {x_i}WH} \right\|_2^2}  + \sum\limits_{i = 1}^n {\frac{{{\gamma ^2}}}{{{v_i} + {\gamma  \mathord{\left/
 {\vphantom {\gamma  \eta }} \right.
 \kern-\nulldelimiterspace} \eta }}}}  + {\lambda _1}{\rm{tr}}\left( {{S^{\rm T}}W{\bf{1}}{W^{\rm T}}} \right) + {\lambda _2}{\rm{tr}}\left( {{W^{\rm T}}{X^{\rm T}}LXW} \right) + \alpha \left\| W \right\|_{2,{1 \mathord{\left/
 {\vphantom {1 2}} \right.
 \kern-\nulldelimiterspace} 2}}^{{1 \mathord{\left/
 {\vphantom {1 2}} \right.
 \kern-\nulldelimiterspace} 2}}\\
s.t.\;\;0 \le {v_i} \le 1,i = 1,2, \ldots ,n,W \ge 0,H \ge 0,\;{W^{\rm T}}W = {I_K}
\end{array}
\end{equation}

where $\alpha  > 0$, ${\lambda _1} > 0$ and ${\lambda _2} > 0$ are the trade-off parameters. $\gamma  > 0$ is an interval control parameter, which controls the ``fuzzy interval'' between 0 and 1. The first and second terms in Eq. \eqref{eqn::eq10} aim to maintain the global reconstruction information and relieve the effect of noises to a certain degree. The third term is introduced to reduce the redundancy between features. The fourth term stands for local geometric structure preservation. And the fifth term is designed to promote row-sparsity of the projection matrix.

It is noteworthy that through the projection matrix $W$, important features are selected and uninformative features are excluded. And the $i$th row of $W$ represents the significance of the $i$th feature ${f^i}$. So, after gaining the optimal solution, we sort ${\left\| {{w_i}} \right\|_2}$ in descending order and select the top-ranked $N$ features.

\subsection{Optimization}
Although the objective function is non-convex when optimizing $v$, $W$ and $H$ simultaneously, it is convex when fixing $W$ and $H$ to optimize $v$ and fixing $v$ and $W$ to optimize $H$ \cite{2011Convex}. For variable $W$, there is an efficient algorithm to solve the non-convexity problem of the ${l_{2,{1 \mathord{\left/
 {\vphantom {1 2}} \right.
 \kern-\nulldelimiterspace} 2}}}$-norm \cite{SFSRM}. As a result, the whole optimization problem with three variables can be transformed into three sub-problems with the other two variables fixed, which are easier to tackle. And an iterative updating algorithm is deployed. To be more precise, the corresponding Lagrange function is written as follows.
\begin{equation}
\begin{split}
\label{eqn::eq11}
L\left( {v,\;W,\;H} \right) &= \sum\limits_{i = 1}^n {{v_i}\left\| {{x_i} - {x_i}WH} \right\|_2^2}  + \sum\limits_{i = 1}^n {\frac{{{\gamma ^2}}}{{{v_i} + {\gamma  \mathord{\left/
 {\vphantom {\gamma  \eta }} \right.
 \kern-\nulldelimiterspace} \eta }}}}  + {\lambda _1}{\rm{tr}}\left( {{S^{\rm T}}W{\bf{1}}{W^{\rm T}}} \right) + {\lambda _2}{\rm{tr}}\left( {{W^{\rm T}}{X^{\rm T}}LXW} \right) \\
&+ \alpha \left\| W \right\|_{2,{1 \mathord{\left/
 {\vphantom {1 2}} \right.
 \kern-\nulldelimiterspace} 2}}^{{1 \mathord{\left/
 {\vphantom {1 2}} \right.
 \kern-\nulldelimiterspace} 2}} + \frac{{{\lambda _3}}}{2}\left\| {{W^{\rm T}}W - {I_K}} \right\|_F^2 + {\rm{tr}}\left( {\Phi {H^{\rm T}}} \right) + {\rm{tr}}\left( {\psi {W^{\rm T}}} \right)
\end{split}
\end{equation}

where ${\lambda _3} > 0$ is the balance parameter. $\Phi $ and $\psi $ are the Lagrange multipliers to guarantee the nonnegative constraints. Then, three sub-problems need to be minimized iteratively as follows.
\begin{equation}
\label{eqn::eq12}
{v_{t + 1}} = \arg \mathop {\min }\limits_v {L_v}\left( {v,\;{W_t},\;{H_t}} \right)
\end{equation}
\begin{equation}
{W_{t + 1}} = \arg \mathop {\min }\limits_W {L_W}\left( {{v_{t + 1}},\;W,\;{H_t}} \right)
\end{equation}
\begin{equation}
{H_{t + 1}} = \arg \mathop {\min }\limits_H {L_H}\left( {{v_{t + 1}},\;{W_{t + 1}},\;H} \right)
\end{equation}

\subsubsection{Updating $v$ with fixed $W$ and $H$}
When $W$ and $H$ are fixed, the Lagrange function can be easily converted to Eq. \eqref{eqn::eq133}.
\begin{equation}
\begin{split}
\label{eqn::eq133}
&L\left( v \right) = \sum\limits_{i = 1}^n {{v_i}\left\| {{x_i} - {x_i}WH} \right\|_2^2}  + \sum\limits_{i = 1}^n {\frac{{{\gamma ^2}}}{{{v_i} + {\gamma  \mathord{\left/
 {\vphantom {\gamma  \eta }} \right.
 \kern-\nulldelimiterspace} \eta }}}} ,\;\\
&s.t.\;\;0 \le {v_i} \le 1,\;i = 1,2, \ldots ,n
\end{split}
\end{equation}

Since $\left\| {{x_i} - {x_i}WH} \right\|_2^2$ is irrelevant to variable $v$, we define ${L_i} = \left\| {{x_i} - {x_i}WH} \right\|_2^2$ for simplicity. And Eq. \eqref{eqn::eq133} can be rewritten as
\begin{equation}
\begin{split}
\label{eqn::eq14}
&L\left( v \right) = \sum\limits_{i = 1}^n {{v_i}{L_i}}  + \sum\limits_{i = 1}^n {\frac{{{\gamma ^2}}}{{{v_i} + {\gamma  \mathord{\left/
 {\vphantom {\gamma  \eta }} \right.
 \kern-\nulldelimiterspace} \eta }}}} ,\;\\
&s.t.\;\;0 \le {v_i} \le 1,\;i = 1,2, \ldots ,n.
\end{split}
\end{equation}

Note that Eq. \eqref{eqn::eq14} can be decomposed into $n$ independent sub-problems as follows.
\begin{equation}
\label{eqn::eq15}
L\left( {{v_i}} \right) = {v_i}{L_i} + \frac{{{\gamma ^2}}}{{{v_i} + {\gamma  \mathord{\left/
 {\vphantom {\gamma  \eta }} \right.
 \kern-\nulldelimiterspace} \eta }}},\;s.t.\;\;0 \le {v_i} \le 1
\end{equation}

The closed form solution of ${v_i}$ is
\begin{equation}
\label{eqn::eq16}
{v_i} = \left\{ \begin{array}{l}
1,\;\;\;\;\;\;\;\;\;\;\;\;\;\;\;if\;{L_i} \le {\left( {\frac{{\lambda \gamma }}{{\lambda  + \gamma }}} \right)^2};\\
0,\;\;\;\;\;\;\;\;\;\;\;\;\;\;\;if\;{L_i} \ge {\lambda ^2};\\
\gamma \left( {\frac{1}{{\sqrt {{L_i}} }} - \frac{1}{\lambda }} \right),\;otherwise.
\end{array} \right.
\end{equation}

\subsubsection{Updating $H$ with fixed $v$ and $W$}
When $v$ and $W$ are fixed, the Lagrange function can be easily converted to Eq. \eqref{eqn::eq17}
\begin{equation}
\label{eqn::eq17}
L\left( H \right) = \sum\limits_{i = 1}^n {{v_i}\left\| {{x_i} - {x_i}WH} \right\|_2^2}  + {\rm{tr}}\left( {\Phi {H^{\rm T}}} \right) = \left\| {G - GWH} \right\|_F^2 + {\rm{tr}}\left( {\Phi {H^{\rm T}}} \right).
\end{equation}

where $G = UX$ and $U = diag\left( {\sqrt v } \right)$.

Taking the derivative of Eq. \eqref{eqn::eq17} with respect to $H$, we have
\begin{equation}
\label{eqn::eq18}
\frac{{\partial L\left( H \right)}}{{\partial H}} =  - 2{W^{\rm T}}{G^{\rm T}}G + 2{W^{\rm T}}{G^{\rm T}}GWH + \Phi.
\end{equation}

According to the Karush–Kuhn–Tucker (KKT) condition, namely ${\Phi _{ij}}{H_{ij}} = 0$, the updating rule for $H$ is obtained as follows \cite{art28}.
\begin{equation}
\label{eqn::eq19}
{H_{ij}} = {H_{ij}}\frac{{{{\left( {{W^{\rm T}}{G^{\rm T}}G} \right)}_{ij}}}}{{{{\left( {{W^{\rm T}}{G^{\rm T}}GWH} \right)}_{ij}}}}
\end{equation}

\subsubsection{Updating $W$ with fixed $v$ and $H$}
When $v$ and $H$ are fixed, the Lagrange function can be easily converted to Eq. \eqref{eqn::eq20}.
\begin{equation}
\begin{split}
\label{eqn::eq20}
L\left( W \right) &= \sum\limits_{i = 1}^n {{v_i}\left\| {{x_i} - {x_i}WH} \right\|_2^2}  + {\lambda _1}{\rm{tr}}\left( {{S^{\rm T}}W{\bf{1}}{W^{\rm T}}} \right) + {\lambda _2}{\rm{tr}}\left( {{W^{\rm T}}{X^{\rm T}}LXW} \right) \\
&+ \alpha \left\| W \right\|_{2,{1 \mathord{\left/
 {\vphantom {1 2}} \right.
 \kern-\nulldelimiterspace} 2}}^{{1 \mathord{\left/
 {\vphantom {1 2}} \right.
 \kern-\nulldelimiterspace} 2}} + \frac{{{\lambda _3}}}{2}\left\| {{W^{\rm T}}W - {I_K}} \right\|_F^2 + {\rm{tr}}\left( {\psi {W^{\rm T}}} \right)
\end{split}
\end{equation}

Taking the derivative of Eq. \eqref{eqn::eq20} with respect to $W$, we have
\begin{equation}
\label{eqn::eq21}
\frac{{\partial L\left( W \right)}}{{\partial W}} =  - 2{G^{\rm T}}G{H^{\rm T}} + 2{G^{\rm T}}GWH{H^{\rm T}} + 2\alpha MW + 2{\lambda _1}SW{\bf{1}} + 2{\lambda _2}{X^{\rm T}}LXW + 2{\lambda _3}W{W^{\rm T}}W - 2{\lambda _3}W + \Psi .
\end{equation}

where $M$ is a diagonal matrix with ${M_{ii}} = \frac{1}{{\max \left( {\left\| {{w_i}} \right\|_2^{{3 \mathord{\left/
 {\vphantom {3 2}} \right.
 \kern-\nulldelimiterspace} 2}},\varepsilon } \right)}}$. $\varepsilon $ is a small constant preventing the denominator from being zero.

According to the Karush–Kuhn–Tucker (KKT) condition, namely ${\Psi _{ij}}{W_{ij}} = 0$, the updating rule for $W$ is obtained as follows \cite{art28}.
\begin{equation}
\label{eqn::eq22}
{W_{ij}} = {W_{ij}}\frac{{{{\left( {{G^{\rm T}}G{H^{\rm T}} + {\lambda _2}{X^{\rm T}}ZXW + {\lambda _3}W} \right)}_{ij}}}}{{{{\left( {{G^{\rm T}}GWH{H^{\rm T}} + \alpha MW + {\lambda _1}SW{\bf{1}} + {\lambda _2}{X^{\rm T}}DXW + {\lambda _3}W{W^{\rm T}}W} \right)}_{ij}}}}
\end{equation}

The procedure of SPLR is described by Algorithm ~\ref{alg::SPLR}.

\begin{algorithm}[htbp]
\caption{UFS via self-paced learning and low-redundant regularization}
\label{alg::SPLR}
\begin{algorithmic}[1]
\Require
Data matrix $X \in {\mathbb{R}^{n \times d}}$, Parameters $\alpha  > 0$, $\gamma  > 0$, ${\lambda _1} > 0$, ${\lambda _2} > 0$ and $\mu  > 1$, Maximum iteration number $NIter$, Dimension of the subspace $K$;
\Ensure
Index of selected features $index$, Projection matrix $W \in {\mathbb{R}^{d \times K}}$;
\State Initialize $W = ones\left( {d,K} \right)$, $H = rand\left( {K,d} \right)$, $\eta $;
\State Construct the affinity matrices $S \in {\mathbb{R}^{d \times d}}$, $Z \in {\mathbb{R}^{n \times n}}$;
\Repeat
\State Update $v$ via Eq. \eqref{eqn::eq16};
\State Update $H$ via Eq. \eqref{eqn::eq19};
\State Update $W$ vis Eq. \eqref{eqn::eq22};
\State Update $\eta $ via $\eta  = \mu \eta $;
\Until Convergence
\State Sort all features according to $\sum\limits_{j = 1}^d {w_{ij}^2} $ in descending order and select the top $N$ ranked features.
\end{algorithmic}
\end{algorithm}

\subsection{Convergence analysis}
The proof of the convergence of SPLR can be divided into three parts. Firstly, the value of Eq. \eqref{eqn::eq10} is non-increasing under the updating rule Eq. \eqref{eqn::eq16} with fixed $W$ and $H$. Secondly, the value of Eq. \eqref{eqn::eq10} is non-increasing under the updating rule Eq. \eqref{eqn::eq19} with fixed $v$ and $W$. Lastly, the value of Eq. \eqref{eqn::eq10} is non-increasing under the updating rule Eq. \eqref{eqn::eq22} with fixed $v$ and $H$.

Due to the closed form solution of ${v_i}$, there is no doubt that the value of Eq. \eqref{eqn::eq10} will monotonically decrease when optimizing ${v_i}$. Next, the convergence under the updating rule of the variable $H$ is to be testified.

\begin{definition}
$F\left( h \right)$ is non-increasing under the updating rule
\begin{equation}
\label{eqn::eq23}
{h^{t + 1}} = \arg \mathop {\min }\limits_h J\left( {h,{h^t}} \right),
\end{equation}

where $J\left( {h,h'} \right)$ is an auxiliary function of $F\left( h \right)$ subject to the following conditions.
\begin{equation}
\label{eqn::eq24}
J\left( {h,h'} \right) \ge F\left( h \right),\;J\left( {h,h} \right) = F\left( h \right)
\end{equation}
\end{definition}

\begin{proof}
$F\left( {{h^{t + 1}}} \right) \le J\left( {{h^{t + 1}},{h^t}} \right) \le J\left( {{h^t},{h^t}} \right) = F\left( {{h^t}} \right)$.
\end{proof}

Fixing $v$ and $W$, the objective function is switched as follows.
\begin{equation}
\label{eqn::eq25}
F\left( H \right) = \left\| {G - GWH} \right\|_F^2 = {\rm{tr}}\left( {\left( {G - GWH} \right){{\left( {G - GWH} \right)}^{\rm T}}} \right)
\end{equation}

Manifested below are the first-order and the second-order derivatives of Eq. \eqref{eqn::eq25} with respect to $H$, respectively.
\begin{align}
\label{eqn::eq26-7}
&{F^\prime_{ij}}  = {\left( {2{W^{\rm T}}{G^{\rm T}}GWH - 2{W^{\rm T}}{G^{\rm T}}G} \right)_{ij}}\\
&{F^{\prime \prime }_{ij}} = {\left( {2{W^{\rm T}}{G^{\rm T}}GW} \right)_{ii}}
\end{align}

\begin{lemma}
${J_{ij}}\left( {{H_{ij}},H_{ij}^{\left( t \right)}} \right)$ is an auxiliary function of ${F_{ij}}\left( {{H_{ij}}} \right)$ defined as
\begin{equation}
\label{eqn::eq28}
{J_{ij}}\left( {{H_{ij}},H_{ij}^{\left( t \right)}} \right) = {L_{ij}}\left( {H_{ij}^{\left( t \right)}} \right) + {L^\prime_{ij}} \left( {H_{ij}^{\left( t \right)}} \right)\left( {{H_{ij}} - H_{ij}^{\left( t \right)}} \right) + \frac{{{{\left( {{W^{\rm T}}{G^{\rm T}}GW{H^{\left( t \right)}}} \right)}_{ij}}}}{{H_{ij}^{\left( t \right)}}}{\left( {{H_{ij}} - H_{ij}^{\left( t \right)}} \right)^2}
\end{equation}
\end{lemma}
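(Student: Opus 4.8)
The plan is to verify directly the two defining properties of an auxiliary function listed in Eq.~\eqref{eqn::eq24}, namely that $J_{ij}(H_{ij}, H_{ij}) = F_{ij}(H_{ij})$ and that $J_{ij}(H_{ij}, H_{ij}^{(t)}) \ge F_{ij}(H_{ij})$ for every admissible $H_{ij}$. The equality at the diagonal is immediate: both the linear and the quadratic correction terms in Eq.~\eqref{eqn::eq28} carry the factor $(H_{ij} - H_{ij}^{(t)})$, which vanishes when $H_{ij} = H_{ij}^{(t)}$, leaving exactly $F_{ij}(H_{ij}^{(t)})$. So the entire weight of the argument falls on the inequality.

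For the inequality I would first exploit that $F(H) = \left\| G - GWH \right\|_F^2$ is quadratic in each entry $H_{ij}$, so the second-order Taylor expansion of $F_{ij}$ about $H_{ij}^{(t)}$ is \emph{exact}:
\[
F_{ij}(H_{ij}) = F_{ij}(H_{ij}^{(t)}) + F_{ij}'(H_{ij}^{(t)})\bigl(H_{ij} - H_{ij}^{(t)}\bigr) + \bigl(W^{\rm T} G^{\rm T} G W\bigr)_{ii}\bigl(H_{ij} - H_{ij}^{(t)}\bigr)^2,
\]
where the quadratic coefficient $\tfrac{1}{2}F_{ij}''$ is read off from Eq.~\eqref{eqn::eq26-7}. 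Since the constant and linear terms of $J_{ij}$ coincide with those of this expansion, subtracting yields
\[
J_{ij}(H_{ij}, H_{ij}^{(t)}) - F_{ij}(H_{ij}) = \left[\frac{\bigl(W^{\rm T} G^{\rm T} G W H^{(t)}\bigr)_{ij}}{H_{ij}^{(t)}} - \bigl(W^{\rm T} G^{\rm T} G W\bigr)_{ii}\right]\bigl(H_{ij} - H_{ij}^{(t)}\bigr)^2,
\]
so that the claim reduces to showing the bracketed factor is nonnegative.

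Writing $A = W^{\rm T} G^{\rm T} G W$, this is the inequality $(A H^{(t)})_{ij} \ge A_{ii} H_{ij}^{(t)}$. I would prove it by expanding $(A H^{(t)})_{ij} = \sum_k A_{ik} H_{kj}^{(t)}$, retaining only the $k=i$ summand, and discarding the rest. This is the Lee--Seung step and is the \textbf{main obstacle}, because discarding the remaining terms is legitimate only if each is nonnegative, which forces me to invoke the nonnegativity assumptions of the model: $H^{(t)} \ge 0$ holds by the constraint on $H$, while for $A_{ik} \ge 0$ I would use that the data is nonnegative ($X \ge 0$), so $G = UX$ with $U = \mathrm{diag}(\sqrt{v}) \ge 0$ is entrywise nonnegative, whence $G^{\rm T} G \ge 0$ entrywise, and since $W \ge 0$ the product $A = W^{\rm T}(G^{\rm T} G)W$ is entrywise nonnegative as well. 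Combining the three displays then gives $J_{ij} \ge F_{ij}$ and completes the verification. As a consistency check I would finally set $\partial J_{ij}/\partial H_{ij} = 0$, solve for $H_{ij}$, and confirm that the minimizer of the auxiliary function is precisely the multiplicative update of Eq.~\eqref{eqn::eq19}, which ties the lemma back to the iteration whose monotonicity it is meant to certify.
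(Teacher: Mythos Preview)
Your proposal is correct and follows essentially the same route as the paper: both verify the diagonal equality trivially, write out the (exact) second-order Taylor expansion of $F_{ij}$, reduce $J_{ij}\ge F_{ij}$ to the inequality $(W^{\rm T}G^{\rm T}GWH^{(t)})_{ij}\ge (W^{\rm T}G^{\rm T}GW)_{ii}H_{ij}^{(t)}$, and then establish the latter by expanding the sum and invoking the entrywise nonnegativity of $W$, $G$, and $H^{(t)}$. Your write-up is in fact a bit more careful than the paper's in spelling out why $G\ge 0$ (via $X\ge 0$ and $U=\mathrm{diag}(\sqrt{v})\ge 0$) and in noting that the Taylor expansion is exact; the closing consistency check recovering Eq.~\eqref{eqn::eq19} is also what the paper does immediately after the lemma.
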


\begin{proof}
The second-order Taylor expansion of ${F_{ij}}\left( {{H_{ij}}} \right)$ can be calculated  as follows.
\begin{equation}
\label{eqn::eq29}
{F_{ij}}\left( {{H_{ij}}} \right) = {F_{ij}}\left( {H_{ij}^{\left( t \right)}} \right) + {F^\prime_{ij}} \left( {H_{ij}^{\left( t \right)}} \right)\left( {{H_{ij}} - H_{ij}^{\left( t \right)}} \right) + \frac{1}{2}{F^{\prime \prime }_{ij}}\left( {H_{ij}^{\left( t \right)}} \right){\left( {{H_{ij}} - H_{ij}^{\left( t \right)}} \right)^2}
\end{equation}

If ${F_{ij}}\left( {{H_{ij}}} \right)$ conforms to the inequality specified in Eq. \eqref{eqn::eq24}, Eq. \eqref{eqn::eq30} ought to be satisfied.
\begin{equation}
\label{eqn::eq30}
\frac{{{{\left( {{W^{\rm T}}{G^{\rm T}}GW{H^{\left( t \right)}}} \right)}_{ij}}}}{{H_{ij}^{\left( t \right)}}} \ge \frac{1}{2}{\left( {2{W^{\rm T}}{G^{\rm T}}GW} \right)_{ii}}
\end{equation}

It is because $H$, $W$, $G$ are all nonnegative that the following formula holds.
\begin{equation}
\label{eqn::eq31}
{\left( {{W^{\rm T}}{G^{\rm T}}GW{H^{\left( t \right)}}} \right)_{ij}} = \sum\limits_{k = 1}^K {{{\left( {{W^{\rm T}}{G^{\rm T}}GW} \right)}_{ik}}H_{kj}^{\left( t \right)}} \ge {\left( {{W^{\rm T}}{G^{\rm T}}GW} \right)_{ii}}H_{ij}^{\left( t \right)}
\end{equation}

In consideration of the special case where $H_{ij}^{\left( t \right)} = {H_{ij}}$, we have
\begin{equation}
\label{eqn::eq32}
J\left( {{H_{ij}},{H_{ij}}} \right) = {\rm{F}}\left( {{H_{ij}}} \right)
\end{equation}

Thereby, it is clear that ${J_{ij}}\left( {{H_{ij}},H_{ij}^{\left( t \right)}} \right)$ is effective to be regarded as an auxiliary function.
\end{proof}

Substituting Eq. \eqref{eqn::eq28} into Eq. \eqref{eqn::eq23} and setting the corresponding derivative with respect to ${H_{ij}}$ as 0, we have
\begin{equation}
\label{eqn::eq33}
H_{ij}^{\left( {t + 1} \right)} = H_{ij}^{\left( t \right)} - H_{ij}^{\left( t \right)}\frac{{{L^\prime_{ij}} \left( {H_{ij}^{\left( t \right)}} \right)}}{{2{{\left( {{W^{\rm T}}{G^{\rm T}}GW{H^{\left( t \right)}}} \right)}_{ij}}}} = H_{ij}^{\left( t \right)}\frac{{{{\left( {{W^{\rm T}}{G^{\rm T}}G} \right)}_{ij}}}}{{{{\left( {{W^{\rm T}}{G^{\rm T}}GW{H^{\left( t \right)}}} \right)}_{ij}}}}.
\end{equation}

Evidently, Eq. \eqref{eqn::eq33} is in accordance with the updating rule displayed in Eq. \eqref{eqn::eq19}, which completes the proof.

Similarly, the convergence under the updating rule of the variable $W$ is able to be proved. To sum up, the objective function of SPLR decreases monotonically in the process of optimization.

\section{Experiments and analysis}
\label{sec:Experiments}

In this section, the effectiveness of SPLR is compared with seven state-of-art algorithms on nine benchmark datasets. K-means and PAM are used for clustering on features ranking by SPLR and the experimental results are recorded. The convergence of SPLR is further verified empirically and the influence of different parameter settings on the performance of SPLR is explored.

\subsection{Datasets}
The experiments are carried out on nine real-world datasets consisting of one digit image dataset (USPS\footnote[1]{\href{https://jundongl.github.io/scikit-feature/datasets.html}{https://jundongl.github.io/scikit-feature/datasets.html\label{web1}}}), one artificial dataset (Madelon\textsuperscript{\ref {web1}}), one speech signal dataset (Isolet\textsuperscript{\ref {web1}}), three face image datasets (Umist\footnote[2]{\href{https://gitee.com/csliangdu/LGRUFS/tree/master/data}{https://gitee.com/csliangdu/LGRUFS/tree/master/data\label{web2}}}, ORL\textsuperscript{\ref {web1}} and warpPIE10P\textsuperscript{\ref {web1}}), one object image dataset (COIL20\textsuperscript{\ref {web1}}), and two biological microarray datasets (Colon\textsuperscript{\ref {web1}} and GLIOMA\textsuperscript{\ref {web1}}). Details are shown in Table \ref{tab::table2}.
\begin{table}[htbp]
\renewcommand\arraystretch{0.9}
\begin{center}
\caption{The summary of experimental datasets}\label{tab::table2}
\begin{tabular}[c]{lllll}
\toprule
Dataset& \#Instance& \#Feature& \#Class& Type  \\
\midrule
USPS& 9258& 256& 10& Digit images  \\
Madelon& 2600& 500& 2& Artificial  \\
Isolet& 1560& 617& 26& Speech Signal  \\
Umist& 575& 644& 20& Face images  \\
COIL20& 1440& 1024& 20& Object images  \\
ORL& 400& 1024& 40& Face images  \\
Colon& 62& 2000& 2& Biological microarray  \\
warpPIE10P& 210& 2420& 10& Face images  \\
GLIOMA& 50& 4434& 4& Biological microarray  \\
\bottomrule
\end{tabular}
\end{center}
\end{table}

\subsection{Comparison methods}
For the purpose of validating the effectiveness of SPLR\footnote[3]{\href{https://github.com/lllwy/SPLR}{https://github.com/lllwy/SPLR\label{web3}}}, seven state-of-art algorithms are applied in comparison with the proposed one. The brief introduction of each method is as follows.
\begin{itemize}
  \item Baseline: Baseline adopts original features without feature selection.
  \item LS \cite{art12}: Laplacian Score is inclined to select features with larger variance and less fluctuation within class.
  \item MCFS \cite{art29}: Multi-cluster feature selection algorithm chooses features that are capable of maintaining the multi-cluster structure of the data.
  \item UDFS \cite{art30}: The unsupervised discriminative feature selection algorithm defines local total scatter matrix and between class scatter matrix for different samples so as to select the most representative features.
  \item DISR \cite{art20}: The diversity-induced self-representation for unsupervised feature selection algorithm takes the diversity of features into consideration to reduce the redundancy.
  \item RNE \cite{art31}: The robust neighborhood embedding algorithm minimizes the residual error based on the presumption that each sample can be reconstructed by its neighbors.
  \item SGFS \cite{art32}: The subspace learning-based graph regularized feature selection algorithm brings in a regularization term to keep the local manifold structure of features unchanged.
\end{itemize}

\subsection{Evaluation metrics}
In this paper, Clustering Accuracy (ACC) and Normalized Mutual Information (NMI) are adopted to assess the performance of the aforementioned algorithms \cite{art33,art34}.

NMI is defined as
\begin{equation}
\label{eqn::eq34}
{\rm{NMI}}\left( {P,Q} \right) = \frac{{I\left( {P,Q} \right)}}{{\sqrt {H\left( P \right)H\left( Q \right)} }},
\end{equation}

where $I\left( {P,Q} \right)$ denotes the mutual information between $P$ and $Q$. $H\left( P \right)$ and $H\left( Q \right)$ represent the entropy of $P$ and $Q$, respectively. In practice, $P$ and $Q$ refer to the clustering label and the ground truth label, respectively.

ACC is defined as
\begin{equation}
\label{eqn::eq35}
{\rm{ACC}} = \frac{{\sum\nolimits_{i = 1}^n {\delta \left( {{p_i},\;map\left( {{q_i}} \right)} \right)} }}{n},
\end{equation}

where ${p_i}$ and ${q_i}$ stand for the clustering label and the ground truth label of the sample ${x_i}$, respectively. $\delta \left( {x,\;y} \right) = \left\{ \begin{array}{l}
1,\;\;if\;x = y\\
0,\;otherwise
\end{array} \right.$, and $map\left( \cdot \right)$ is a function which matches ${p_i}$ and ${q_i}$.
It takes ${p_i}$ as the reference label and rearranges ${q_i}$ in the same order as ${p_i}$. It is used to solve the problem of label inconsistency. Kuhn-Munkres or Hungarian Algorithm is often utilized to achieve this goal \cite{match1986}.

As can be perceived, an algorithm with larger NMI and ACC is expected.

\subsection{Experimental settings}
Parameters are tuned according to the referenced papers of the corresponding algorithms. To be more accurate, for LS, MCFS, UDFS, RNE and SGFS, we set the neighborhood size $k$ as 5. For LS and SGFS, the bandwidth parameter of Gaussian kernel $\sigma $ is fixed to 10. For SGFS and SPLR, we set the dimension of the subspace $K$ as 200. For RNE, $\alpha $ is set as ${10^3}$ to ensure the orthogonality constraint. Following \cite{art35}, $\mu $ and $\gamma $ in SPLR are set as 1.05 and 2, respectively. Furthermore, for UDFS, the regularization parameter $\gamma $ is searched from $\left\{ {{{10}^{ - 9}},\;{{10}^{ - 6}},\;{{10}^{ - 3}},\;1,\;{\rm{1}}{{\rm{0}}^{\rm{3}}}{\rm{,}}\;{\rm{1}}{{\rm{0}}^{\rm{6}}}{\rm{,}}\;{\rm{1}}{{\rm{0}}^{\rm{9}}}} \right\}$. For DISR, ${\lambda _1}$ and ${\lambda _{\rm{2}}}$ are adjusted in the range of $\left\{ {0.01,\;0.05,\;0.1,\;0.5,\;1,\;5,\;10,\;50,\;100,\;500} \right\}$. Other parameters are all tested in $\{ {{10}^{ - 3}},\;{{10}^{ - 2}},\;{{10}^{ - 1}},\;1,\;{\rm{1}}{{\rm{0}}^1},\;{\rm{1}}{{\rm{0}}^2},$ $\;{\rm{1}}{{\rm{0}}^3} \}$. Features are selected from 20 to 200 with the interval 20. Since we have no access to the label information of data, the performance of the algorithms is judged by clustering tasks instead of classification tasks. Firstly, different algorithms are exerted to select $N$ features. In this way, data with high dimension is transformed into data with low dimension. Then, clustering methods are made use of to group the low-dimensional data into $c$ classes. Finally, evaluation metrics are applied to assess the performance. K-means and PAM are repeated 20 times with random initializations and the average results are recorded for comparison \cite{art36,art37}. The best results with the optimal parameters and the number of selected features are derived for comparison.

\subsection{Effect of ${l_{2,{1 \mathord{\left/
 {\vphantom {1 2}} \right.
 \kern-\nulldelimiterspace} 2}}}$-norm}
Since this paper focuses on unsupervised learning, and Wang et al. just verified the effectiveness of ${l_{2,{1 \mathord{\left/
 {\vphantom {1 2}} \right.
 \kern-\nulldelimiterspace} 2}}}$ regularization term for classification tasks, the superiority of ${l_{2,{1 \mathord{\left/
 {\vphantom {1 2}} \right.
 \kern-\nulldelimiterspace} 2}}}$ regularization term is further discussed when the label information is unavailable \cite{art27}. Datasets including COIL20, ORL and GLIOMA are used for this purpose and parameters are all fixed to 1. Tables \ref{tab::table9}-\ref{tab::table10} and Tables \ref{tab::table11}-\ref{tab::table12} demonstrate the clustering results with different regularization terms and different number of selected features in terms of ACC and NMI, respectively.

\begin{table*}[!htb]\tiny
\renewcommand\arraystretch{1.2}
\centering
\begin{center}
\caption{The clustering results of SPLR with different regularization terms in terms of ACC}\label{tab::table9}
\resizebox{\textwidth}{!}{
\begin{tabular}[c]{llll|lll|lll}
\thickhline
\multirow{2}{*}{Dataset}& \multicolumn{3}{c|}{Top 40 features}& \multicolumn{3}{c|}{Top 80 features}& \multicolumn{3}{c}{Top 120 features}\\
\cline{2-10}
& ${p = 0.5}$ & ${p = 1}$ & ${p = 2}$ & ${p = 0.5}$ & ${p = 1}$ & ${p = 2}$ & ${p = 0.5}$ & ${p = 1}$ & ${p = 2}$\\
\thickhline
COIL20 & \textbf{54.84} & 54.72 & 54.10 & \textbf{58.09} & 54.22 & 53.99 & 56.49 & 54.90 & \textbf{57.17} \\
ORL & \textbf{58.56} & 57.44 & 57.13 & \textbf{63.00} & 60.94 & 62.56 & 60.00 & \textbf{61.69} & 60.13 \\
GLIOMA & \textbf{56.50} & 54.00 & 56.00 & \textbf{61.00} & 60.50 & 59.50 & \textbf{61.00} & 58.50 & 57.00 \\
\thickhline
\end{tabular}}
\end{center}
\end{table*}

\begin{table*}[!htb]\tiny
\renewcommand\arraystretch{1.2}
\centering
\begin{center}
\caption{The clustering results of SPLR with different regularization terms in terms of ACC}\label{tab::table10}
\resizebox{\textwidth}{!}{
\begin{tabular}[c]{llll|lll|lll}
\thickhline
\multirow{2.5}{*}{Dataset}& \multicolumn{3}{c|}{Top 160 features}& \multicolumn{3}{c|}{Top 200 features}& \multicolumn{3}{c}{Top 240 features}\\
\cline{2-10}
& ${p = 0.5}$ & ${p = 1}$ & ${p = 2}$ & ${p = 0.5}$ & ${p = 1}$ & ${p = 2}$ & ${p = 0.5}$ & ${p = 1}$ & ${p = 2}$\\
\thickhline
COIL20 & \textbf{59.65} & 55.94 & 58.72 & \textbf{58.70} & 57.40 & 58.16 & \textbf{58.37} & 56.56 & 57.38 \\
ORL & \textbf{61.56} & 61.38 & 61.06 & \textbf{63.50} & 61.38 & 61.44 & \textbf{62.19} & 61.69 & 61.50 \\
GLIOMA & 61.50 & \textbf{64.00} & 60.50 & \textbf{64.50} & \textbf{64.50} & 63.00 & \textbf{64.50} & 60.50 & 62.00 \\
\thickhline
\end{tabular}}
\end{center}
\end{table*}

\begin{table*}[!htb]\tiny
\renewcommand\arraystretch{1.2}
\centering
\begin{center}
\caption{The clustering results of SPLR with different regularization terms in terms of NMI}\label{tab::table11}
\resizebox{\textwidth}{!}{
\begin{tabular}[c]{llll|lll|lll}
\thickhline
\multirow{2.5}{*}{Dataset}& \multicolumn{3}{c|}{Top 40 features}& \multicolumn{3}{c|}{Top 80 features}& \multicolumn{3}{c}{Top 120 features}\\
\cline{2-10}
& ${p = 0.5}$ & ${p = 1}$ & ${p = 2}$ & ${p = 0.5}$ & ${p = 1}$ & ${p = 2}$ & ${p = 0.5}$ & ${p = 1}$ & ${p = 2}$\\
\thickhline
COIL20 & 68.32 & 69.16 & \textbf{69.60} & \textbf{71.86} & 71.54 & 70.91 & 72.46 & 72.63 & \textbf{73.21} \\
ORL & \textbf{84.12} & 83.70 & 83.80 & \textbf{85.69} & 85.53 & 85.52 & \textbf{85.79} & 85.65 & 85.06 \\
GLIOMA & \textbf{52.18} & 49.29 & 48.23 & 56.08 & \textbf{56.84} & 52.60 & \textbf{55.52} & 50.22 & 48.76 \\
\thickhline
\end{tabular}}
\end{center}
\end{table*}

\begin{table*}[!htb]\tiny
\renewcommand\arraystretch{1.2}
\centering
\begin{center}
\caption{The clustering results of SPLR with different regularization terms in terms of NMI}\label{tab::table12}
\resizebox{\textwidth}{!}{
\begin{tabular}[c]{llll|lll|lll}
\thickhline
\multirow{2.5}{*}{Dataset}& \multicolumn{3}{c|}{Top 160 features}& \multicolumn{3}{c|}{Top 200 features}& \multicolumn{3}{c}{Top 240 features}\\
\cline{2-10}
& ${p = 0.5}$ & ${p = 1}$ & ${p = 2}$ & ${p = 0.5}$ & ${p = 1}$ & ${p = 2}$ & ${p = 0.5}$ & ${p = 1}$ & ${p = 2}$\\
\thickhline
COIL20 & \textbf{74.30} & 72.21 & 74.03 & \textbf{74.77} & 73.75 & 74.33 & 74.27 & 73.97 & \textbf{74.70} \\
ORL & \textbf{86.09} & 85.53 & 85.93 & \textbf{86.17} & 85.98 & 86.16 & \textbf{86.25} & 85.83 & 86.11 \\
GLIOMA & \textbf{58.40} & 53.50 & 55.81 & \textbf{61.09} & 60.36 & 58.10 & \textbf{61.40} & 57.04 & 58.39 \\
\thickhline
\end{tabular}}
\end{center}
\end{table*}

From Tables \ref{tab::table9} and \ref{tab::table10}, it can be seen that except for selecting 120 features on COIL20 and ORL and selecting 160 features on GLIOMA, SPLR with ${l_{2,{1 \mathord{\left/
 {\vphantom {1 2}} \right.
 \kern-\nulldelimiterspace} 2}}}$ regularization term achieves the highest ACC. From Tables \ref{tab::table11} and \ref{tab::table12}, it can be observed that SPLR with ${l_{2,{1 \mathord{\left/
 {\vphantom {1 2}} \right.
 \kern-\nulldelimiterspace} 2}}}$ regularization term surpasses SPLR with ${l_{2,1}}$ regularization term and ${l_{2,2}}$ regularization term on ORL regardless of the number of selected features in terms of NMI. In addition, it outperforms others on GLIOMA except for selecting 80 features. On COIL20, best NMI can be obtained by SPLR with ${l_{2,{1 \mathord{\left/
 {\vphantom {1 2}} \right.
 \kern-\nulldelimiterspace} 2}}}$ regularization term when 80, 160 and 200 features are chosen. Therefore, it is sensible to utilize ${l_{2,{1 \mathord{\left/
 {\vphantom {1 2}} \right.
 \kern-\nulldelimiterspace} 2}}}$ regularization term for robustness and sparsity.

\subsection{Clustering results and analysis}
The clustering results on nine datasets using K-means and PAM in terms of ACC and NMI are listed in Tables \ref{tab::table3} and \ref{tab::table4}, Tables \ref{tab::table5} and \ref{tab::table6}, respectively.

When using K-means for clustering, SPLR outperforms other algorithms on seven datasets, including USPS, Madelon, Isolet, COIL20, ORL, Colon and GLIOMA. DISR achieves the best result on Umist, and UDFS gains highest ACC and NMI on warpPIE10P. Moreover, it turns out that SPLR behaves better than baseline on all datasets, which confirms the necessity for feature selection. Not only is SPLR superior to RNE and SGFS on all datasets, but also it outperforms DISR on most datasets except Umist. The reasons are  as follows. 1) SGFS ignores the redundancy between features and the negative impact of outliers. 2) RNE only keeps the local manifold structure unchanged. 3) DISR merely takes the diversity of features into account. In contrast, SPLR considers the local manifold structure of data as well as the diversity of both features and data simultaneously, which facilitates the feature selection process.

When using PAM for clustering, SPLR outperforms other algorithms on six datasets, including USPS, Madelon, COIL20, Colon, WarpPIE10P and GLIOMA, which also emphasizes its effectiveness. And the clustering results improve a lot on all datasets excluding Isolet when comparing SPLR with baseline, which uses all features for the task. In addition, baseline, MCFS and DISR obtain the optimal ACC and NMI on Isolet, Umist and ORL, respectively.

\begin{table*}[htbp]
\centering
\begin{center}
\caption{The clustering results on benchmark datasets using K-means in terms of ACC}\label{tab::table3}
\resizebox{\textwidth}{!}{
\begin{tabular}[c]{lllllllll}
\toprule
Dataset& Baseline& LS& MCFS& UDFS& DISR& RNE& SGFS& SPLR  \\
\midrule
USPS& 63.45$\pm $0.54 & 63.20$\pm $1.06  & 64.91$\pm $1.91 & 61.33$\pm $1.01 & 63.53$\pm $0.86 & 64.12$\pm $1.02 & 65.42$\pm $0.54 & \textbf{66.36$\pm $0.90} \\
Madelon& 28.30$\pm $2.11 & 27.80$\pm $1.58  & 27.98$\pm $1.64 & 29.77$\pm $1.95 & 30.88$\pm $1.25 & 29.31$\pm $1.08 & 30.40$\pm $1.45 & \textbf{31.43$\pm $0.80} \\
Isolet& 54.20$\pm $1.82 & 54.00$\pm $1.98  & 55.50$\pm $1.72 & 50.39$\pm $1.70 & 48.27$\pm $1.31 & 39.11$\pm $0.72 & 56.54$\pm $1.60 & \textbf{56.78$\pm $2.06} \\
Umist& 49.46$\pm $0.83 & 48.86$\pm $4.40  & 54.93$\pm $1.28 & 51.05$\pm $2.91 & \textbf{56.71$\pm $1.69} & 49.71$\pm $1.77 & 52.27$\pm $1.19 & 52.69$\pm $1.64 \\
COIL20& 57.51$\pm $0.93 & 56.26$\pm $1.72  & 59.65$\pm $1.59 & 56.23$\pm $0.91 & 58.04$\pm $0.88 & 54.56$\pm $1.95 & 60.22$\pm $0.78 & \textbf{63.12$\pm $0.83} \\
ORL& 61.58$\pm $1.93 & 61.55$\pm $1.81  & 62.91$\pm $1.46 & 61.21$\pm $2.44 & 67.36$\pm $3.77 & 60.09$\pm $2.51 & 61.61$\pm $1.47 & \textbf{68.10$\pm $3.09} \\
Colon& 22.51$\pm $5.13 & 25.24$\pm $3.45  & 24.10$\pm $3.77 & 27.66$\pm $4.17 & 30.64$\pm $3.40 & 26.70$\pm $5.29 & 27.63$\pm $4.96 & \textbf{32.72$\pm $5.79} \\
warpPIE10P& 43.36$\pm $4.96 & 44.63$\pm $2.20  & 46.14$\pm $5.43 & \textbf{55.25$\pm $6.01} & 52.99$\pm $7.79 & 41.74$\pm $2.19 & 53.31$\pm $6.97 & 54.52$\pm $6.78 \\
GLIOMA& 59.78$\pm $6.22 & 60.66$\pm $9.92  & 60.40$\pm $5.95 & 66.30$\pm $10.08 & 65.31$\pm $7.51 & 63.71$\pm $9.00 & 63.19$\pm $6.61 & \textbf{66.66$\pm $7.93} \\
\bottomrule
\end{tabular}}
\end{center}
\end{table*}

\begin{table*}[htbp]
\centering
\begin{center}
\caption{The clustering results on benchmark datasets using K-means in terms of NMI}\label{tab::table4}
\resizebox{\textwidth}{!}{
\begin{tabular}[c]{lllllllll}
\toprule
Dataset& Baseline& LS& MCFS& UDFS& DISR& RNE& SGFS& SPLR  \\
\midrule
USPS& 61.22$\pm $1.12 & 61.10$\pm $1.22  & 62.40$\pm $1.38 & 59.09$\pm $1.30 & 61.19$\pm $1.06 & 61.75$\pm $1.00 & 62.56$\pm $1.07 & \textbf{63.42$\pm $1.36} \\
Madelon& 1.29$\pm $1.29 & 1.51$\pm $1.00  & 0.81$\pm $0.67 & 2.05$\pm $1.70 & 2.70$\pm $2.00 & 2.52$\pm $1.90 & 2.83$\pm $2.65 & \textbf{2.95$\pm $1.30} \\
Isolet& 74.26$\pm $1.29 & 73.68$\pm $1.68  & 74.30$\pm $1.63 & 69.83$\pm $1.39 & 67.68$\pm $1.36 & 59.23$\pm $0.82 & 74.55$\pm $1.35 & \textbf{74.87$\pm $1.66} \\
Umist& 70.26$\pm $0.69 & 68.61$\pm $4.52  & 74.53$\pm $0.91 & 70.45$\pm $1.73 & \textbf{76.00$\pm $1.02} & 69.72$\pm $1.54 & 71.94$\pm $1.06 & 72.45$\pm $1.09 \\
COIL20& 75.15$\pm $0.67 & 73.78$\pm $0.87  & 74.45$\pm $0.85 & 73.43$\pm $0.85 & 73.30$\pm $1.20 & 71.47$\pm $1.24 & 76.07$\pm $0.60 & \textbf{78.12$\pm $0.47} \\
ORL& 85.99$\pm $0.70 & 85.81$\pm $1.14  & 86.59$\pm $0.55 & 85.90$\pm $1.20 & 88.77$\pm $1.09 & 85.44$\pm $1.00 & 85.97$\pm $0.67 & \textbf{88.99$\pm $1.03} \\
Colon& 6.08$\pm $2.12 & 9.60$\pm $8.18  & 8.72$\pm $6.47 & 17.29$\pm $12.77 & 21.27$\pm $13.99 & 11.12$\pm $4.00 & 17.14$\pm $10.87 & \textbf{30.42$\pm $8.77} \\
warpPIE10P& 51.60$\pm $5.81 & 52.42$\pm $3.04  & 55.46$\pm $6.74 & \textbf{65.96$\pm $5.18} & 63.06$\pm $7.56 & 50.99$\pm $3.45 & 62.81$\pm $6.97 & 63.30$\pm $4.88 \\
GLIOMA& 56.04$\pm $9.92 & 56.91$\pm $11.02  & 54.74$\pm $8.07 & 60.40$\pm $8.70 & 56.72$\pm $10.83 & 61.07$\pm $13.07 & 60.21$\pm $7.47 & \textbf{61.82$\pm $7.38} \\
\bottomrule
\end{tabular}}
\end{center}
\end{table*}

\begin{table*}[htbp]
\centering
\begin{center}
\caption{The clustering results on benchmark datasets using PAM in terms of ACC}\label{tab::table5}
\resizebox{\textwidth}{!}{
\begin{tabular}[c]{lllllllll}
\toprule
Dataset& Baseline& LS& MCFS& UDFS& DISR& RNE& SGFS& SPLR  \\
\midrule
USPS& 62.17$\pm $0.97 & 62.23$\pm $0.60  & 64.40$\pm $1.31 & 61.22$\pm $0.70 & 63.53$\pm $0.86 & 64.99$\pm $1.76 & 65.69$\pm $1.17 & \textbf{65.82$\pm $1.64} \\
Madelon& 30.72$\pm $3.99 & 33.54$\pm $4.62  & 32.40$\pm $3.07 & 33.44$\pm $1.87 & 34.53$\pm $3.15 & 32.86$\pm $1.73 & 35.31$\pm $4.69 & \textbf{35.37$\pm $1.63} \\
Isolet& \textbf{64.24$\pm $4.46} & 58.67$\pm $3.65  & 60.90$\pm $3.61 & 55.75$\pm $4.36 & 49.83$\pm $4.55 & 41.50$\pm $0.63 & 62.49$\pm $3.22 & 59.21$\pm $4.10 \\
Umist& 53.98$\pm $1.80 & 52.89$\pm $3.65  & \textbf{60.96$\pm $2.18} & 54.97$\pm $0.99 & 58.89$\pm $1.56 & 52.31$\pm $2.35 & 56.62$\pm $2.45 & 58.08$\pm $2.23 \\
COIL20& 67.73$\pm $2.08 & 64.34$\pm $1.55  & 67.61$\pm $2.42 & 63.51$\pm $1.02 & 62.36$\pm $0.96 & 61.82$\pm $2.82 & 68.81$\pm $0.65 & \textbf{69.91$\pm $1.60} \\
ORL& 63.32$\pm $3.79 & 63.79$\pm $2.40  & 66.75$\pm $4.17 & 65.15$\pm $2.12 & \textbf{67.36$\pm $3.77} & 61.11$\pm $4.85 & 64.31$\pm $4.18 & 63.42$\pm $5.17 \\
Colon& 22.97$\pm $6.17 & 24.23$\pm $1.05  & 25.77$\pm $6.57 & 28.22$\pm $6.20 & 30.47$\pm $3.26 & 24.64$\pm $7.35 & 27.64$\pm $6.03 & \textbf{34.63$\pm $3.81} \\
warpPIE10P& 48.33$\pm $8.36 & 47.91$\pm $5.46  & 50.97$\pm $10.78 & 58.20$\pm $8.41 & 57.85$\pm $9.42 & 44.25$\pm $3.45 & 56.76$\pm $8.57 & \textbf{58.23$\pm $7.47} \\
GLIOMA& 66.70$\pm $10.08 & 63.92$\pm $15.84  & 62.90$\pm $12.03 & 71.29$\pm $12.62 & 70.62$\pm $12.81 & 66.82$\pm $12.49 & 66.37$\pm $12.70 & \textbf{71.63$\pm $16.46} \\
\bottomrule
\end{tabular}}
\end{center}
\end{table*}

\begin{table*}[htbp]
\centering
\begin{center}
\caption{The clustering results on benchmark datasets using PAM in terms of NMI}\label{tab::table6}
\resizebox{\textwidth}{!}{
\begin{tabular}[c]{lllllllll}
\toprule
Dataset& Baseline& LS& MCFS& UDFS& DISR& RNE& SGFS& SPLR  \\
\midrule
USPS& 57.19$\pm $0.63 & 58.77$\pm $1.18  & 58.81$\pm $1.50 & 56.40$\pm $0.62 & 59.16$\pm $0.22 & 58.70$\pm $0.99 & 59.25$\pm $1.33 & \textbf{59.54$\pm $1.12} \\
Madelon& 0.88$\pm $1.11 & 1.43$\pm $1.16  & 1.02$\pm $1.30 & 2.40$\pm $1.11 & 3.31$\pm $2.06 & 2.37$\pm $1.67 & 2.49$\pm $1.53 & \textbf{4.29$\pm $1.90} \\
Isolet& \textbf{76.81$\pm $2.79} & 74.66$\pm $2.73  & 75.46$\pm $1.35 & 70.55$\pm $1.95 & 66.07$\pm $2.59 & 60.64$\pm $1.29 & 76.47$\pm $1.48 & 72.86$\pm $2.11 \\
Umist& 72.21$\pm $1.47 & 70.81$\pm $4.03  & \textbf{77.52$\pm $1.50} & 72.77$\pm $2.18 & 76.98$\pm $1.35 & 69.39$\pm $1.63 & 75.44$\pm $3.14 & 75.69$\pm $0.58 \\
COIL20& 77.36$\pm $1.86 & 76.32$\pm $1.36  & 76.73$\pm $1.16 & 74.64$\pm $2.20 & 72.93$\pm $1.21 & 74.38$\pm $0.78 & 78.66$\pm $1.61 & \textbf{79.41$\pm $0.72} \\
ORL& 86.56$\pm $1.11 & 86.86$\pm $1.05  & 88.05$\pm $1.18 & 86.56$\pm $1.24 & \textbf{88.77$\pm $1.09} & 85.90$\pm $1.58 & 87.22$\pm $1.66 & 86.81$\pm $1.88 \\
Colon& 3.70$\pm $2.63 & 6.68$\pm $4.75  & 8.99$\pm $12.63 & 14.69$\pm $6.00 & 19.62$\pm $12.83 & 9.23$\pm $4.43 & 15.21$\pm $8.15 & \textbf{31.97$\pm $4.17} \\
warpPIE10P& 55.90$\pm $9.14 & 54.70$\pm $3.17  & 57.64$\pm $9.97 & 59.89$\pm $7.91 & 65.58$\pm $9.32 & 52.60$\pm $4.94 & 65.00$\pm $8.60 & \textbf{65.75$\pm $4.59} \\
GLIOMA& 63.15$\pm $10.99 & 62.55$\pm $17.61  & 59.13$\pm $7.66 & 67.89$\pm $5.47 & 66.27$\pm $10.80 & 67.50$\pm $14.12 & 64.35$\pm $2.39 & \textbf{68.66$\pm $12.24} \\
\bottomrule
\end{tabular}}
\end{center}
\end{table*}

In order to further discuss whether the performance of the proposed SPLR and the compared algorithms is significantly different, a statistical test is carried out. Specifically, given the fact that the overall distribution of samples is unknown, Wilcoxon signed-rank test is deployed \cite{1945Individual}. The null hypothesis is set as “there is no significant difference between SPLR and the compared algorithm”, and the alternative hypothesis is set as “SPLR is better than the compared algorithm”. It is worth noting that the alternative hypothesis should be “the compared algorithm is better than SPLR” when SPLR is compared with MCFS and DISR on Umist and SPLR is compared with UDFS on warpPIE10P. The reason is that the ACC and NMI of the corresponding compared algorithm under these circumstances are higher than SPLR. Given the significance level ${\alpha  = 0.05}$, the results are shown in Table \ref{tab::table7} and Table \ref{tab::table8} in terms of ACC and NMI using K-means for clustering, respectively.

\begin{table*}[htbp]
\centering
\begin{center}
\caption{Wilcoxon signed-rank test in terms of ACC using K-means for clustering}\label{tab::table7}
\resizebox{\textwidth}{!}{
\begin{tabular}[c]{lllllllllllllll}
\toprule
\multirow{2.5}{*}{Dataset}& \multicolumn{2}{l}{Baseline}& \multicolumn{2}{l}{LS}& \multicolumn{2}{l}{MCFS}& \multicolumn{2}{l}{UDFS}& \multicolumn{2}{l}{DISR}& \multicolumn{2}{l}{RNE}& \multicolumn{2}{l}{SGFS}\\
\cmidrule{2-15}
& $p$ & $h$ & $p$ & $h$ & $p$ & $h$ & $p$ & $h$ & $p$ & $h$ & $p$ & $h$ & $p$ & $h$\\
\midrule
USPS & 1.4013e-04 & \textbf{1} & 0.0304 & \textbf{1} & 1.8901e-04 & \textbf{1} & 5.1672e-04 & \textbf{1} & 0.0010 & \textbf{1} & 5.9342e-04 & \textbf{1} & 0.0100 & \textbf{1} \\
Madelon & 8.8074e-05 & \textbf{1} & 8.8324e-05 & \textbf{1} & 1.3101e-04 & \textbf{1} & 1.0177e-04 & \textbf{1} & 0.0169 & \textbf{1} & 8.8199e-05 & \textbf{1} & 8.8199e-05 & \textbf{1} \\
Isolet & 1.3920e-04 & \textbf{1} & 8.8575e-05 & \textbf{1} & 1.3995e-04 & \textbf{1} & 0.0032 & \textbf{1} & 8.8449e-05 & \textbf{1} & 8.7949e-05 & \textbf{1} & 8.8324e-05 & \textbf{1} \\
Umist & 3.9023e-04 & \textbf{1} & 3.9023e-04 & \textbf{1} & 8.8575e-05 & 1 & 8.9180e-04 & \textbf{1} & 2.1908e-04 & 1 & 8.8575e-05 & \textbf{1} & 8.8575e-05 & \textbf{1} \\
COIL20 & 0.0072 & \textbf{1} & 0.0251 & \textbf{1} & 2.1908e-04 & \textbf{1} & 0.3703 & 0 & 0.0032 & \textbf{1} & 0.0019 & \textbf{1} & 0.0400 & \textbf{1} \\
ORL & 8.8324e-05 & \textbf{1} & 8.8449e-05 & \textbf{1} & 8.8575e-05 & \textbf{1} & 8.8575e-05 & \textbf{1} & 8.8575e-05 & \textbf{1} & 8.8324e-05 & \textbf{1} & 8.8449e-05 & \textbf{1} \\
Colon & 3.3217e-04 & \textbf{1} & 0.0029 & \textbf{1} & 0.0014 & \textbf{1} & 0.0125 & \textbf{1} & 0.0532 & 0 & 0.0107 & \textbf{1} & 0.0220 & \textbf{1} \\
warpPIE10P & 8.8575e-05 & \textbf{1} & 8.9180e-04 & \textbf{1} & 0.0100 & \textbf{1} & 8.8575e-05 & 1 & 8.8575e-05 & \textbf{1} & 8.8575e-05 & \textbf{1} & 0.0010 & \textbf{1} \\
GLIOMA & 0.0022 & \textbf{1} & 6.7694e-04 & \textbf{1} & 6.2407e-04 & \textbf{1} & 0.0642 & 0 & 1.3995e-04 & \textbf{1} & 0.0965 & 0 & 0.0036 & \textbf{1} \\
\bottomrule
\end{tabular}}
\end{center}
\end{table*}

\begin{table*}[htbp]
\centering
\begin{center}
\caption{Wilcoxon signed-rank test in terms of NMI using K-means for clustering}\label{tab::table8}
\resizebox{\textwidth}{!}{
\begin{tabular}[c]{lllllllllllllll}
\toprule
\multirow{2.5}{*}{Dataset}& \multicolumn{2}{l}{Baseline}& \multicolumn{2}{l}{LS}& \multicolumn{2}{l}{MCFS}& \multicolumn{2}{l}{UDFS}& \multicolumn{2}{l}{DISR}& \multicolumn{2}{l}{RNE}& \multicolumn{2}{l}{SGFS}\\
\cmidrule{2-15}
& $p$ & $h$ & $p$ & $h$ & $p$ & $h$ & $p$ & $h$ & $p$ & $h$ & $p$ & $h$ & $p$ & $h$\\
\midrule
USPS & 0.0013 & \textbf{1} & 6.8061e-04 & \textbf{1} & 8.8575e-05 & \textbf{1} & 8.8575e-05 & \textbf{1} & 5.9342e-04 & \textbf{1} & 1.4013e-04 & \textbf{1} & 0.0276 & \textbf{1} \\
Madelon & 8.8575e-05 & \textbf{1} & 8.8575e-05 & \textbf{1} & 8.8575e-05 & \textbf{1} & 8.8575e-05 & \textbf{1} & 0.0019 & \textbf{1} & 8.8575e-05 & \textbf{1} & 8.8575e-05 & \textbf{1} \\
Isolet & 8.8575e-05 & \textbf{1} & 8.8575e-05 & \textbf{1} & 8.8575e-05 & \textbf{1} & 1.8901e-04 & \textbf{1} & 8.8575e-05 & \textbf{1} & 8.8575e-05 & \textbf{1} & 8.8575e-05 & \textbf{1} \\
Umist & 2.1908e-04 & \textbf{1} & 0.2043 & 0 & 8.8575e-05 & 1 & 0.0072 & \textbf{1} & 8.8575e-05 & 1 & 8.8575e-05 & \textbf{1} & 8.8575e-05 & \textbf{1} \\
COIL20 & 8.8575e-05 & \textbf{1} & 3.3845e-04 & \textbf{1} & 4.4934e-04 & \textbf{1} & 0.9405 & 0 & 8.8575e-05 & \textbf{1} & 8.9180e-04 & \textbf{1} & 0.5016 & 0 \\
ORL & 8.8575e-05 & \textbf{1} & 8.8575e-05 & \textbf{1} & 8.8575e-05 & \textbf{1} & 8.8575e-05 & \textbf{1} & 8.8575e-05 & \textbf{1} & 8.8575e-05 & \textbf{1} & 8.8575e-05 & \textbf{1} \\
Colon & 2.5360e-04 & \textbf{1} & 6.8061e-04 & \textbf{1} & 0.0032 & \textbf{1} & 0.0028 & \textbf{1} & 0.2959 & 0 & 1.0335e-04 & \textbf{1} & 8.9180e-04 & \textbf{1} \\
warpPIE10P & 8.8575e-05 & \textbf{1} & 8.8575e-05 & \textbf{1} & 0.0090 & \textbf{1} & 8.8575e-05 & 1 & 8.8575e-05 & \textbf{1} & 8.8575e-05 & \textbf{1} & 8.8575e-05 & \textbf{1} \\
GLIOMA & 7.7959e-04 & \textbf{1} & 0.0012 & \textbf{1} & 0.0051 & \textbf{1} & 0.3905 & 0 & 0.1354 & 0 & 0.0137 & \textbf{1} & 0.0057 & \textbf{1} \\
\bottomrule
\end{tabular}}
\end{center}
\end{table*}

As can be seen, in most cases, SPLR outperforms the other algorithm. As for ACC, SPLR stands out on datasets including USPS, Madelon, Isolet and ORL. On Umist, SPLR is inferior to MCFS and DISR. On COIL20, the performance is not significantly improved compared with UDFS. On Colon, there is no significant difference between SPLR and DISR. On warpPIE10P, UDFS is better than SPLR. And on GLIOMA, the performance of UDFS and RNE is comparable to SPLR. The result in terms of NMI is similar to that in terms of ACC, which further proves the superiority of SPLR.

Figs. \ref{fig::picture1} and \ref{fig::picture2} illuminate the clustering results with different number of selected features in terms of ACC and NMI.

From Fig. \ref{fig::picture1}, it is clear that SPLR achieves the best result on Madelon no matter how many features are selected. On datasets including USPS, COIL20, ORL, Colon and GLIOMA, SPLR outperforms other methods in most cases. On the remaining three datasets, SPLR is inferior to several approaches.

\begin{figure*}[htbp]  
\centering
\subfigure[USPS]{
\begin{minipage}{0.31\linewidth}
\centering
  \includegraphics[width=\textwidth]{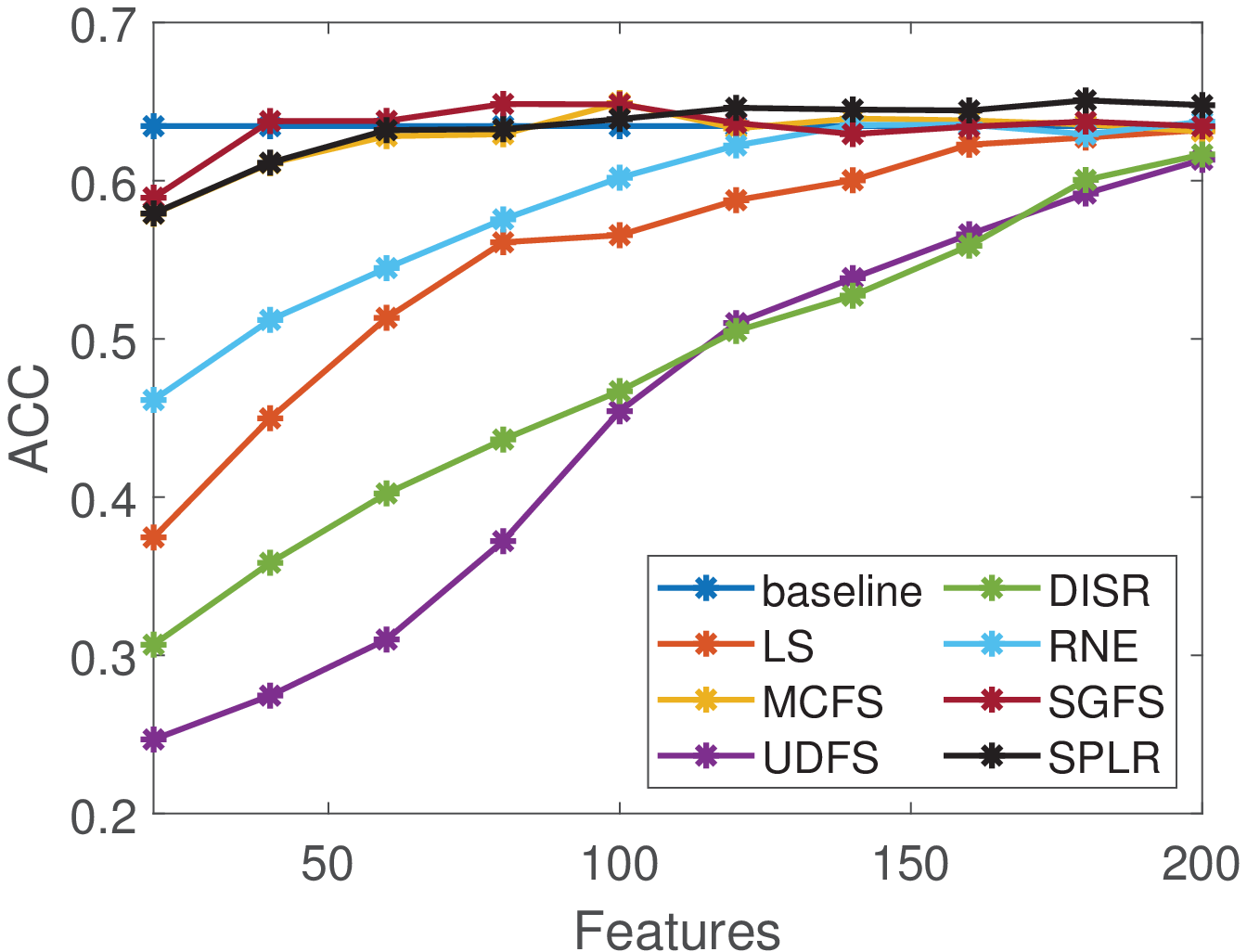}
\end{minipage}
}
\subfigure[Madelon]{
\begin{minipage}{0.31\linewidth}
\centering
  \includegraphics[width=\textwidth]{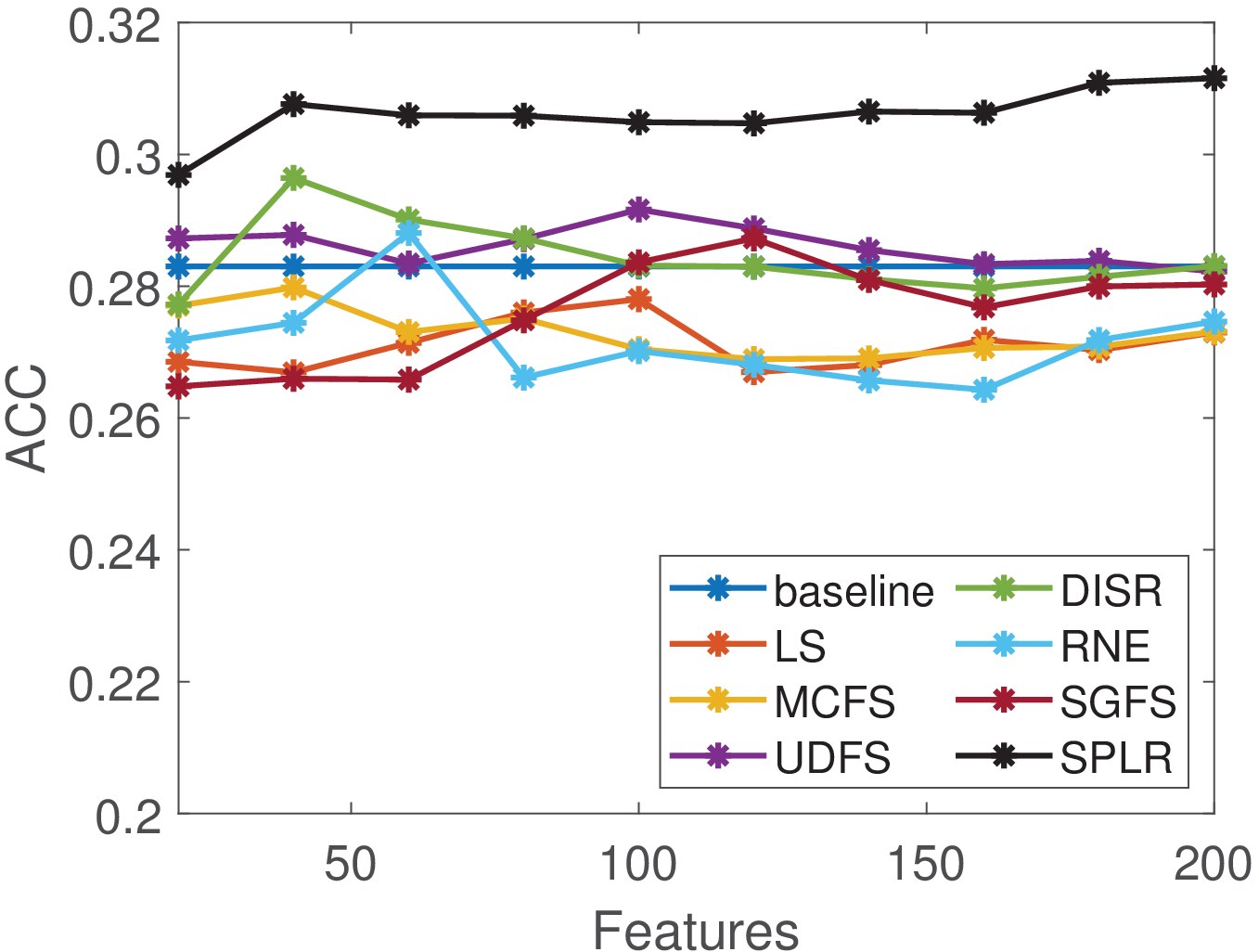}
\end{minipage}
}
\subfigure[Isolet]{
\begin{minipage}{0.31\linewidth}
\centering
  \includegraphics[width=\textwidth]{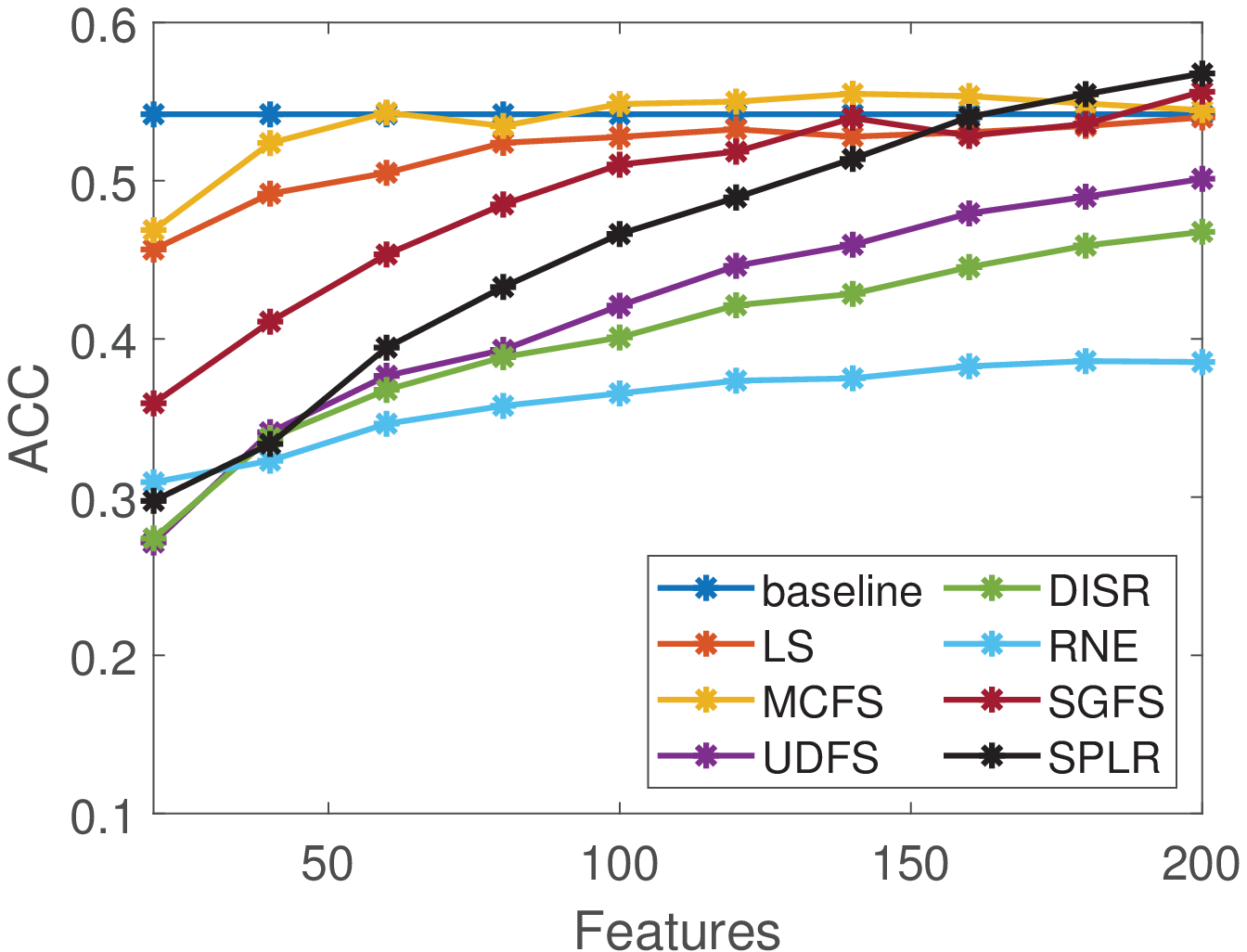}
\end{minipage}
}
%
\subfigure[Umist]{
\begin{minipage}{0.31\linewidth}
\centering
  \includegraphics[width=\textwidth]{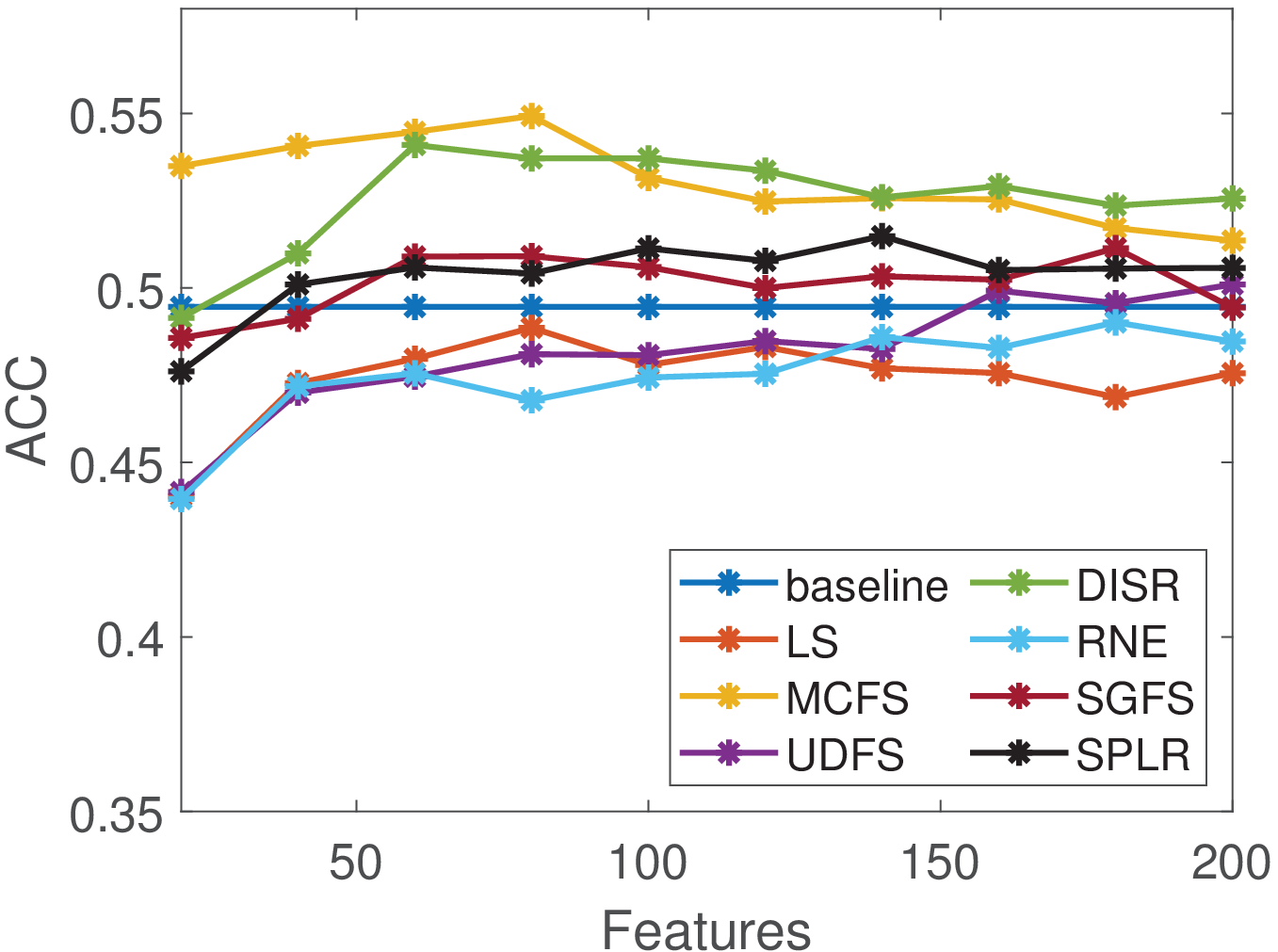}
\end{minipage}
}
\subfigure[COIL20]{
\begin{minipage}{0.31\linewidth}
\centering
  \includegraphics[width=\textwidth]{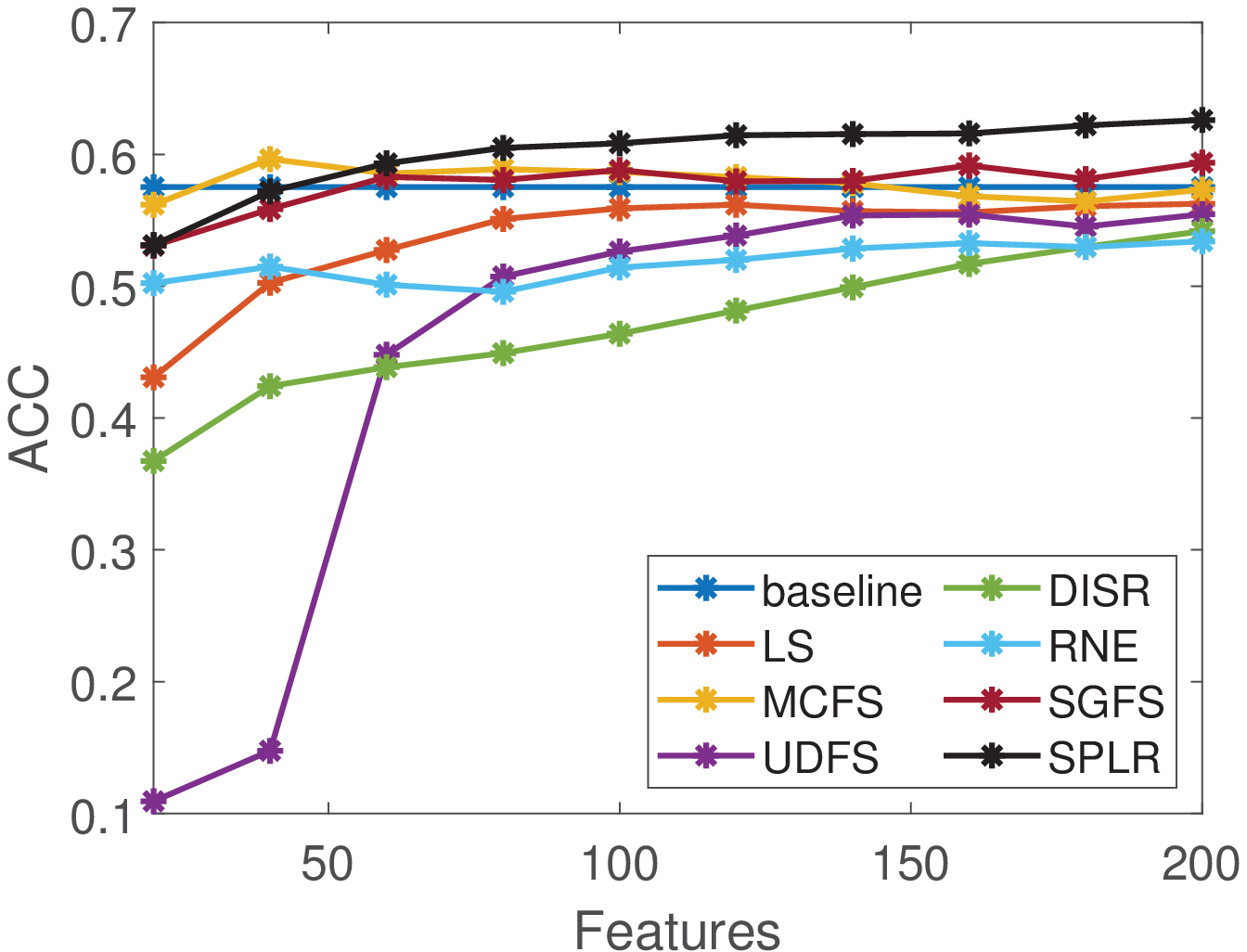}
\end{minipage}
}
\subfigure[ORL]{
\begin{minipage}{0.31\linewidth}
\centering
  \includegraphics[width=\textwidth]{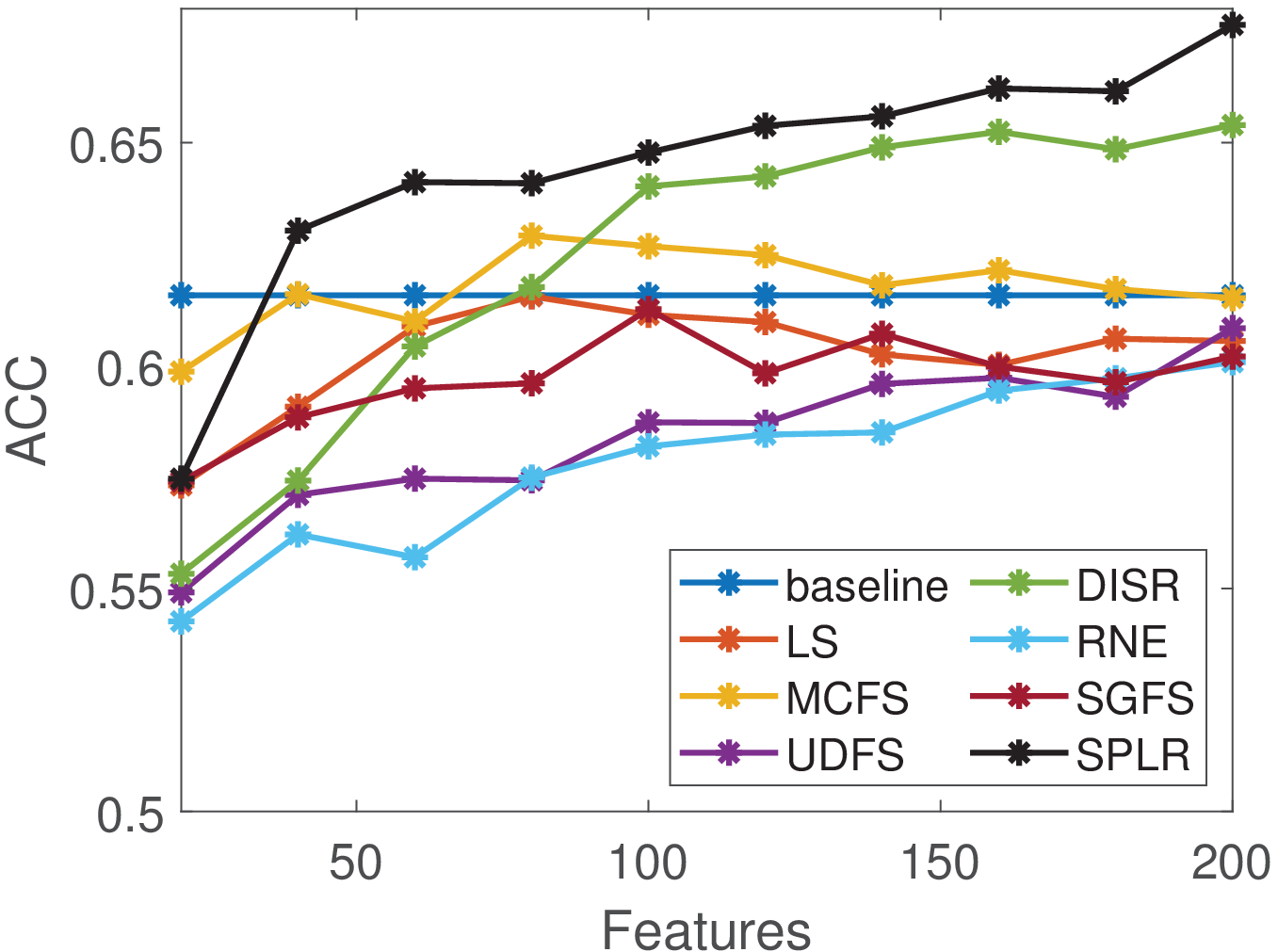}
\end{minipage}
}

\subfigure[Colon]{
\begin{minipage}{0.31\linewidth}
\centering
  \includegraphics[width=\textwidth]{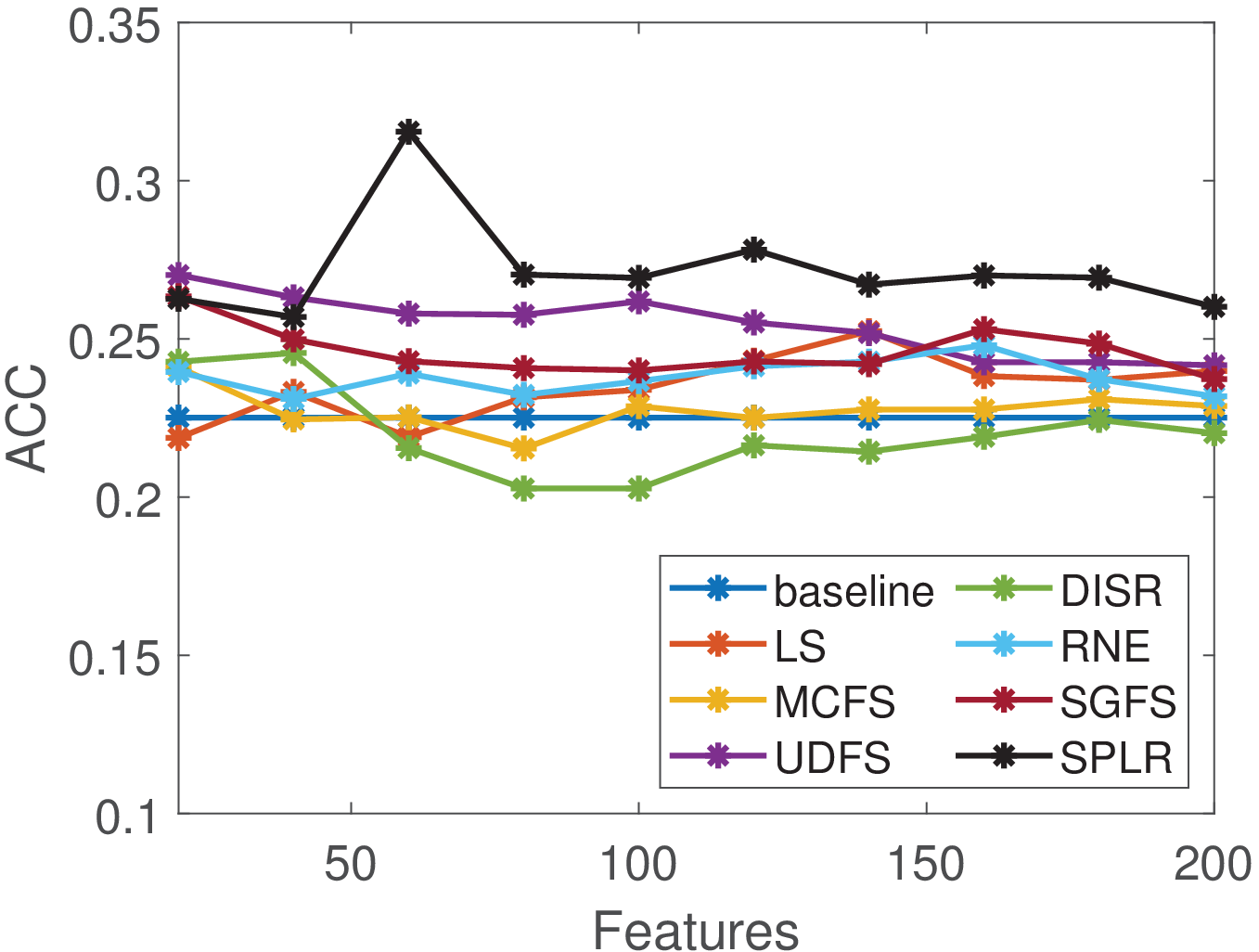}
\end{minipage}
}
\subfigure[warpPIE10P]{
\begin{minipage}{0.31\linewidth}
\centering
  \includegraphics[width=\textwidth]{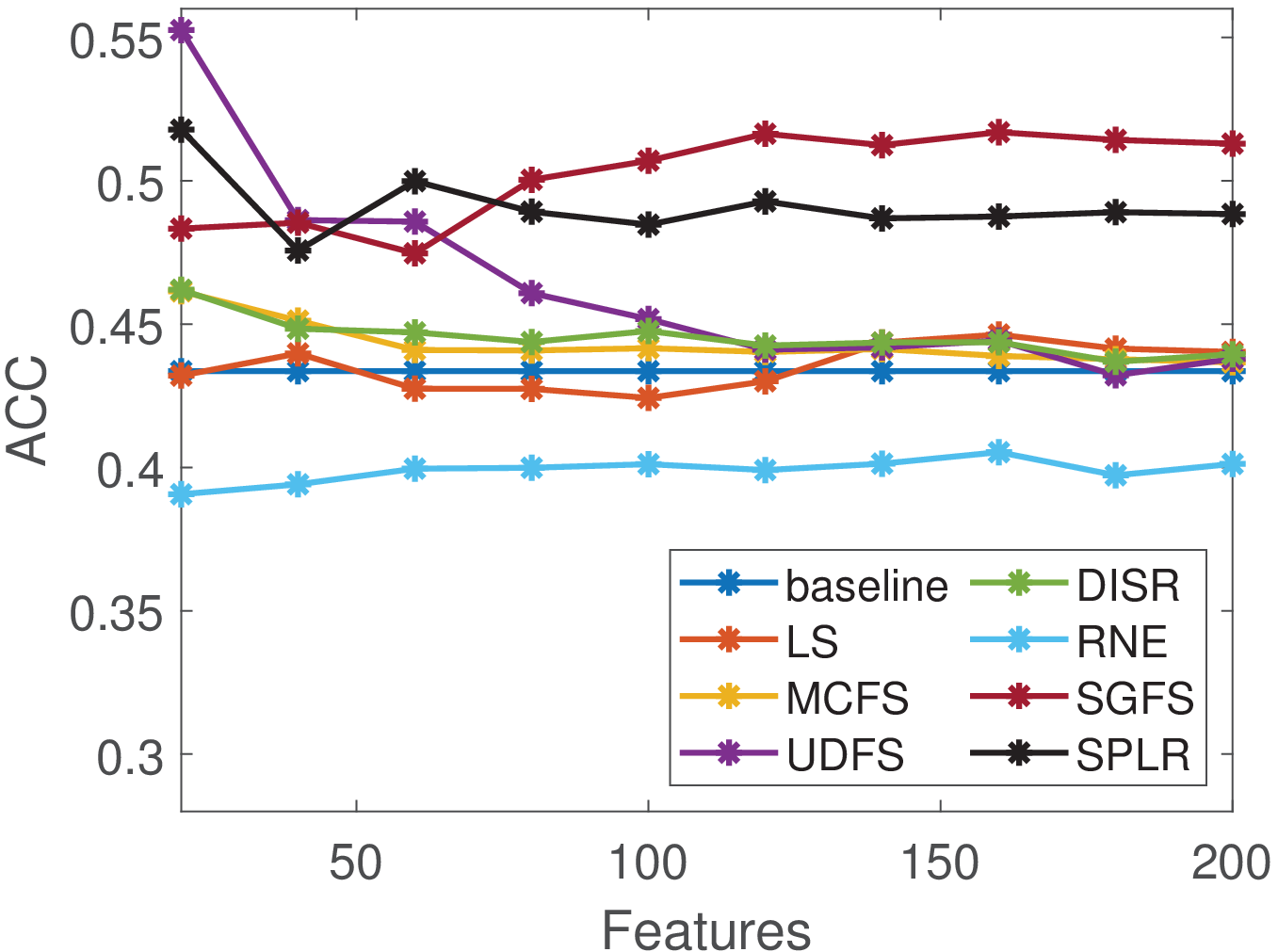}
\end{minipage}
}
\subfigure[CLIOMA]{
\begin{minipage}{0.31\linewidth}
\centering
  \includegraphics[width=\textwidth]{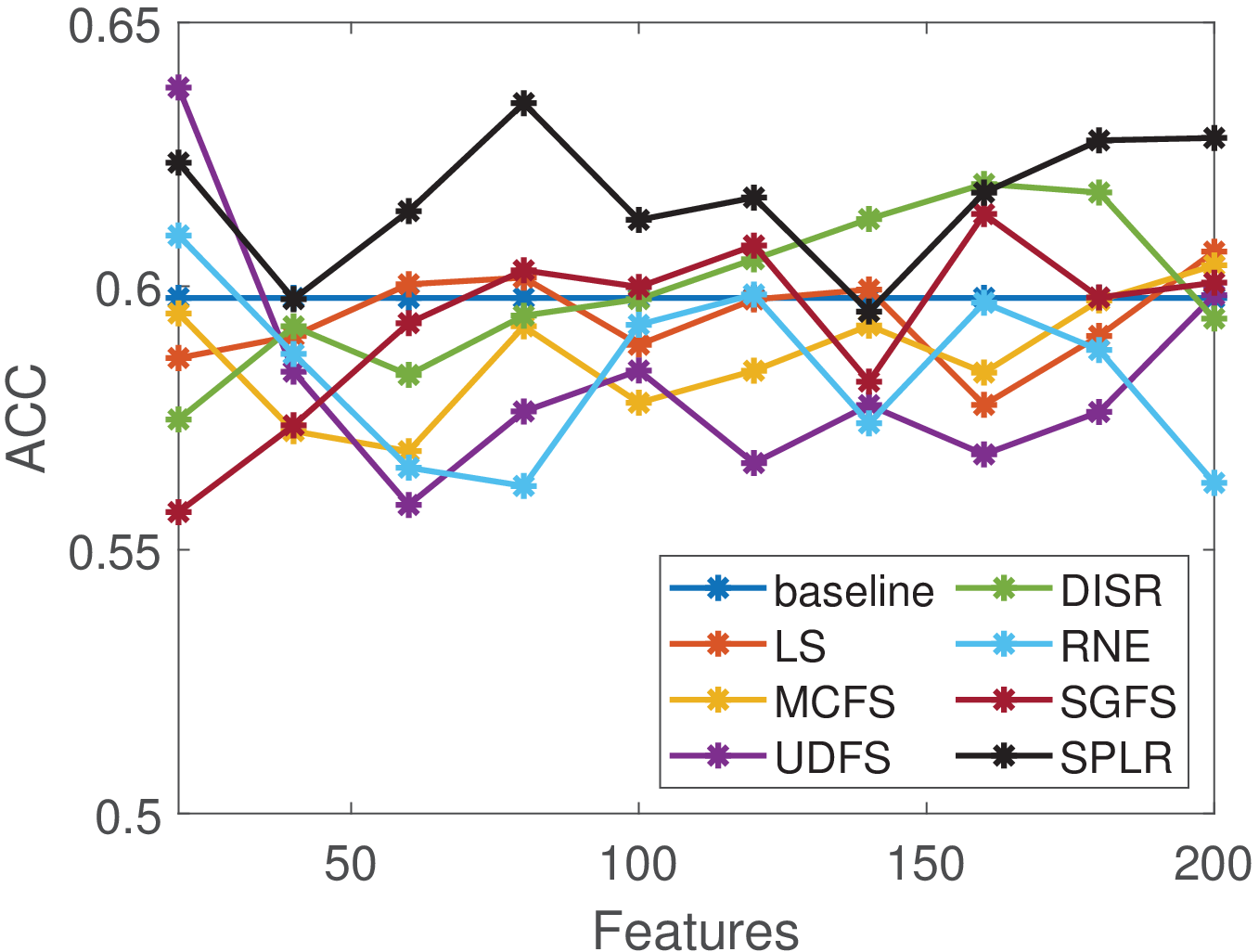}
\end{minipage}
}
\centering
\caption{The clustering results with different number of selected features in terms of ACC}\label{fig::picture1}
\end{figure*}

\begin{figure*}[htbp]  
\centering
\subfigure[USPS]{
\begin{minipage}{0.31\linewidth}
\centering
  \includegraphics[width=\textwidth]{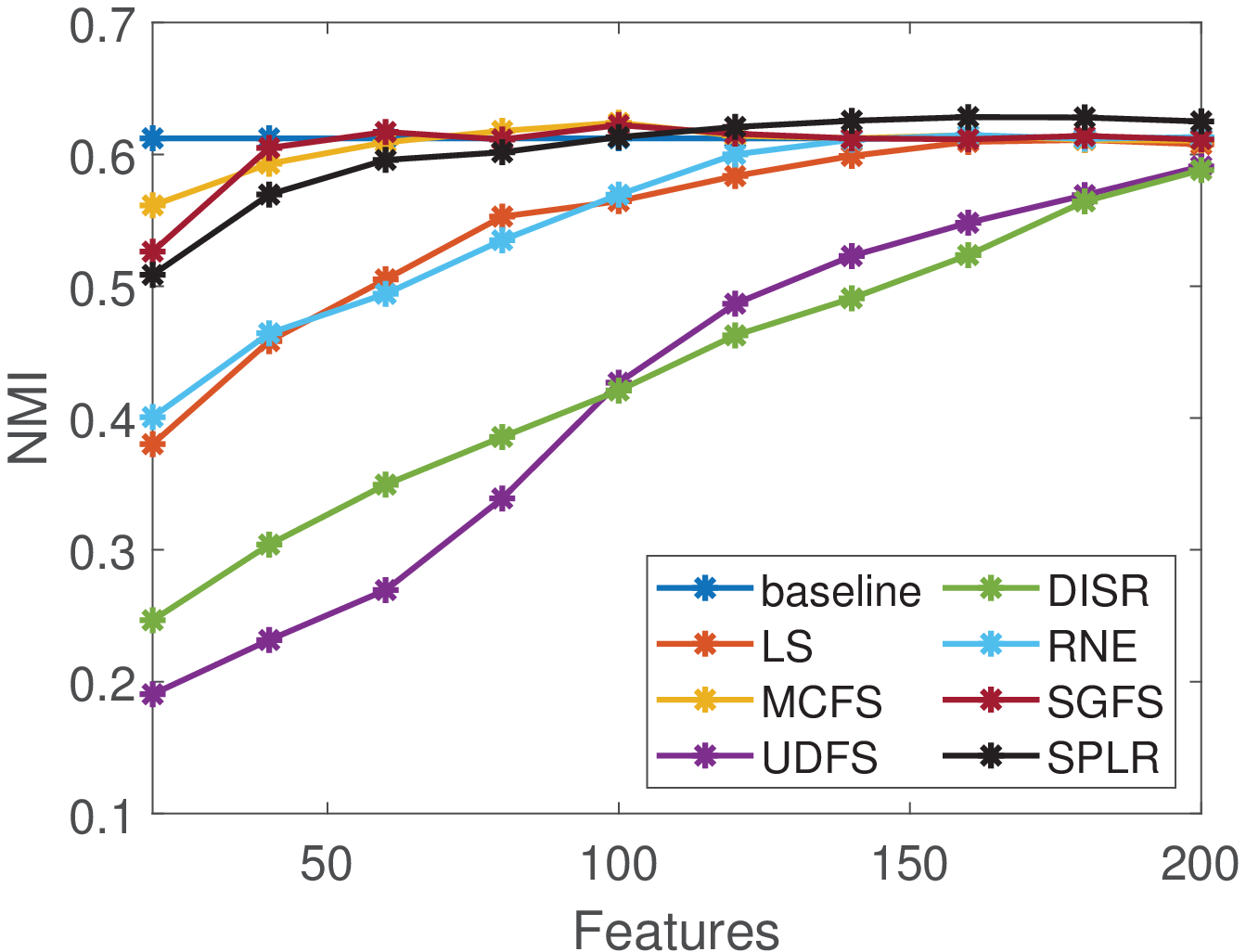}
\end{minipage}
}
\subfigure[Madelon]{
\begin{minipage}{0.31\linewidth}
\centering
  \includegraphics[width=\textwidth]{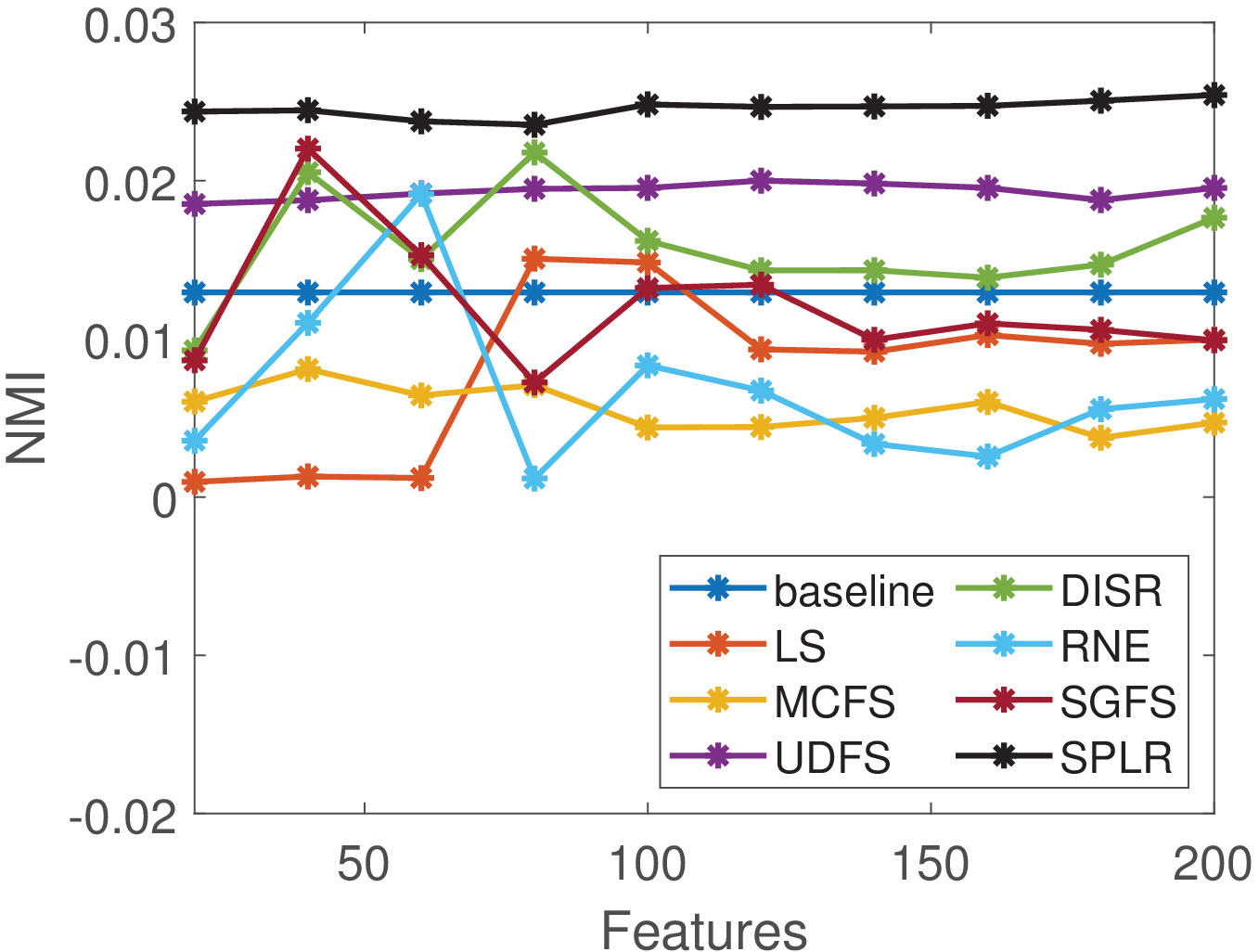}
\end{minipage}
}
\subfigure[Isolet]{
\begin{minipage}{0.31\linewidth}
\centering
  \includegraphics[width=\textwidth]{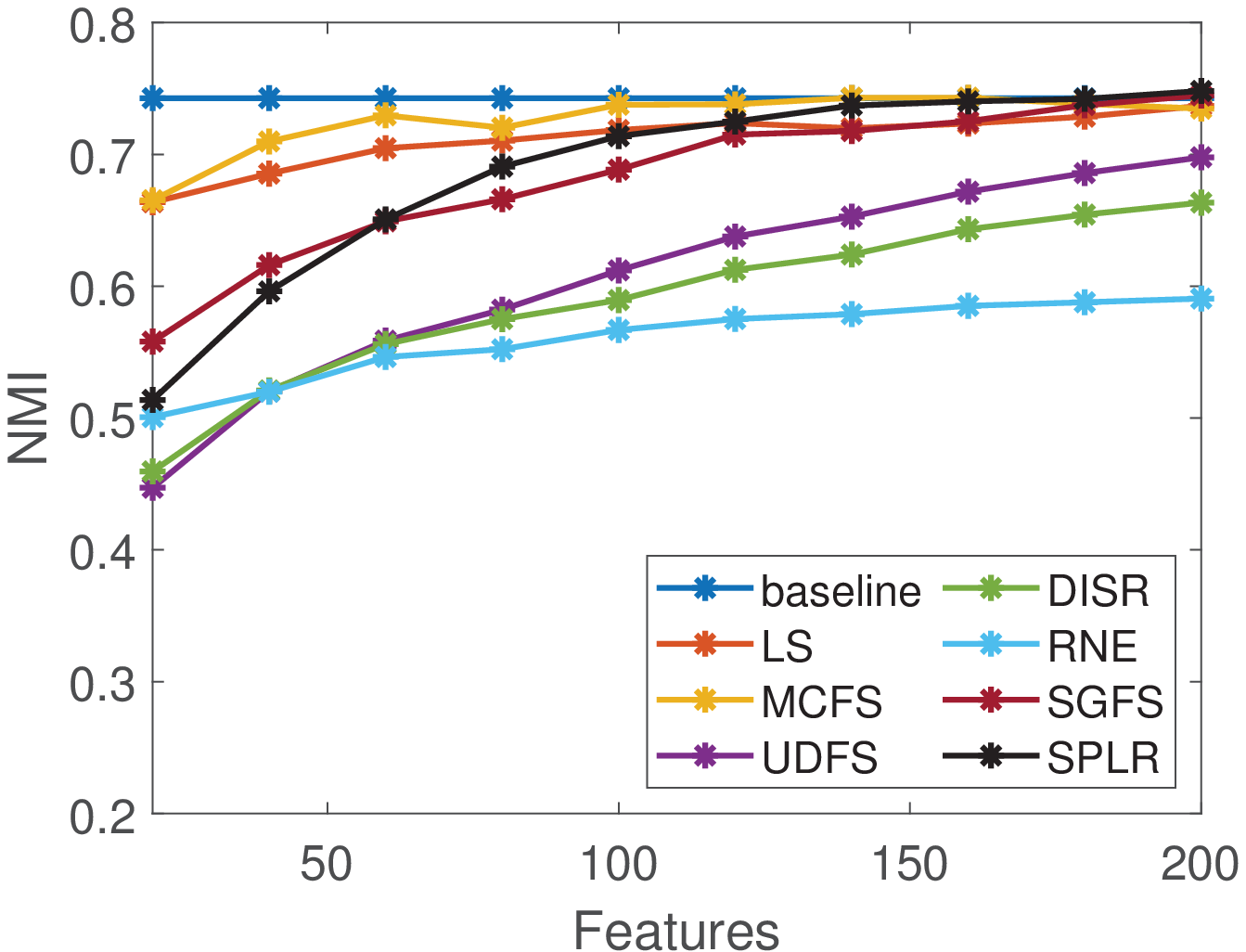}
\end{minipage}
}
%
\subfigure[Umist]{
\begin{minipage}{0.31\linewidth}
\centering
  \includegraphics[width=\textwidth]{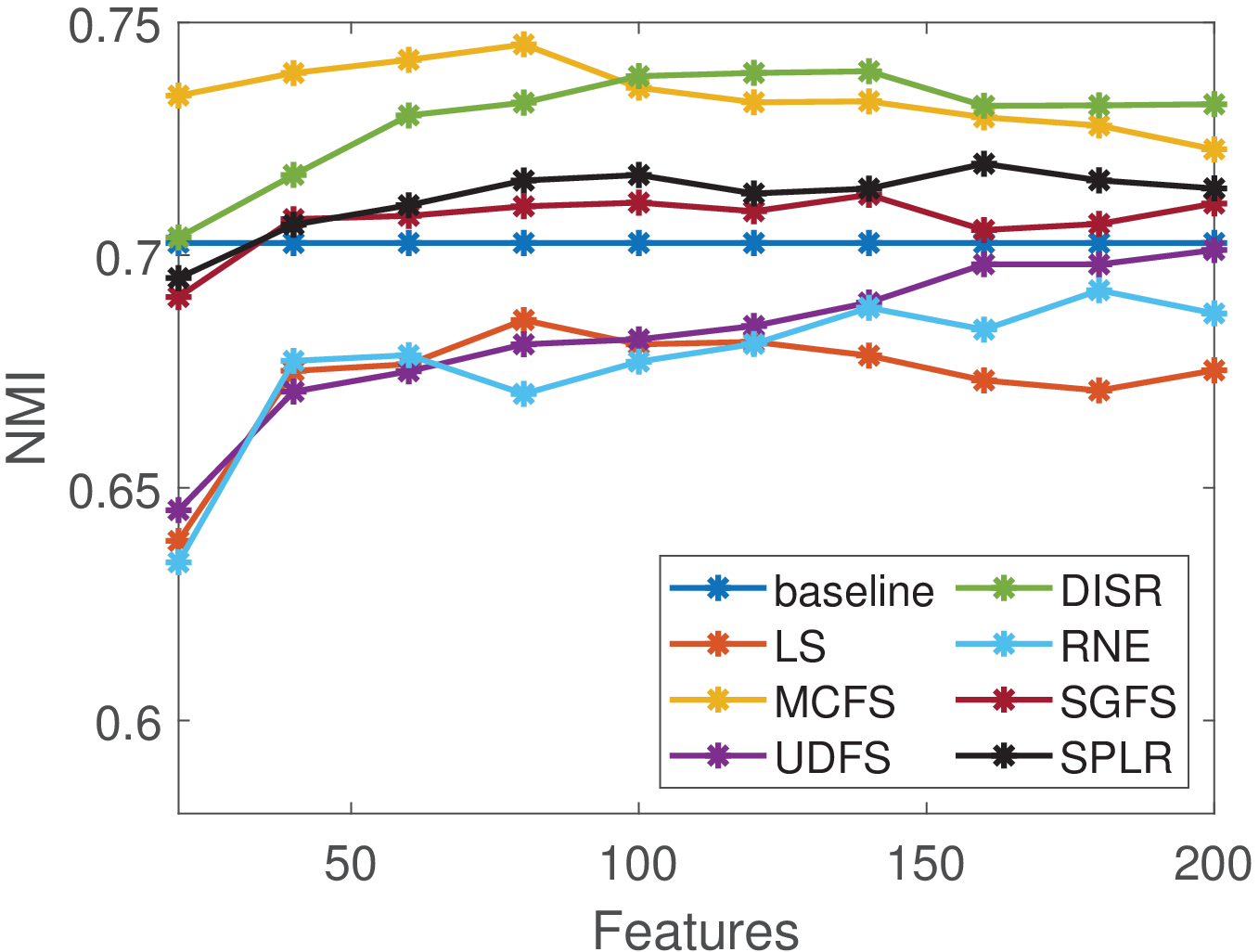}
\end{minipage}
}
\subfigure[COIL20]{
\begin{minipage}{0.31\linewidth}
\centering
  \includegraphics[width=\textwidth]{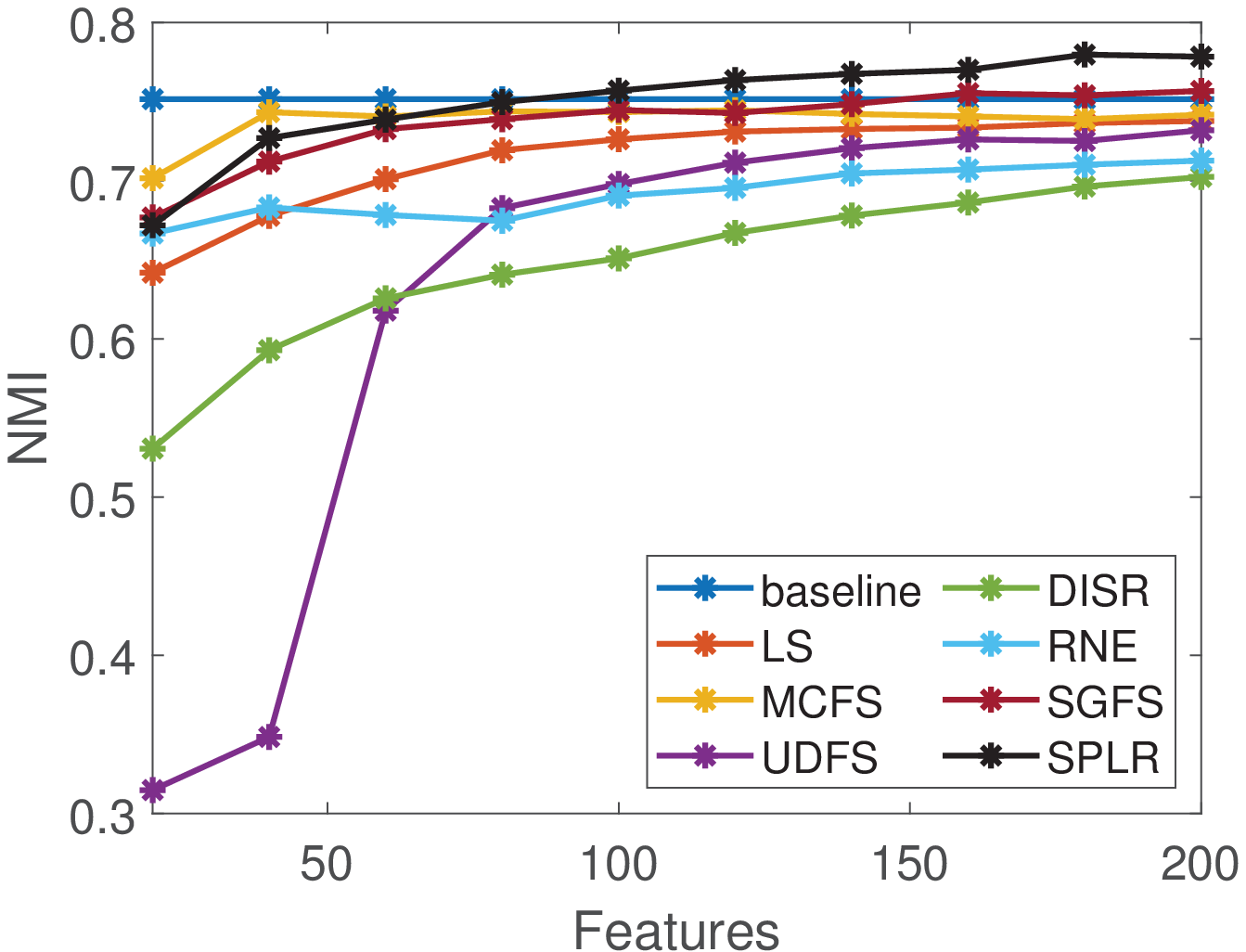}
\end{minipage}
}
\subfigure[ORL]{
\begin{minipage}{0.31\linewidth}
\centering
  \includegraphics[width=\textwidth]{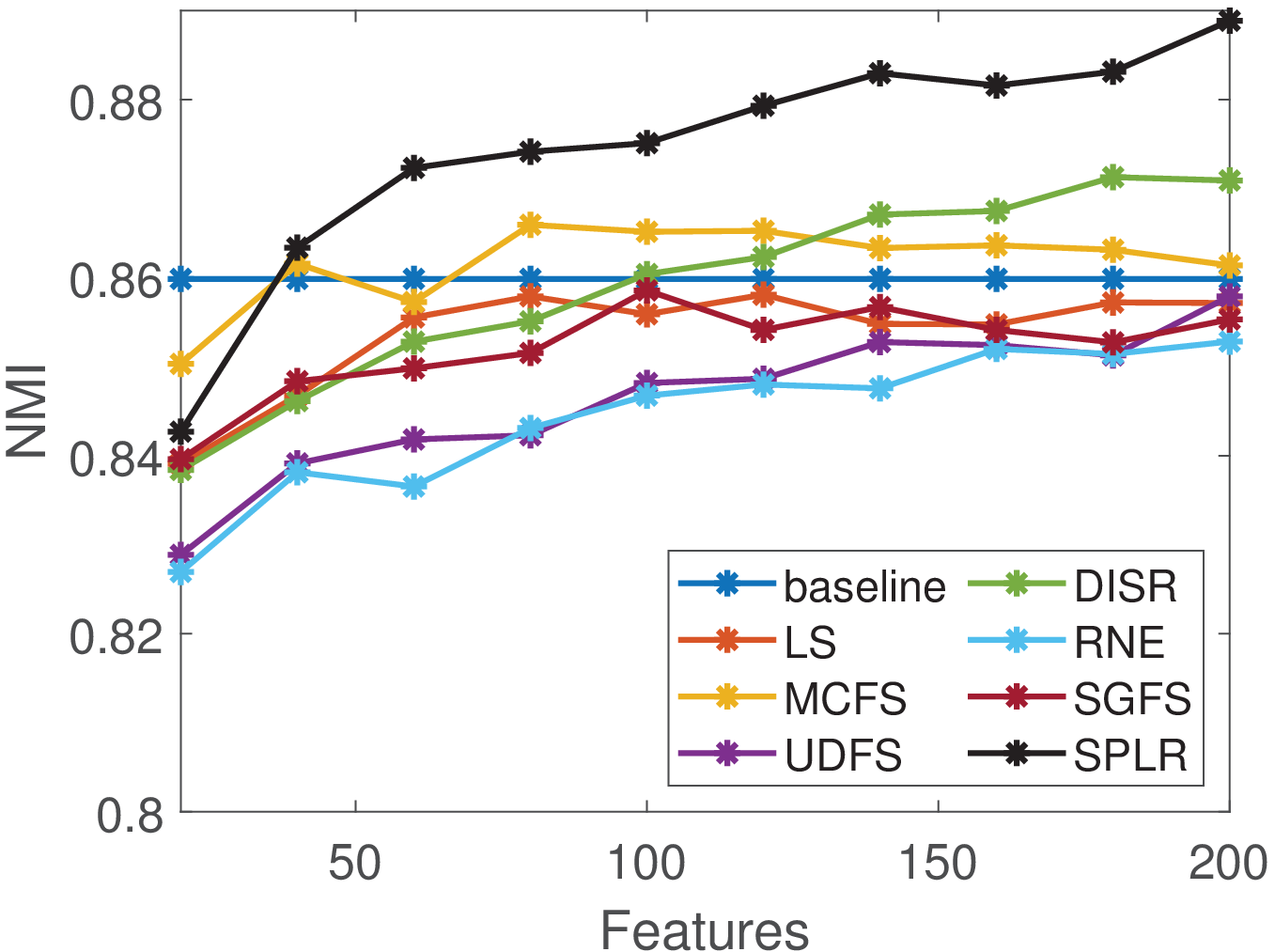}
\end{minipage}
}
%
\subfigure[Colon]{
\begin{minipage}{0.31\linewidth}
\centering
  \includegraphics[width=\textwidth]{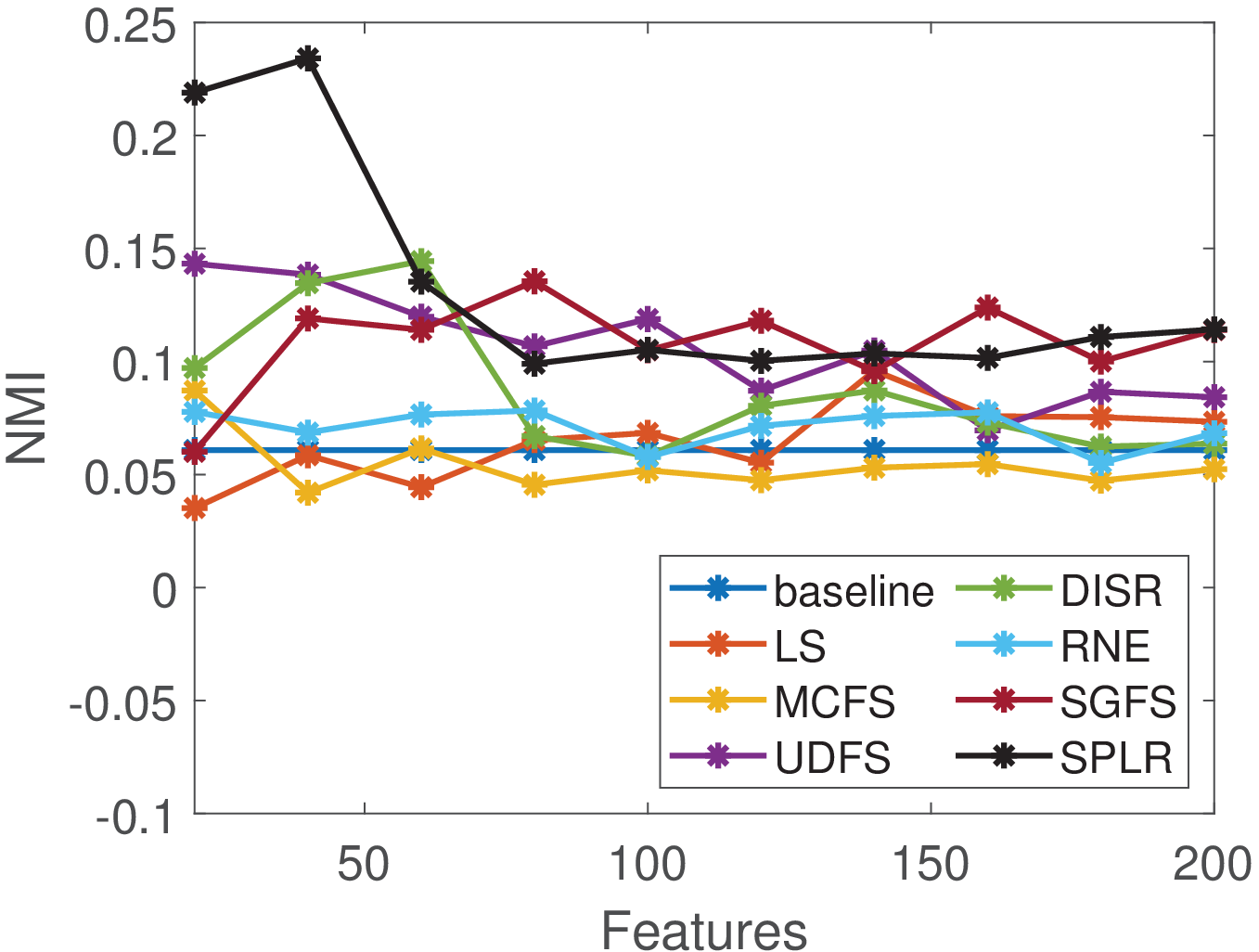}
\end{minipage}
}
\subfigure[warpPIE10P]{
\begin{minipage}{0.31\linewidth}
\centering
  \includegraphics[width=\textwidth]{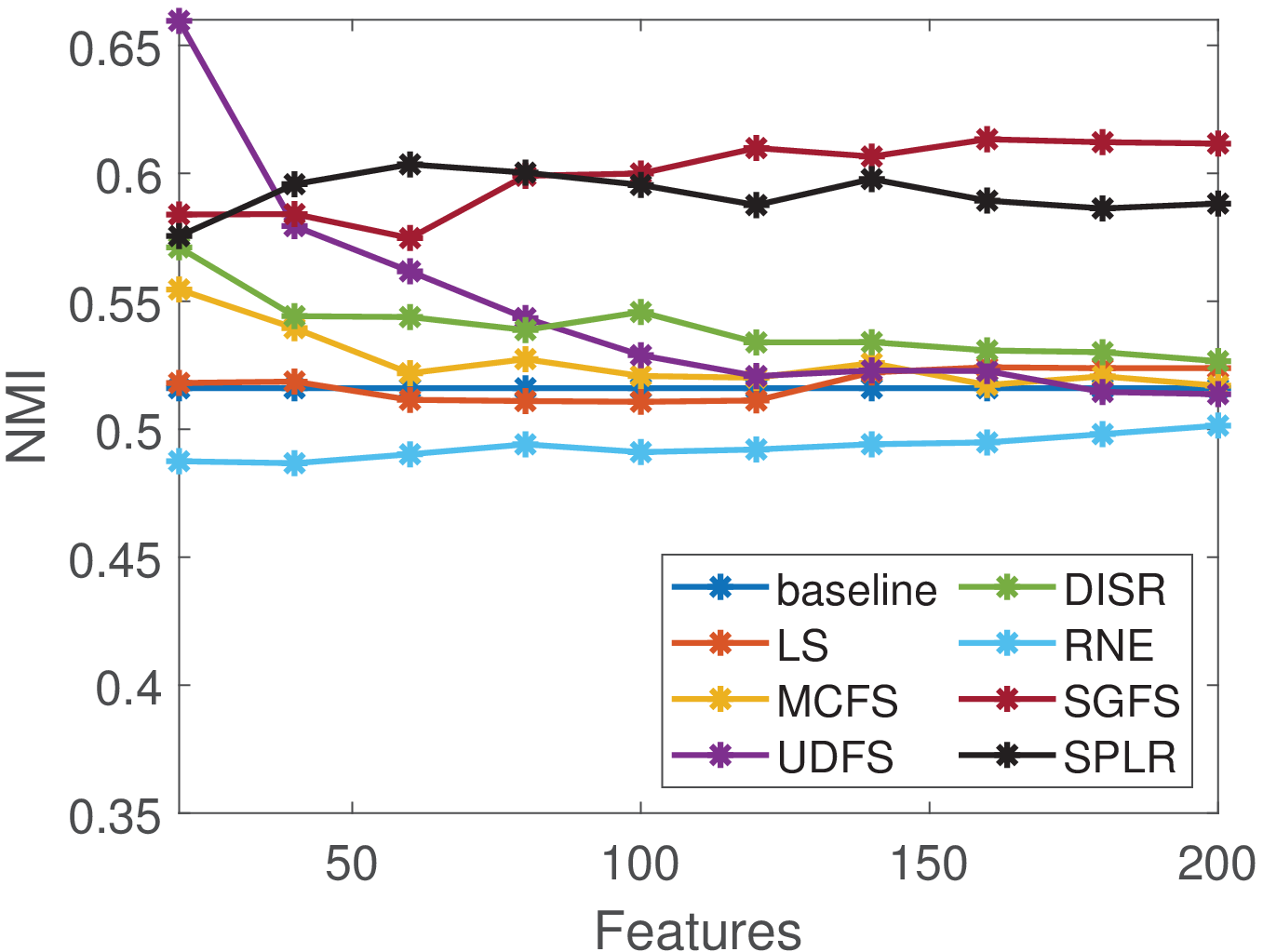}
\end{minipage}
}
\subfigure[CLIOMA]{
\begin{minipage}{0.31\linewidth}
\centering
  \includegraphics[width=\textwidth]{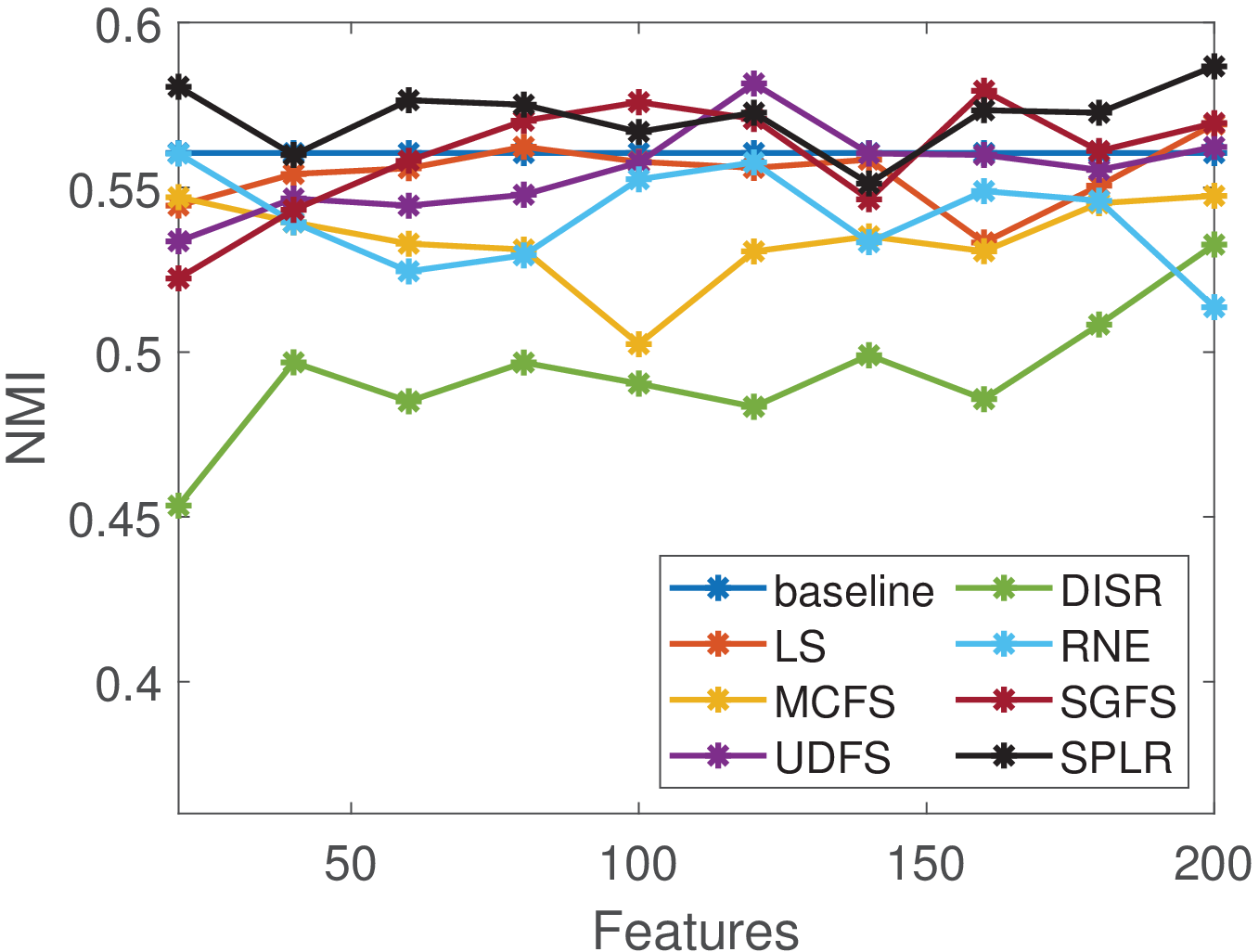}
\end{minipage}
}
\centering
\caption{The clustering results with different number of selected features in terms of NMI}\label{fig::picture2}
\end{figure*}

Similarly, Fig. \ref{fig::picture2} elucidates that SPLR achieves the best result on Madelon in all cases. On datasets including USPS, COIL20, ORL and GLIOMA, SPLR is superior to other algorithms under most circumstances. On the remaining four datasets, there are algorithms that perform better than SPLR in some cases. As a consequence, it can be concluded that SPLR has advantages over other approaches.

\subsection{Convergence analysis}
In this subsection, the convergence of SPLR on different datasets is empirically discussed, which has been proved in theory in Section~\ref{sec:Method}. The objective function is the same as Eq. \eqref{eqn::eq10}, and the variation of the corresponding values with the increase of number of iterations is depicted in Fig. \ref{fig::picture3}.

\begin{figure*}[htbp]  
\centering
\subfigure[USPS]{
\begin{minipage}{0.31\linewidth}
\centering
  \includegraphics[width=\textwidth]{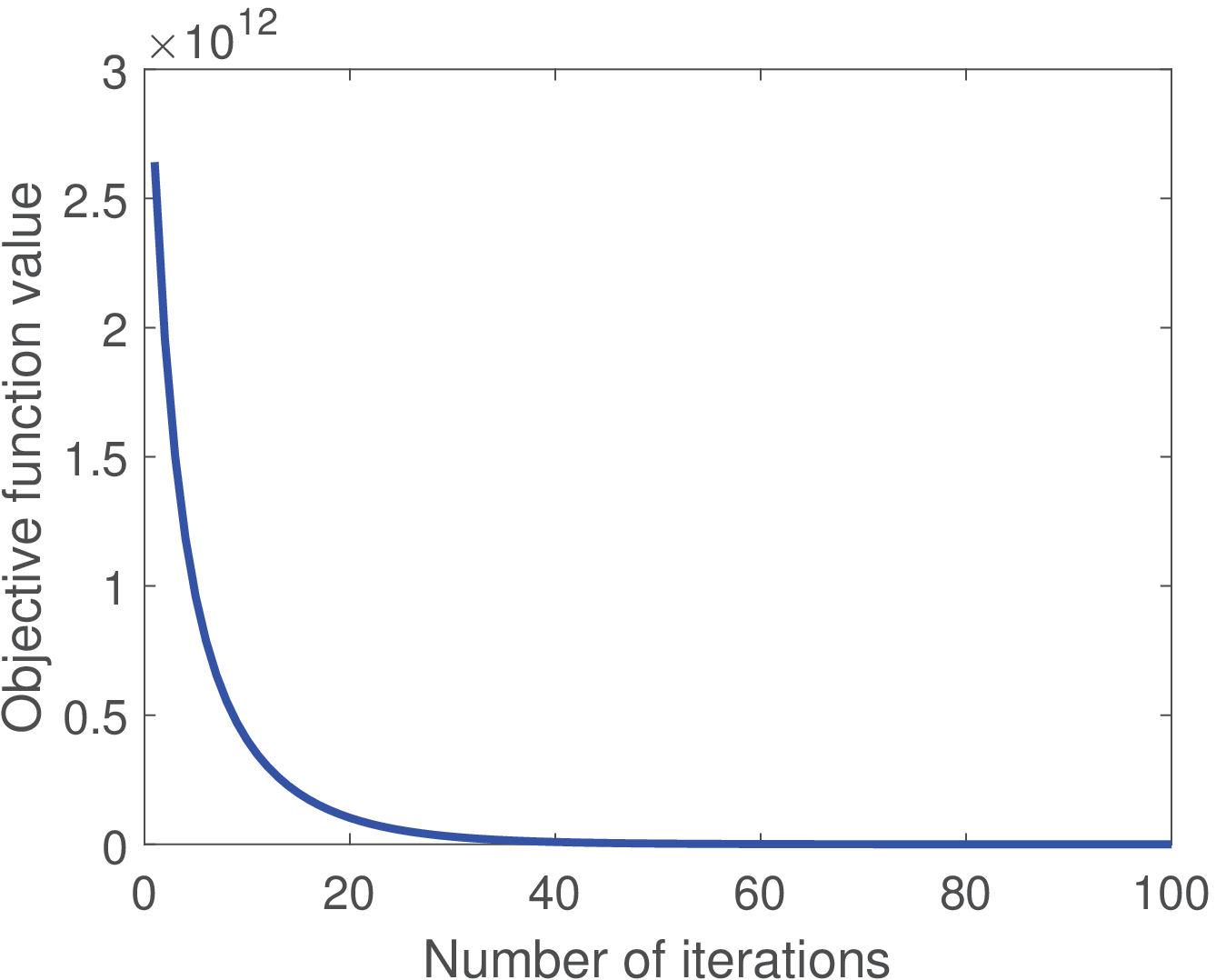}
\end{minipage}
}
\subfigure[Madelon]{
\begin{minipage}{0.31\linewidth}
\centering
  \includegraphics[width=\textwidth]{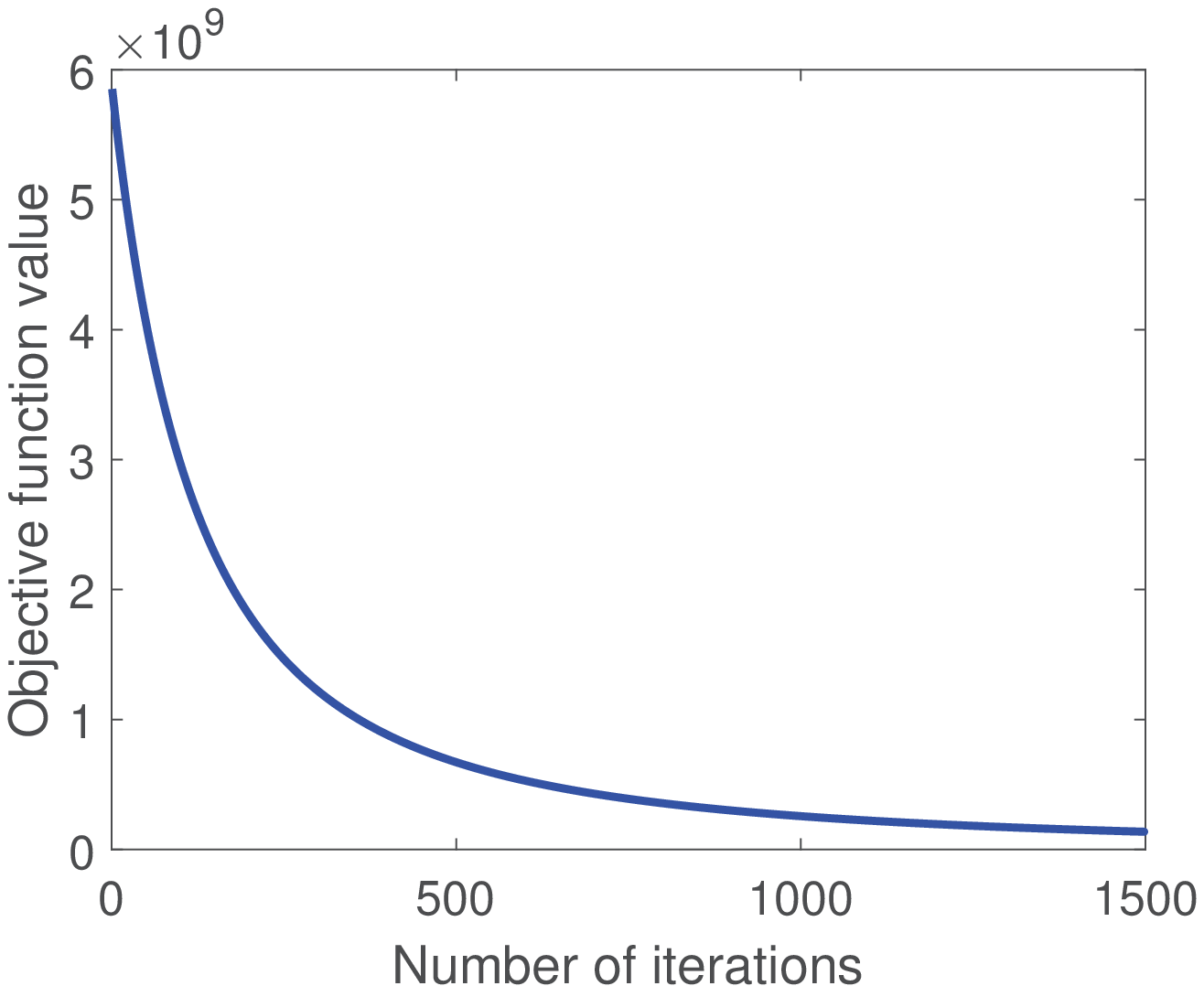}
\end{minipage}
}
\subfigure[Isolet]{
\begin{minipage}{0.31\linewidth}
\centering
  \includegraphics[width=\textwidth]{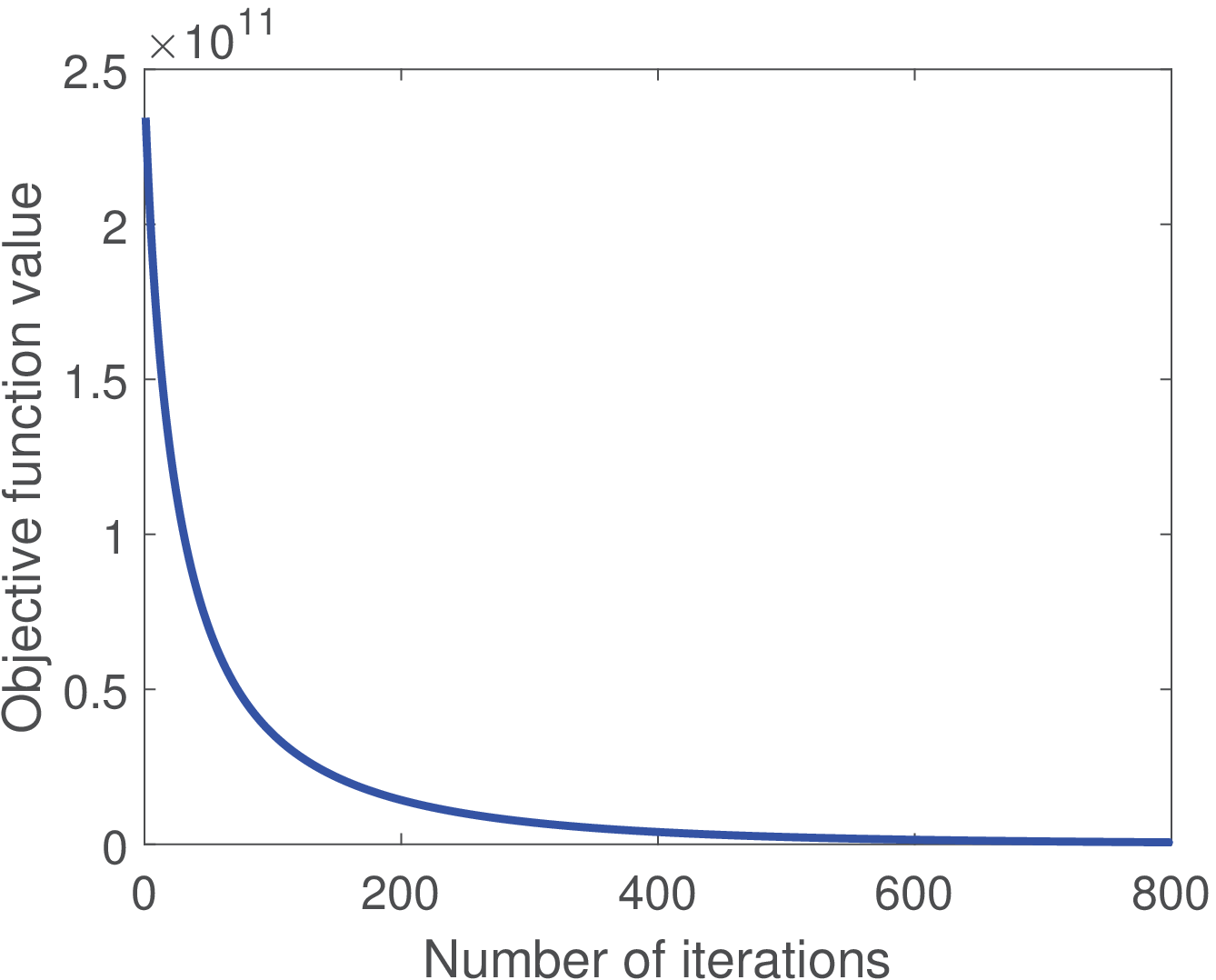}
\end{minipage}
}
%
\subfigure[Umist]{
\begin{minipage}{0.31\linewidth}
\centering
  \includegraphics[width=\textwidth]{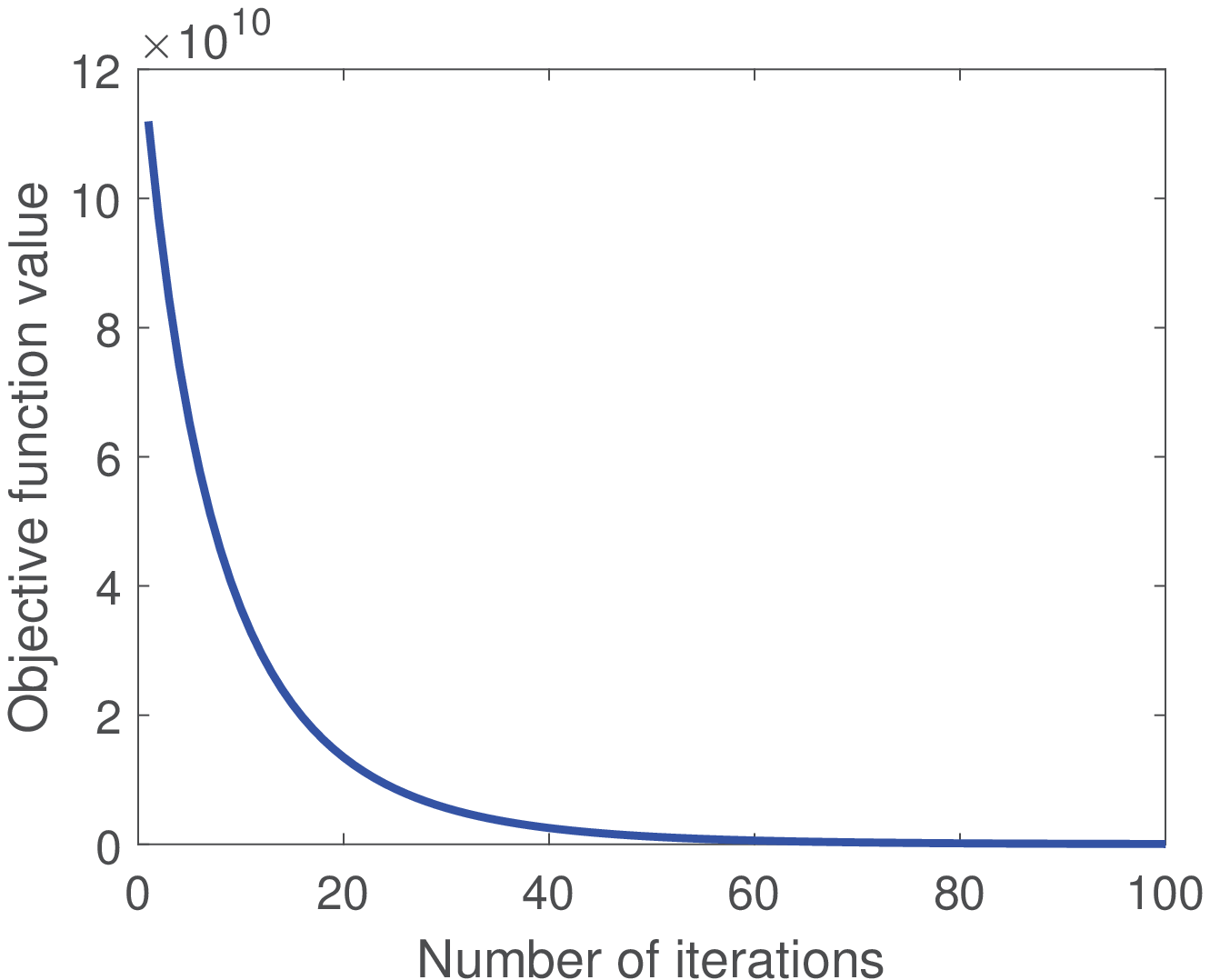}
\end{minipage}
}
\subfigure[COIL20]{
\begin{minipage}{0.31\linewidth}
\centering
  \includegraphics[width=\textwidth]{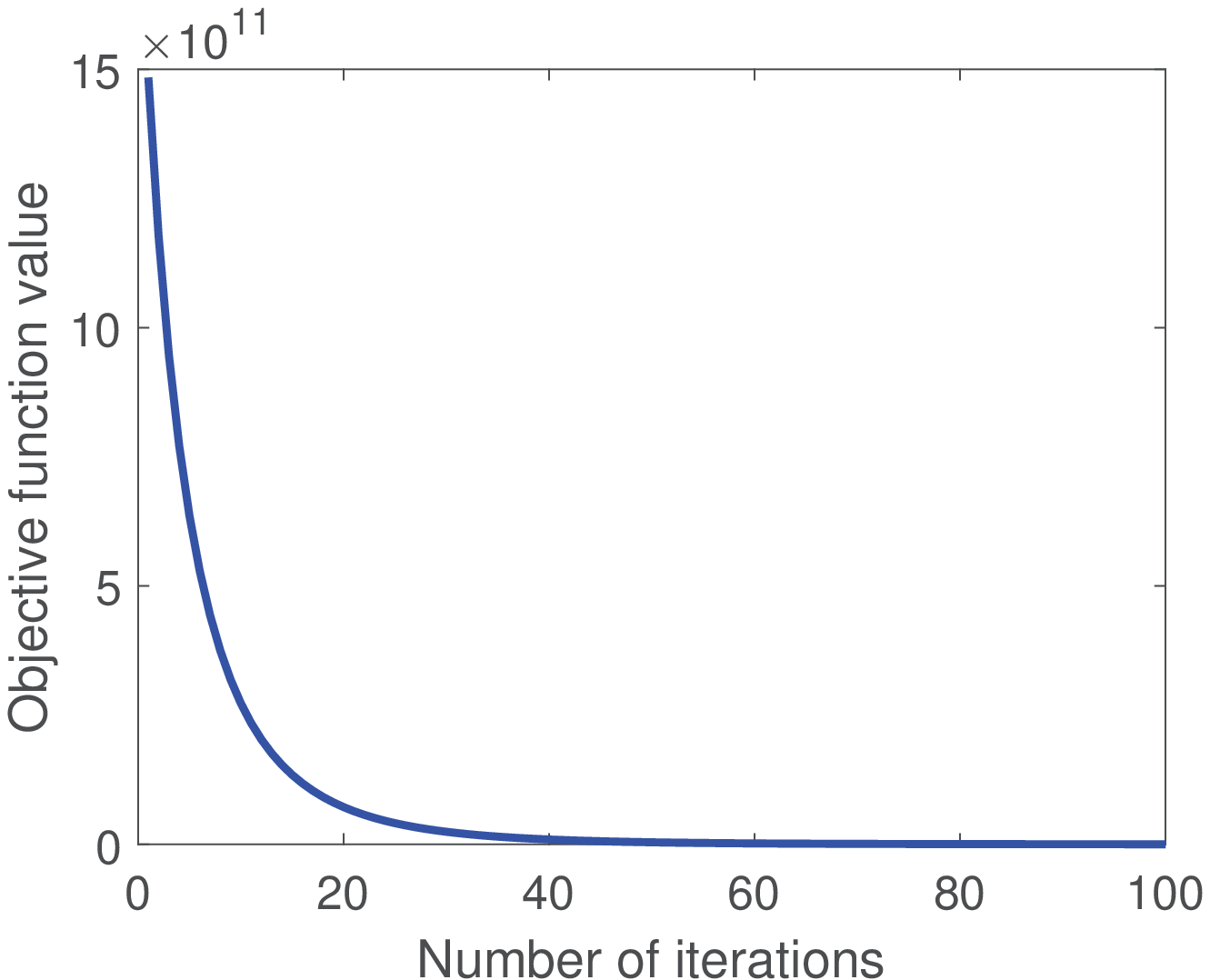}
\end{minipage}
}
\subfigure[ORL]{
\begin{minipage}{0.31\linewidth}
\centering
  \includegraphics[width=\textwidth]{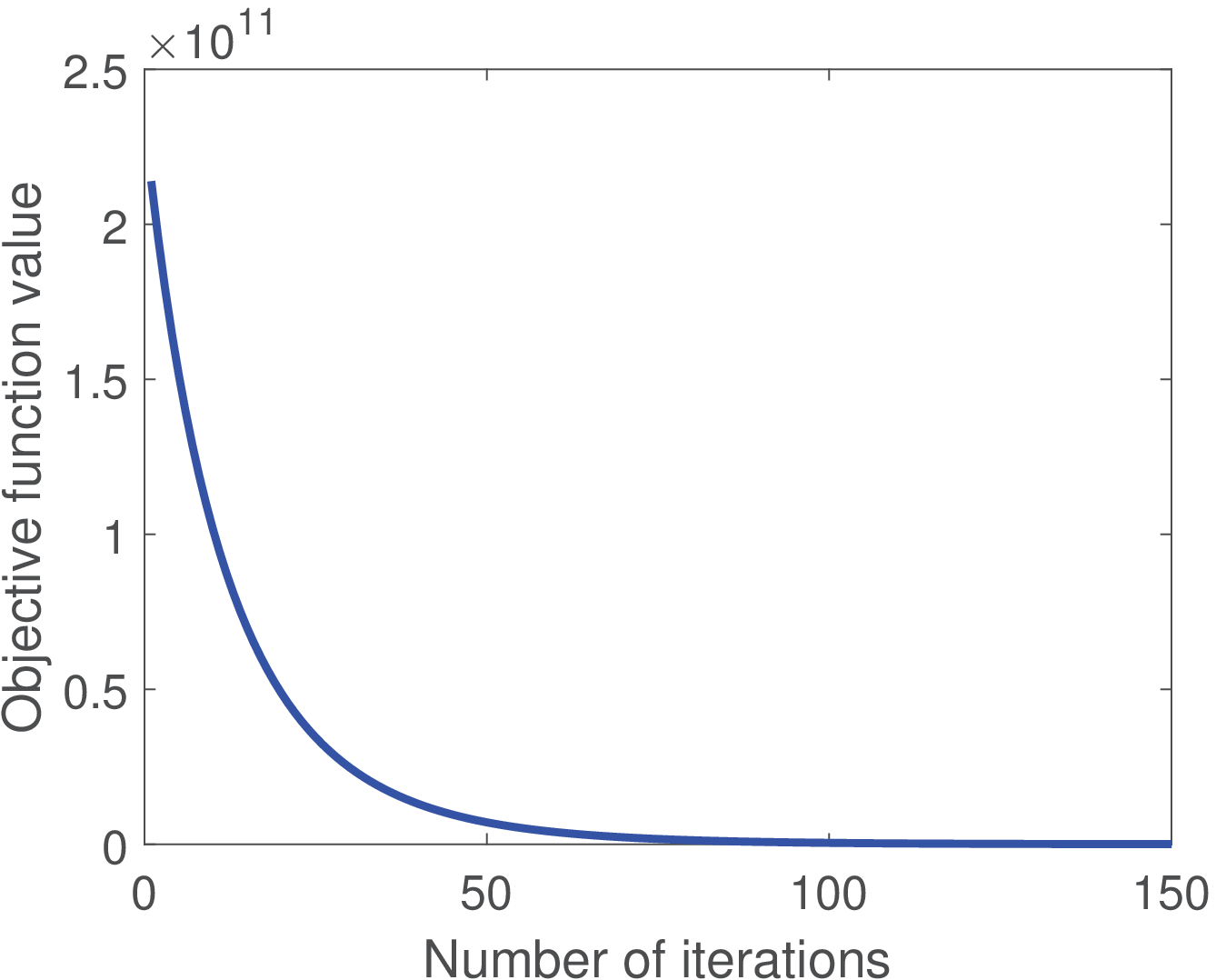}
\end{minipage}
}
%
\subfigure[Colon]{
\begin{minipage}{0.31\linewidth}
\centering
  \includegraphics[width=\textwidth]{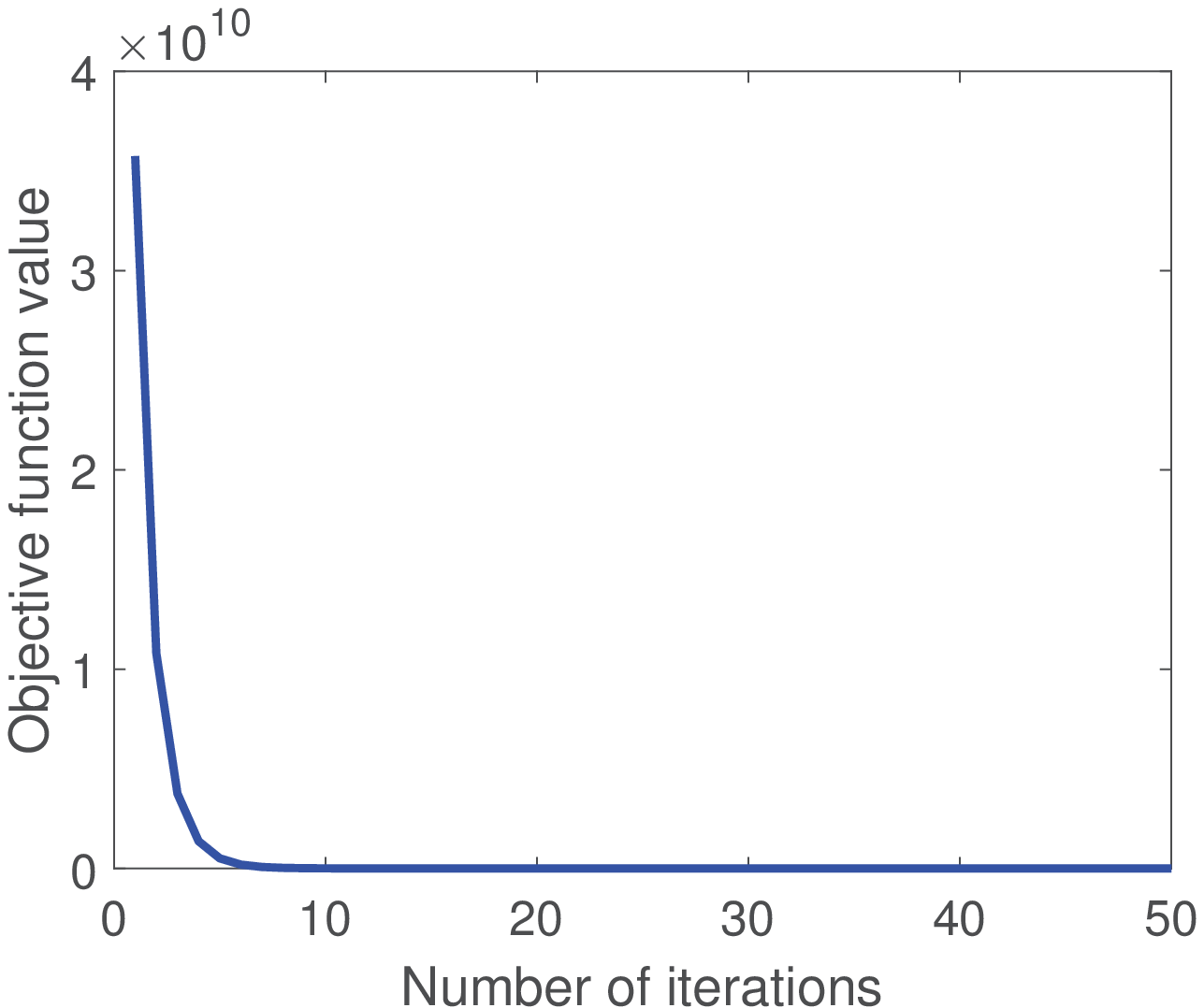}
\end{minipage}
}
\subfigure[warpPIE10P]{
\begin{minipage}{0.31\linewidth}
\centering
  \includegraphics[width=\textwidth]{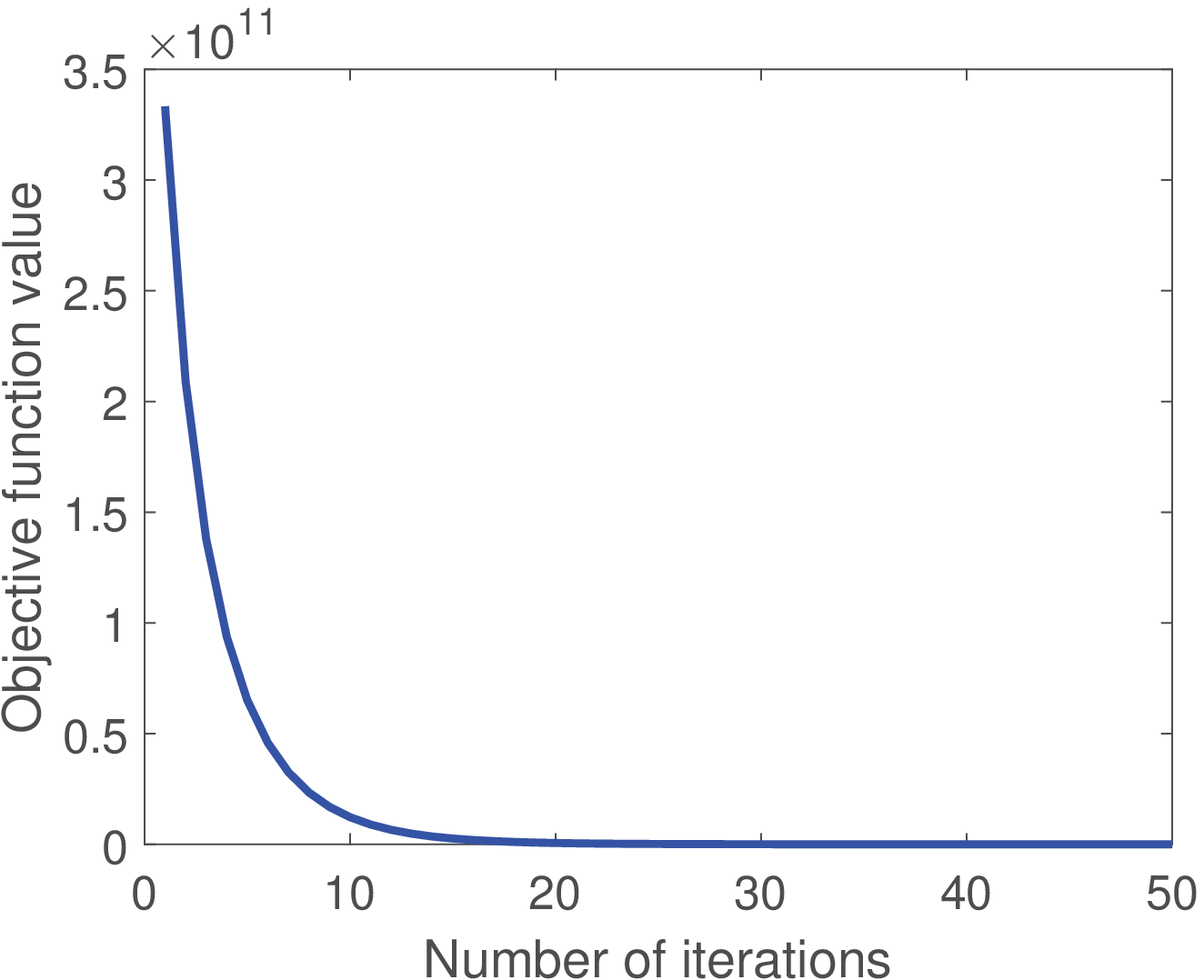}
\end{minipage}
}
\subfigure[CLIOMA]{
\begin{minipage}{0.31\linewidth}
\centering
  \includegraphics[width=\textwidth]{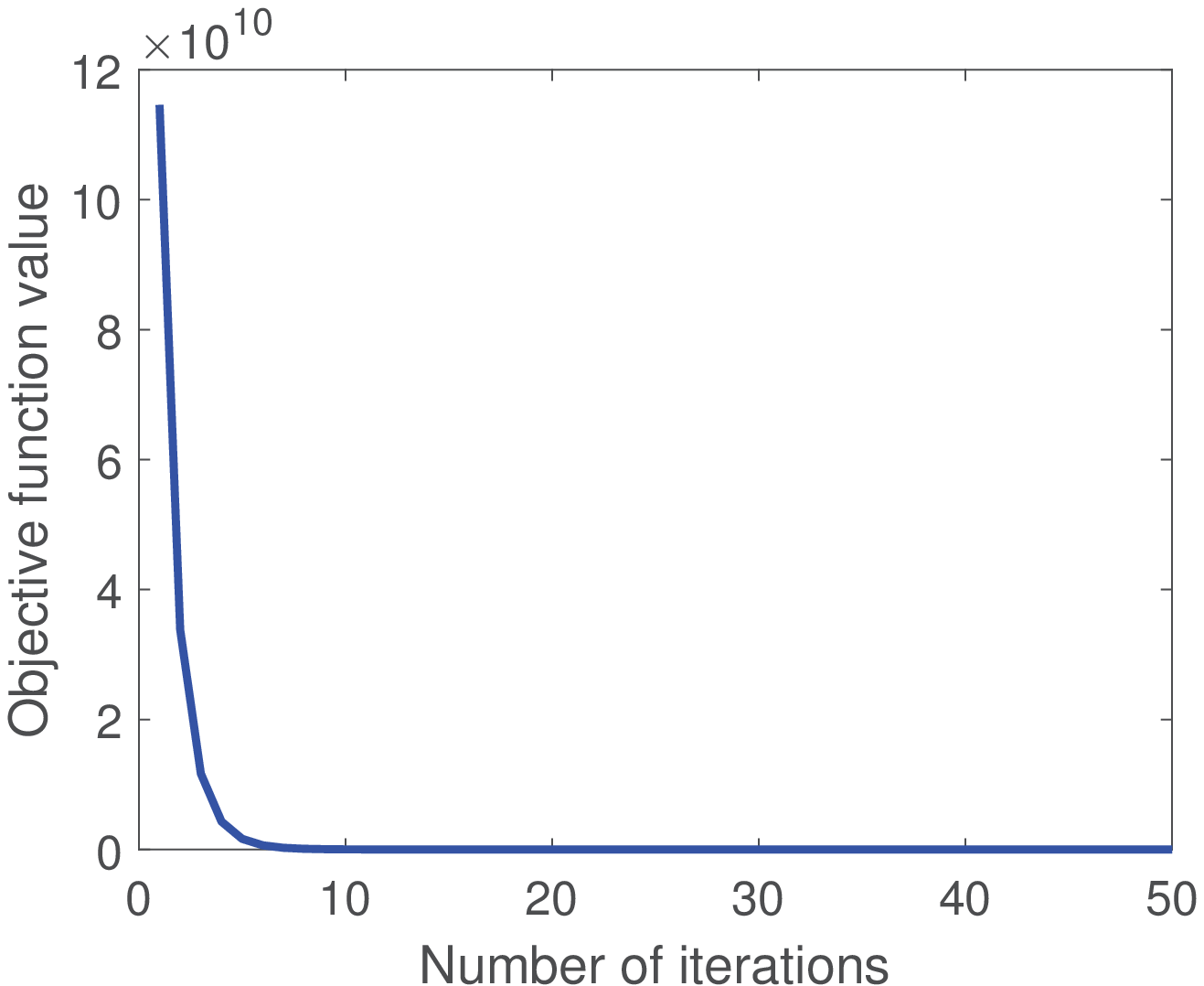}
\end{minipage}
}
\centering
\caption{The convergence curves of SPLR ($\alpha  = 1$, ${\lambda _1} = 1$, ${\lambda _{\rm{2}}} = 1$, ${\lambda _{\rm{3}}} = 1$)}\label{fig::picture3}
\end{figure*}
\FloatBarrier

 As depicted in Fig. \ref{fig::picture3}, the objective function value decreases monotonically under different circumstances. Precisely speaking, most of the datasets including USPS, Umist, COIL20, ORL, Colon, warpPIE10 and GLIOMA converge within 100 iterations. Isolet converges within 600 iterations and Madelon converges within 1500 iterations. Then, the maximum iteration number is set as 1500 in experiments.
\subsection{Parameter sensitivity analysis}
For SPLR, there are five parameters that need to be investigated, namely, $\alpha $, ${\lambda _1}$, ${\lambda _{\rm{2}}}$, ${\lambda _{\rm{3}}}$ and $\gamma $. Following \cite{art35}, $\gamma $ is fixed to 2. The remaining parameters are searched from $\left\{ {{{10}^{ - 3}},\;{{10}^{ - 2}},\;{{10}^{ - 1}},\;1,\;{\rm{1}}{{\rm{0}}^1},\;{\rm{1}}{{\rm{0}}^2},\;{\rm{1}}{{\rm{0}}^3}} \right\}$. Seeing that the space is limited, only the performance on six datasets including USPS, Madelon, Isolet, Colon, warpPIE10P and GLIOMA is reported in terms of ACC in Figs. \ref{fig::picture4}-\ref{fig::picture7}.
\begin{figure*}[htbp]  
\centering
\subfigure[USPS]{
\begin{minipage}{0.31\linewidth}
\centering
  \includegraphics[width=\textwidth]{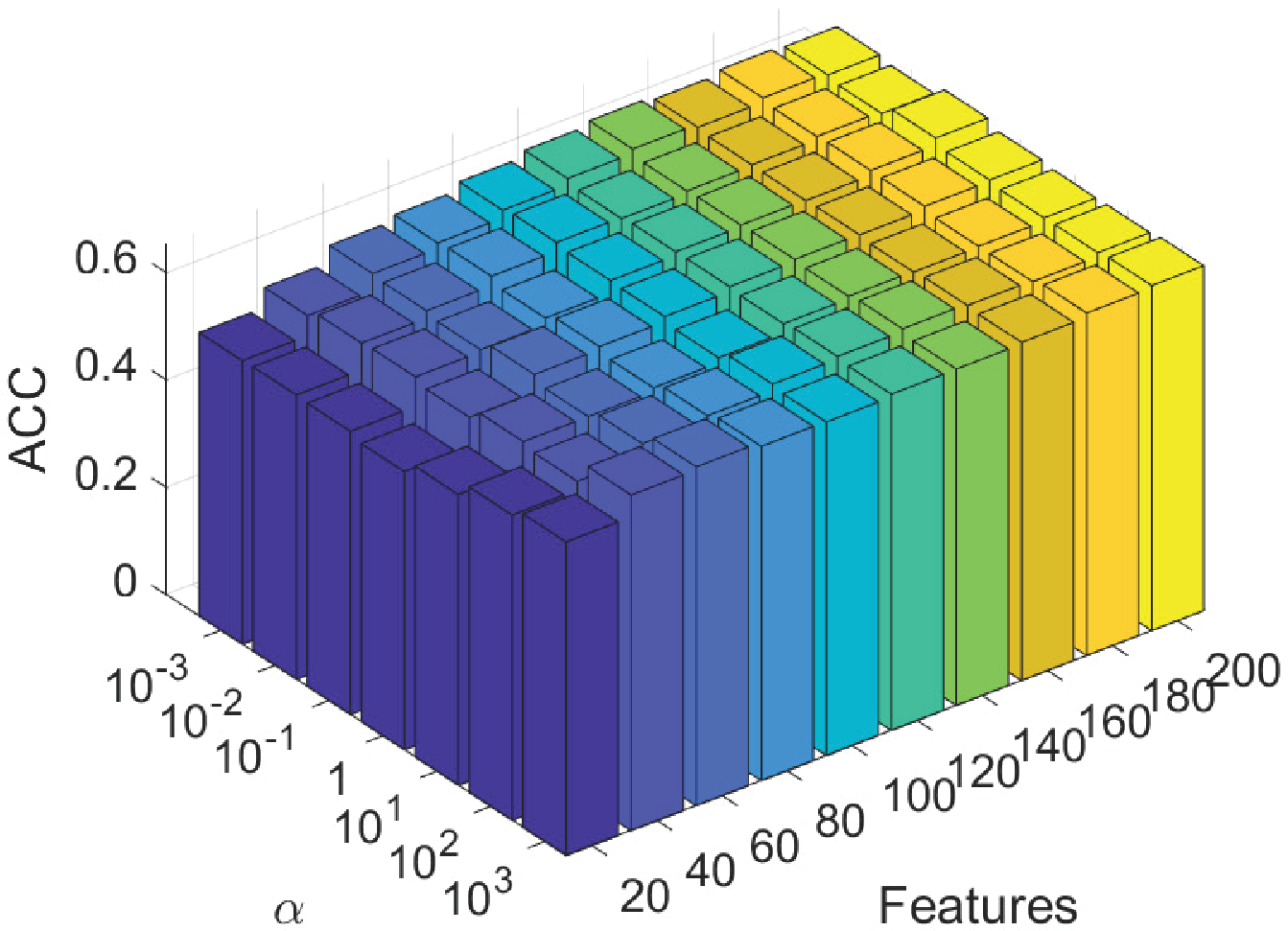}
\end{minipage}
}
\subfigure[Madelon]{
\begin{minipage}{0.31\linewidth}
\centering
  \includegraphics[width=\textwidth]{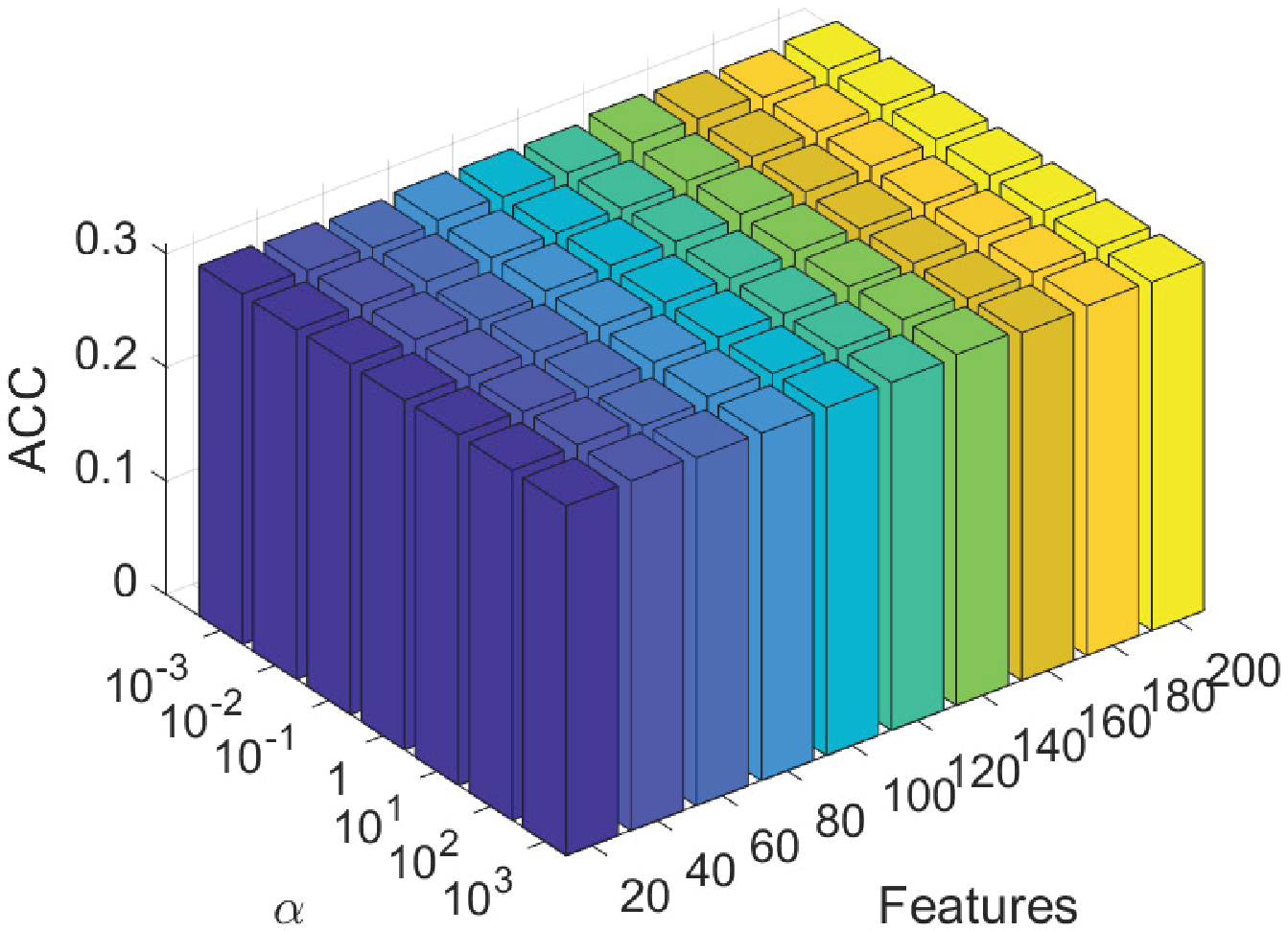}
\end{minipage}
}
\subfigure[Isolet]{
\begin{minipage}{0.31\linewidth}
\centering
  \includegraphics[width=\textwidth]{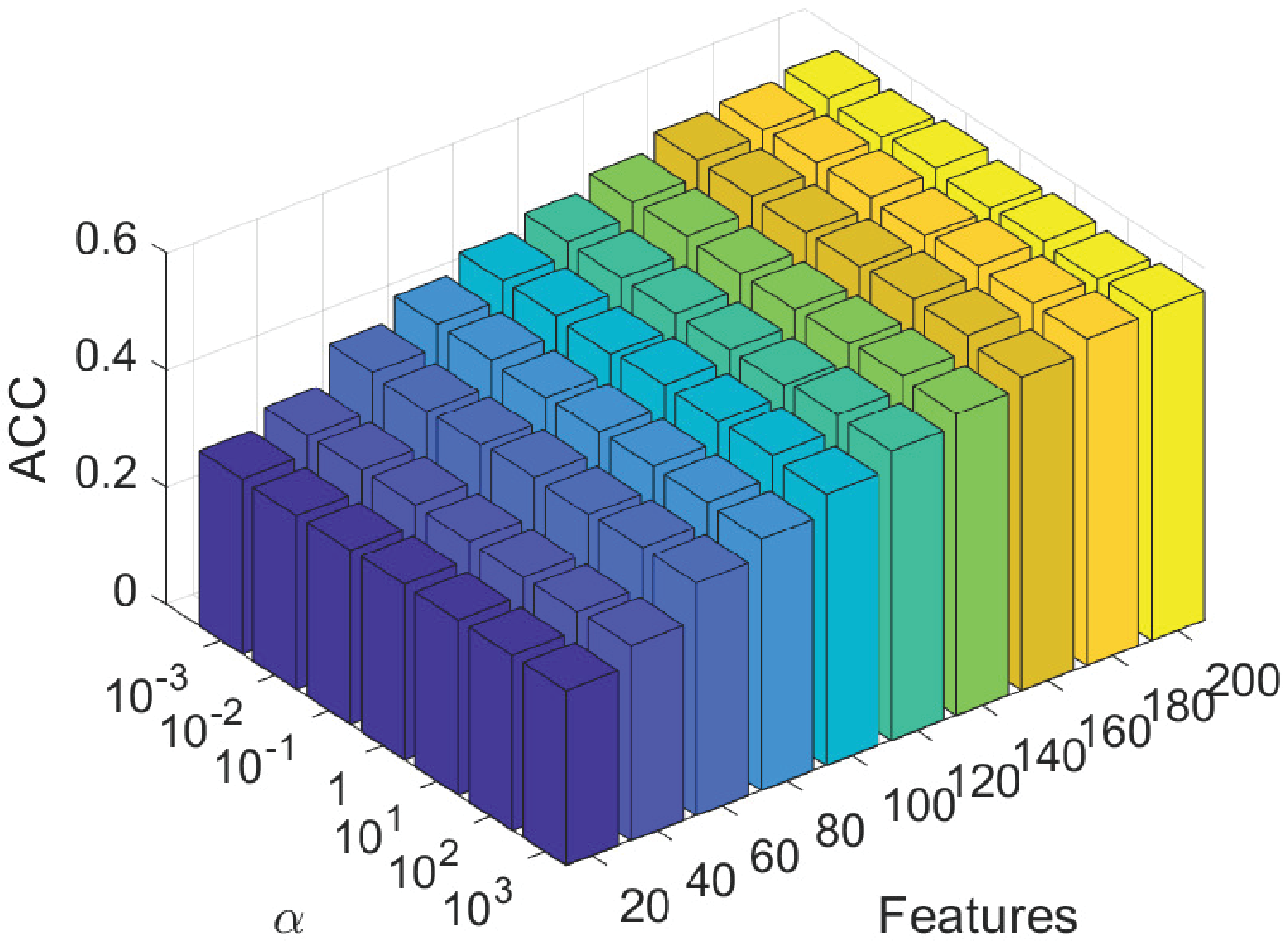}
\end{minipage}
}
%
\subfigure[Colon]{
\begin{minipage}{0.31\linewidth}
\centering
  \includegraphics[width=\textwidth]{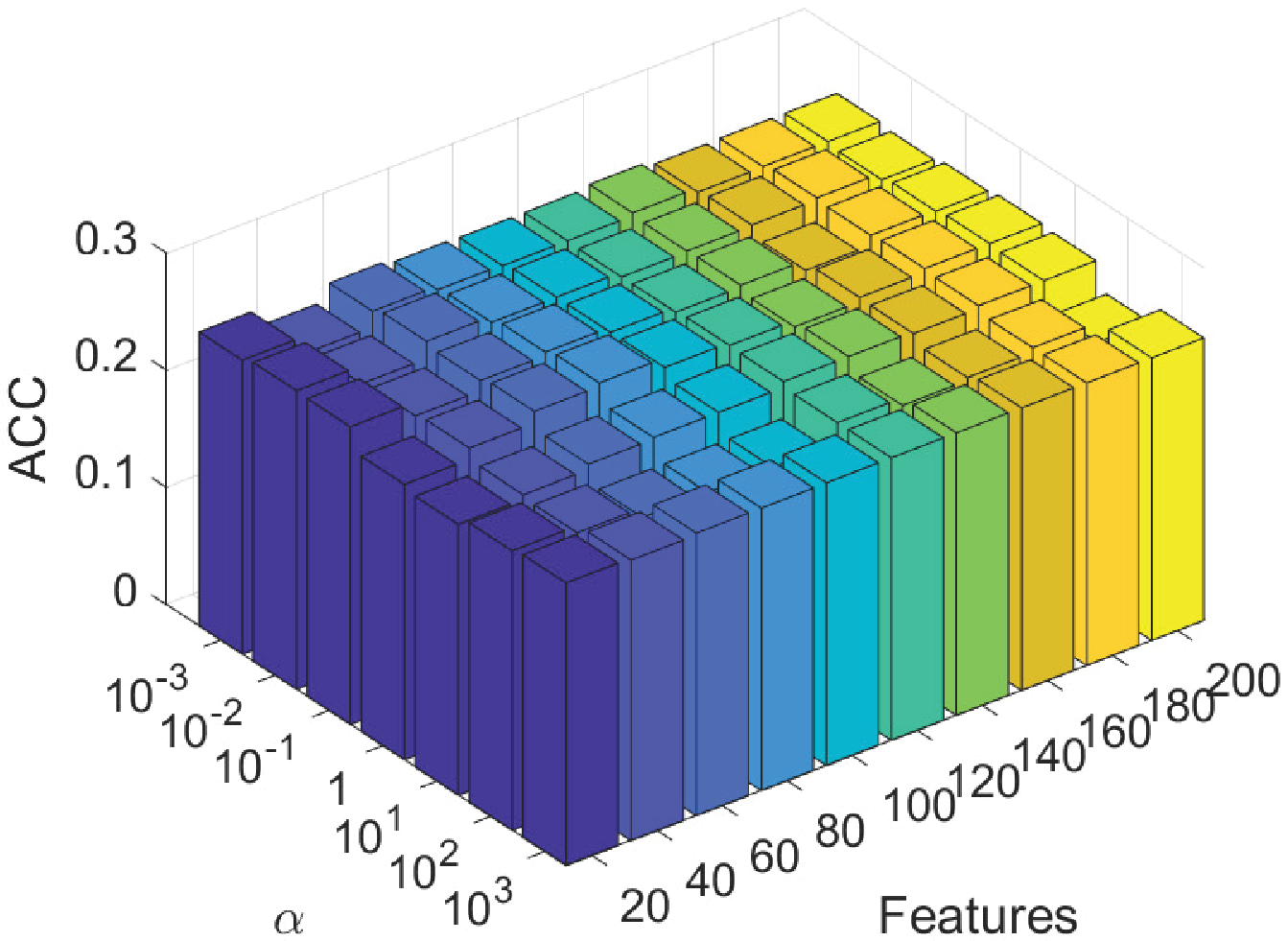}
\end{minipage}
}
\subfigure[warpPIE10P]{
\begin{minipage}{0.31\linewidth}
\centering
  \includegraphics[width=\textwidth]{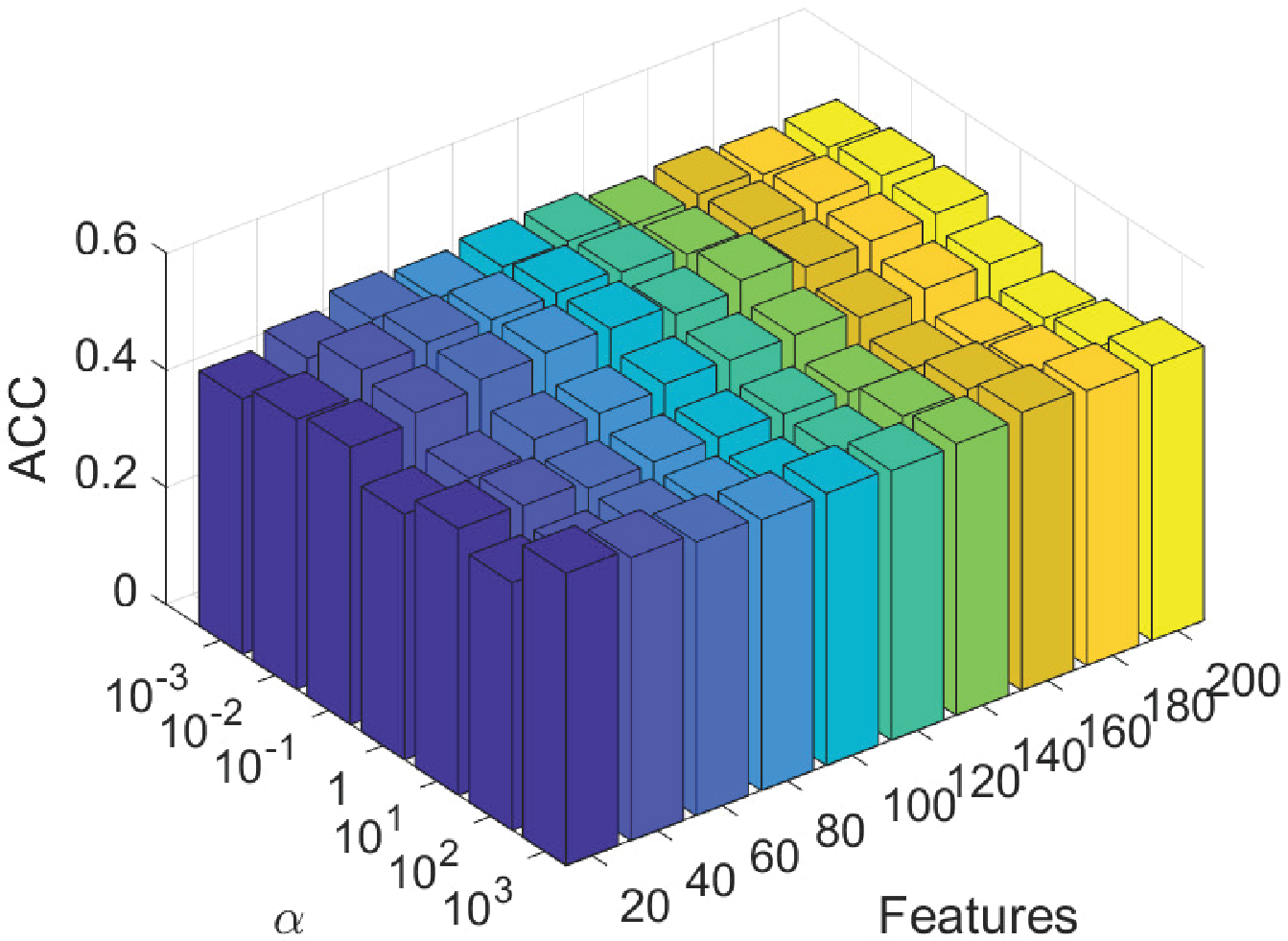}
\end{minipage}
}
\subfigure[GLIOMA]{
\begin{minipage}{0.31\linewidth}
\centering
  \includegraphics[width=\textwidth]{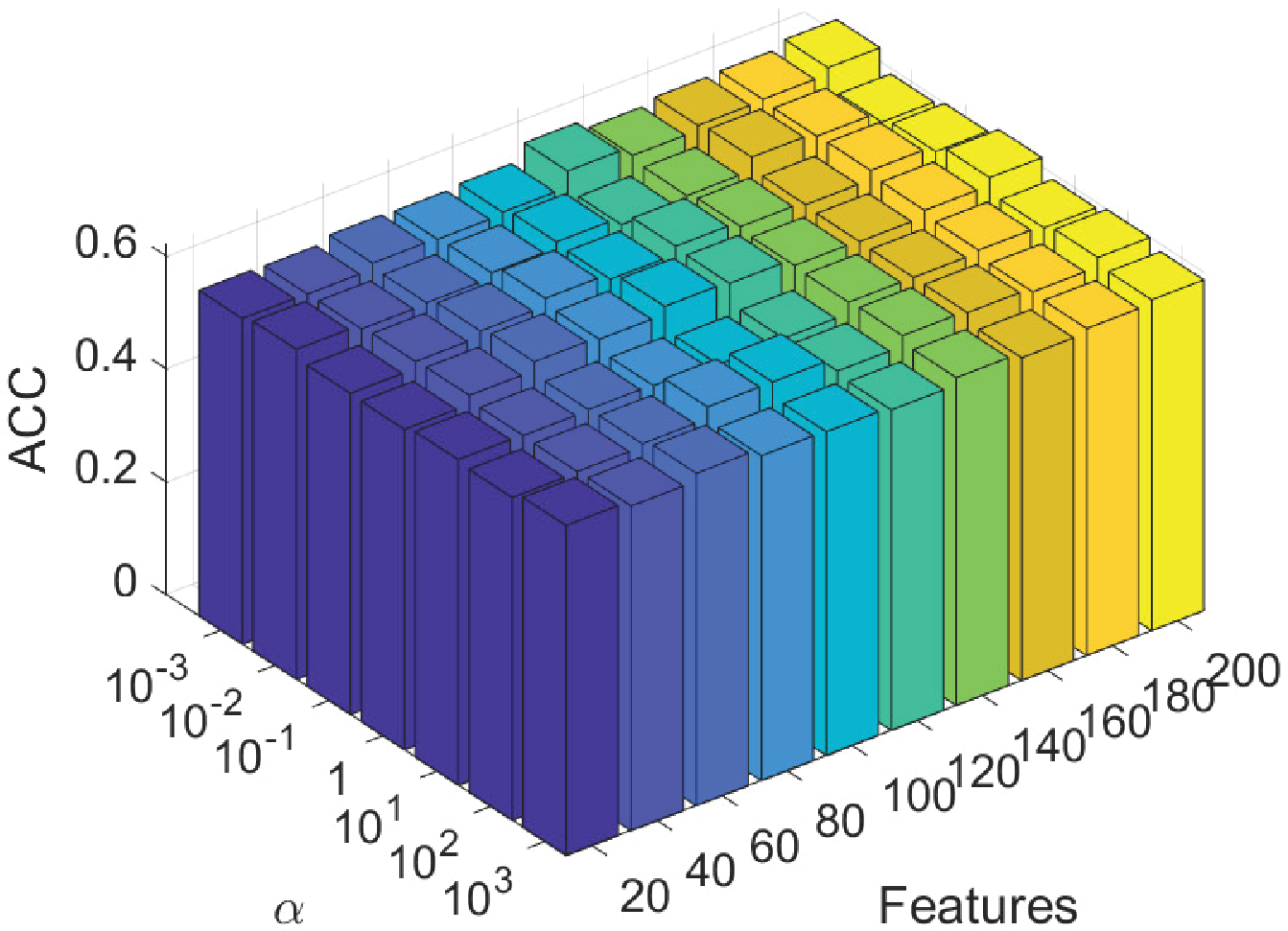}
\end{minipage}
}
\FloatBarrier

\centering
\caption{Parameter sensitivity with respect to $\alpha $ in terms of ACC (${\lambda _1} = 1$, ${\lambda _{\rm{2}}} = 1$, ${\lambda _{\rm{3}}} = 1$)}\label{fig::picture4}
\end{figure*}

\begin{figure*}[htbp]  
\centering
\subfigure[USPS]{
\begin{minipage}{0.31\linewidth}
\centering
  \includegraphics[width=\textwidth]{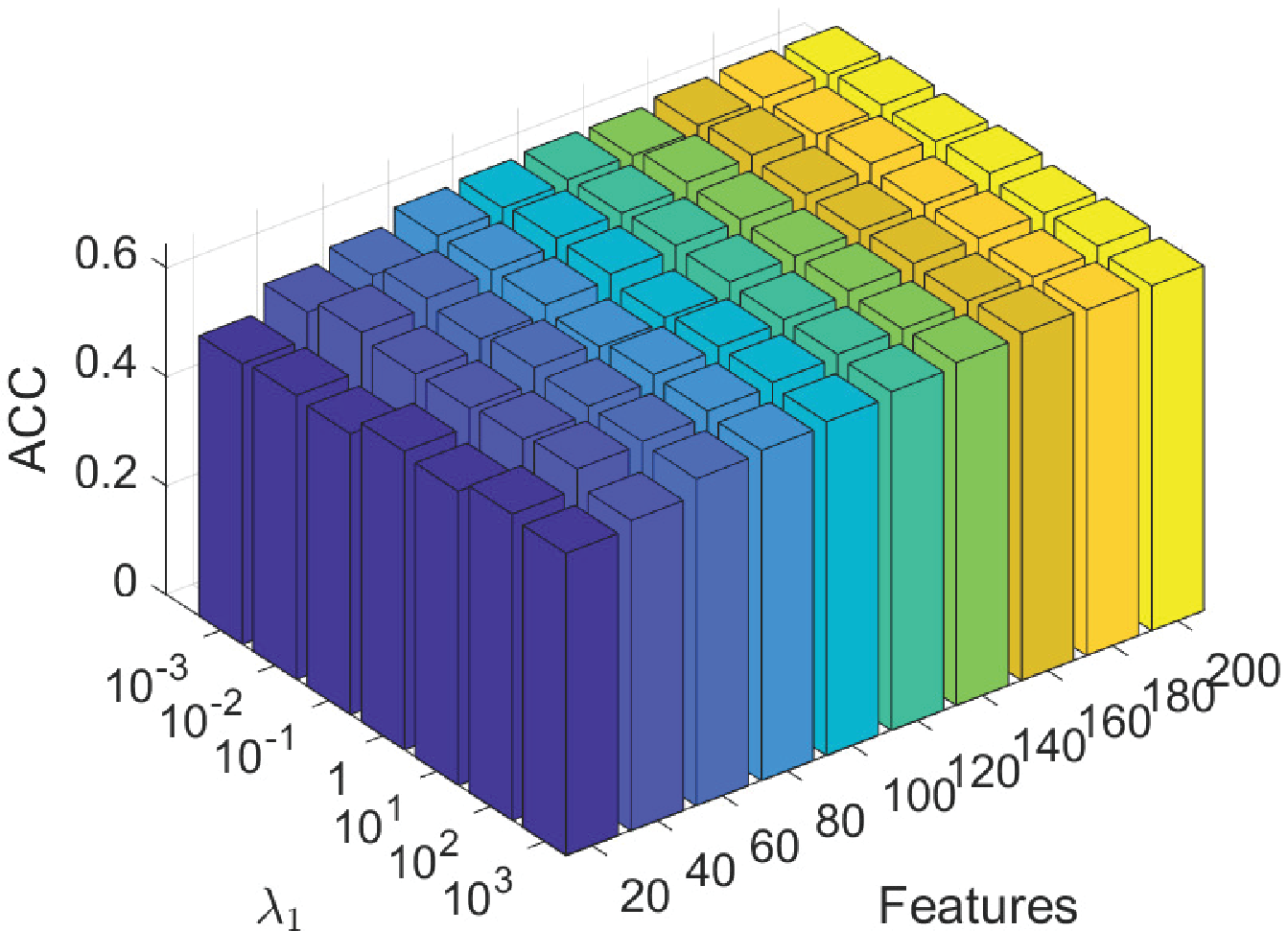}
\end{minipage}
}
\subfigure[Madelon]{
\begin{minipage}{0.31\linewidth}
\centering
  \includegraphics[width=\textwidth]{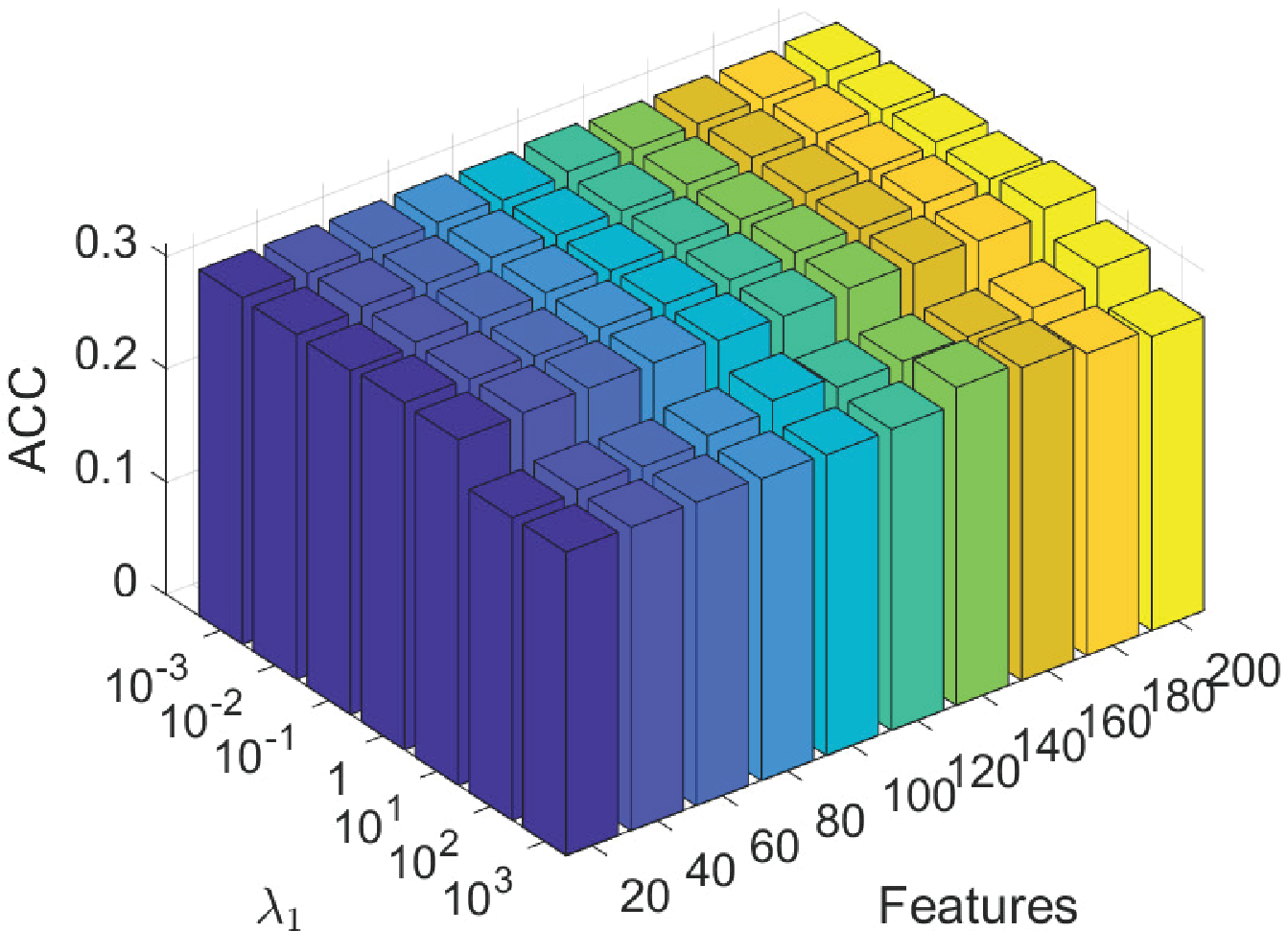}
\end{minipage}
}
\subfigure[Isolet]{
\begin{minipage}{0.31\linewidth}
\centering
  \includegraphics[width=\textwidth]{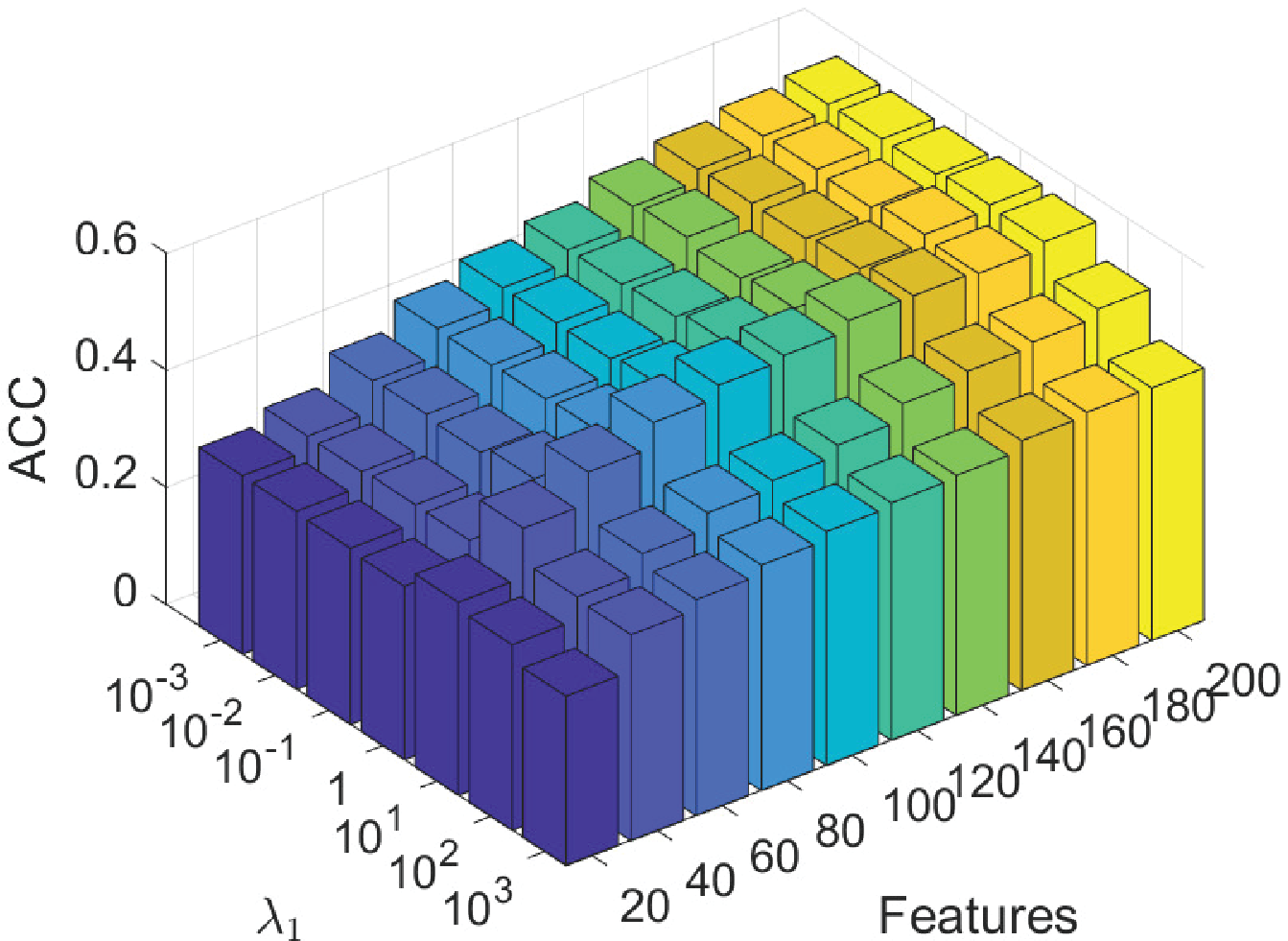}
\end{minipage}
}
\subfigure[Colon]{
\begin{minipage}{0.31\linewidth}
\centering
  \includegraphics[width=\textwidth]{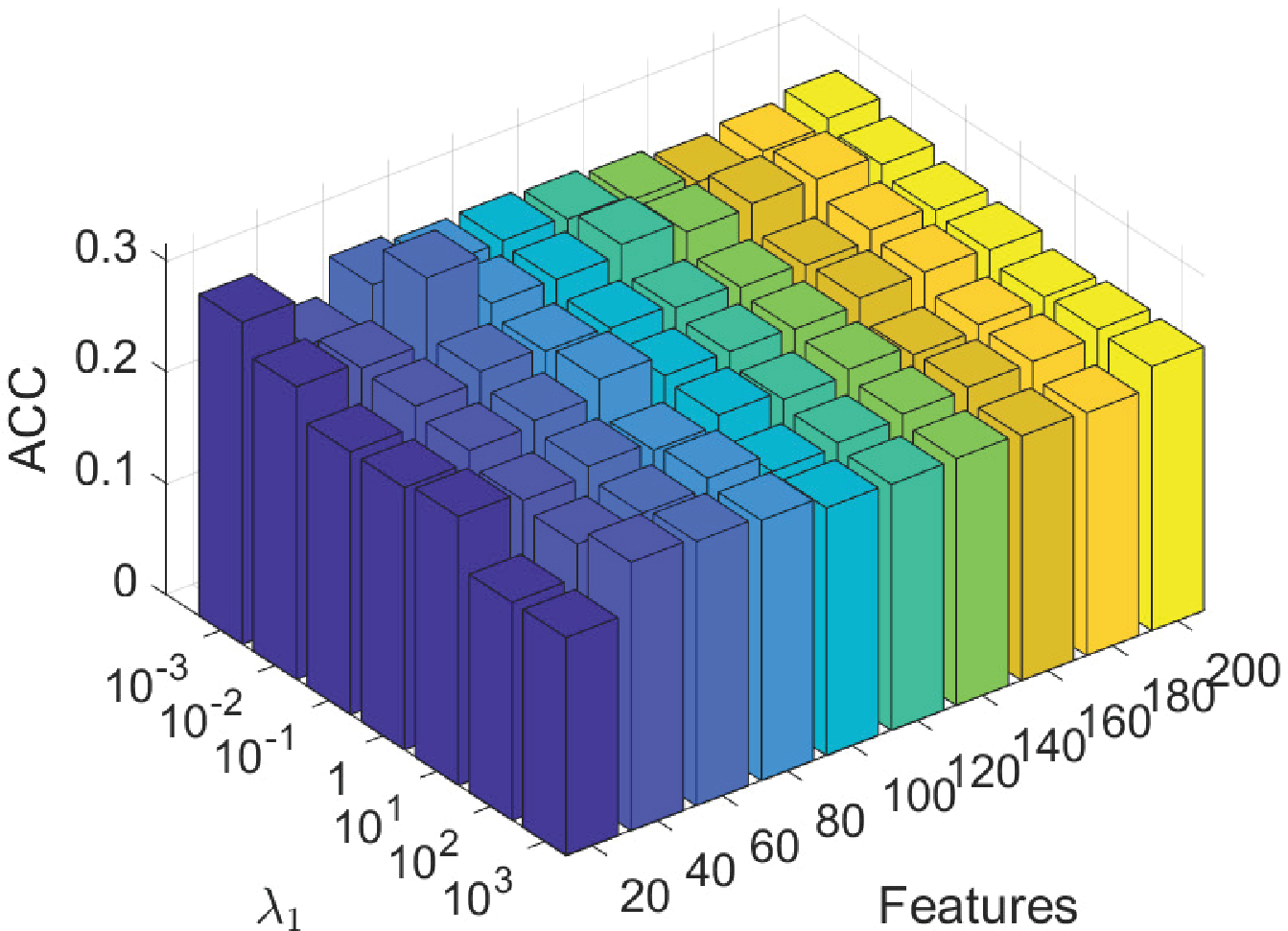}
\end{minipage}
}
\subfigure[warpPIE10P]{
\begin{minipage}{0.31\linewidth}
\centering
  \includegraphics[width=\textwidth]{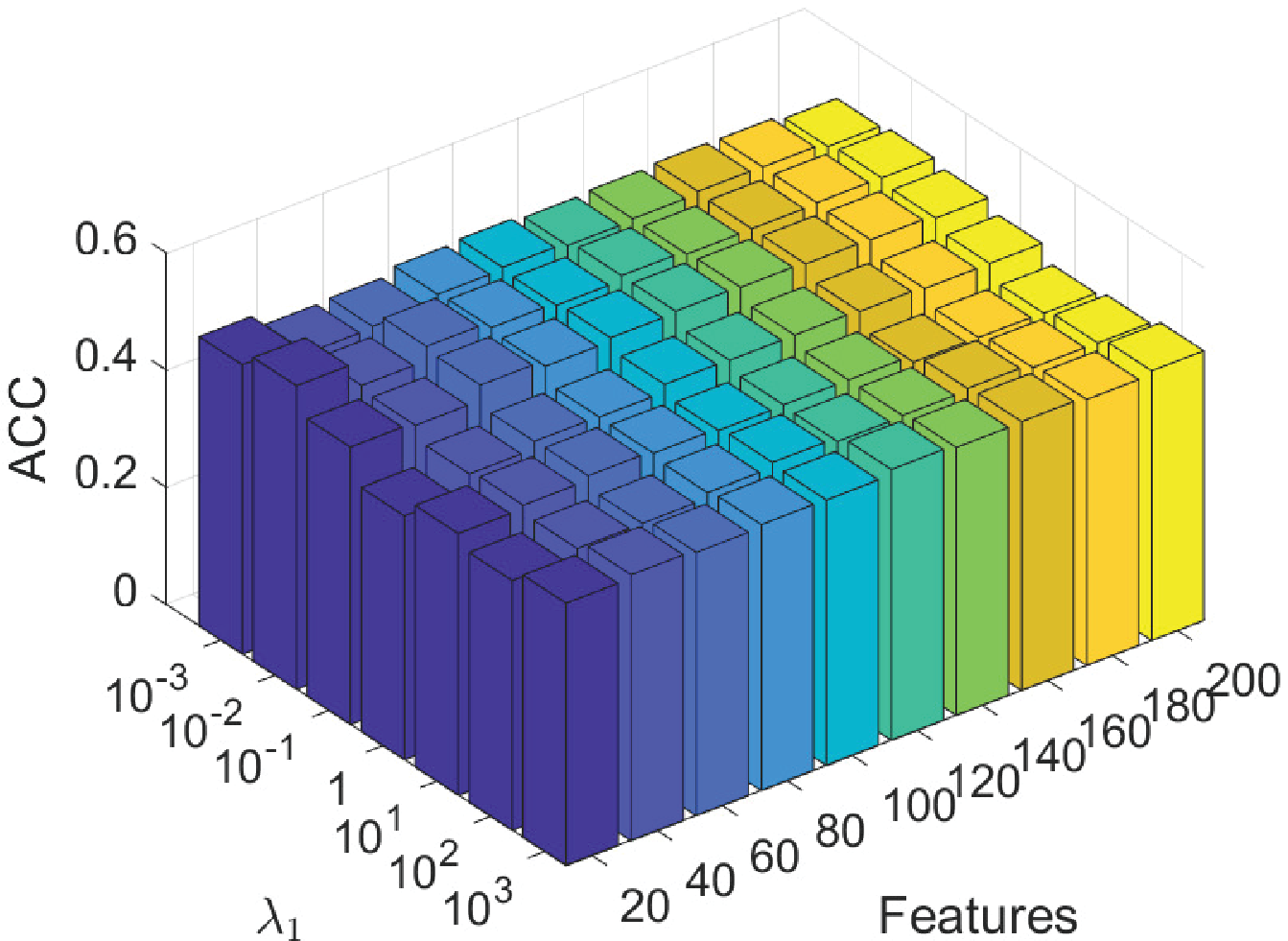}
\end{minipage}
}
\subfigure[GLIOMA]{
\begin{minipage}{0.31\linewidth}
\centering
  \includegraphics[width=\textwidth]{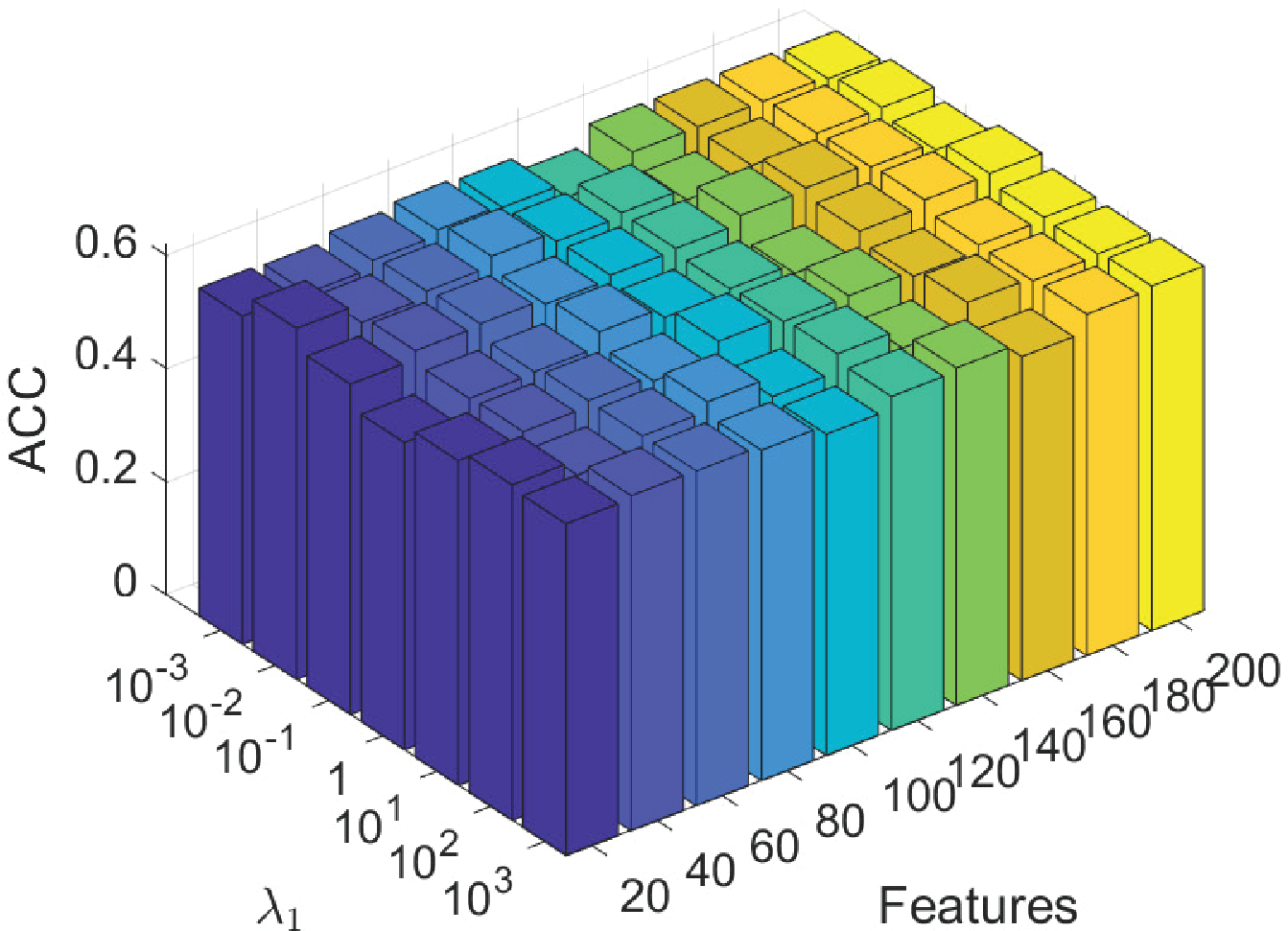}
\end{minipage}
}

\centering
\caption{Parameter sensitivity with respect to ${\lambda _1}$ in terms of ACC ($\alpha  = 1$, ${\lambda _{\rm{2}}} = 1$, ${\lambda _{\rm{3}}} = 1$)}\label{fig::picture5}
\end{figure*}

\begin{figure*}[htbp]  
\centering
\subfigure[USPS]{
\begin{minipage}{0.31\linewidth}
\centering
  \includegraphics[width=\textwidth]{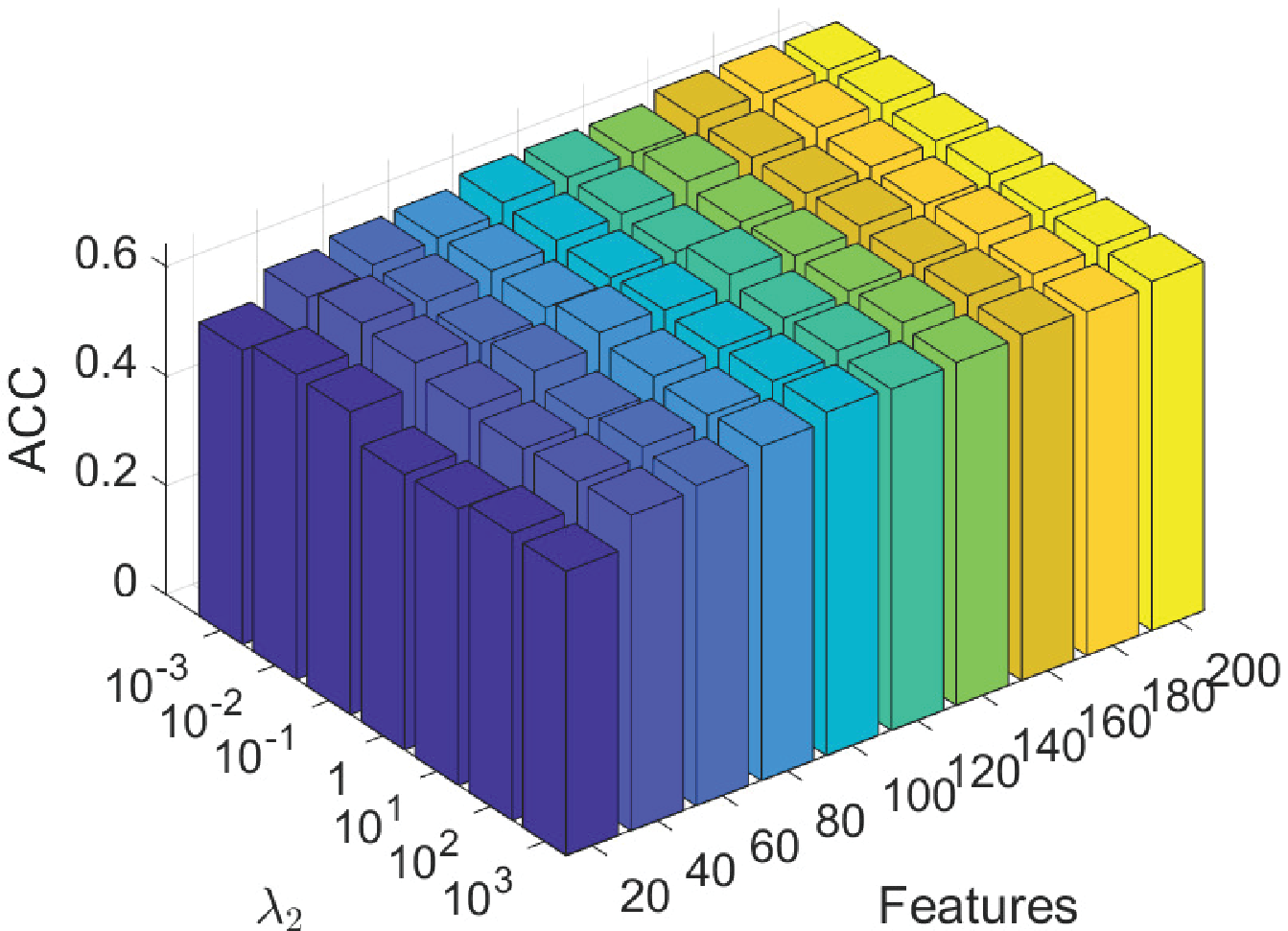}
\end{minipage}
}
\subfigure[Madelon]{
\begin{minipage}{0.31\linewidth}
\centering
  \includegraphics[width=\textwidth]{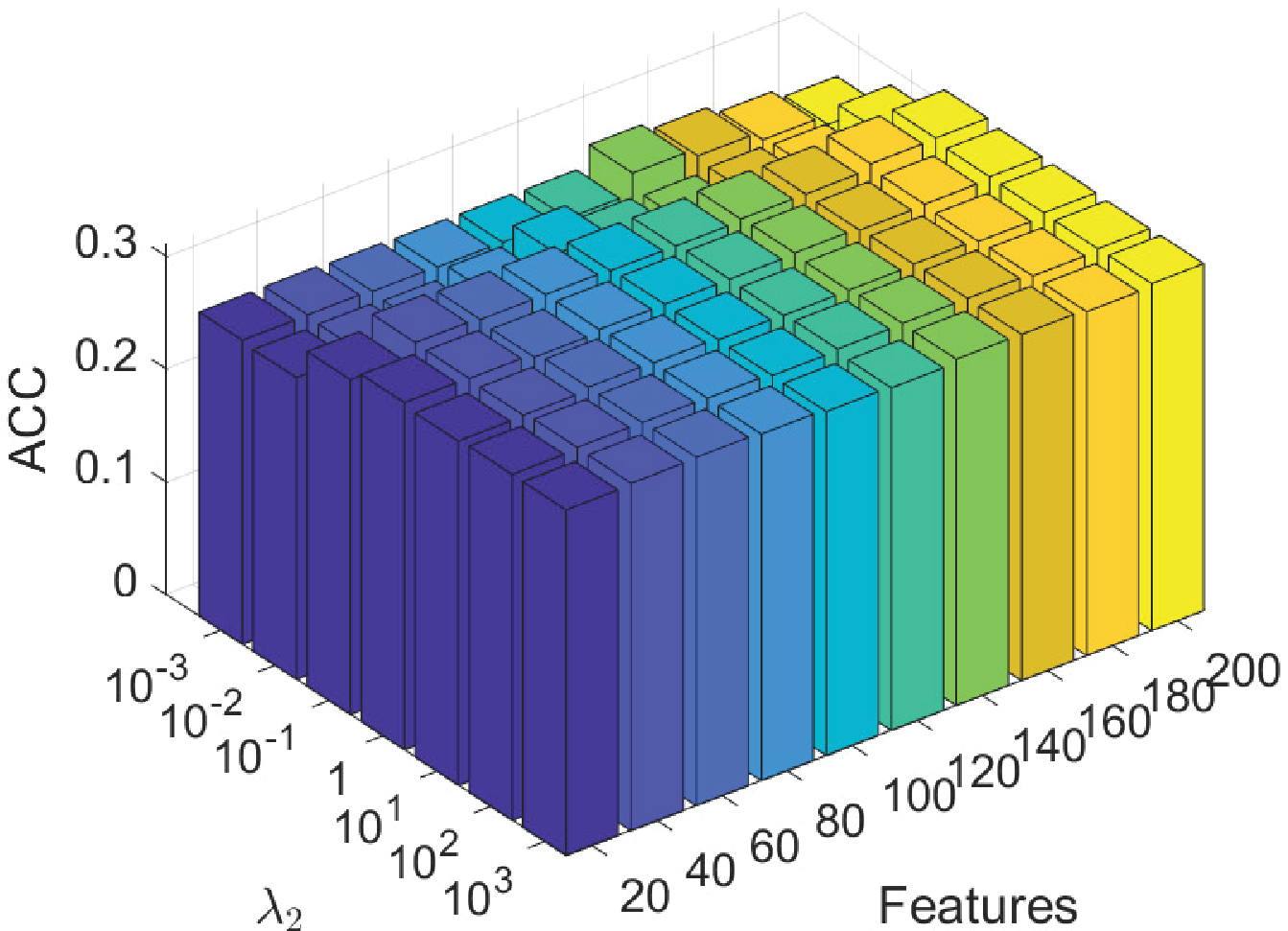}
\end{minipage}
}
\subfigure[Isolet]{
\begin{minipage}{0.31\linewidth}
\centering
  \includegraphics[width=\textwidth]{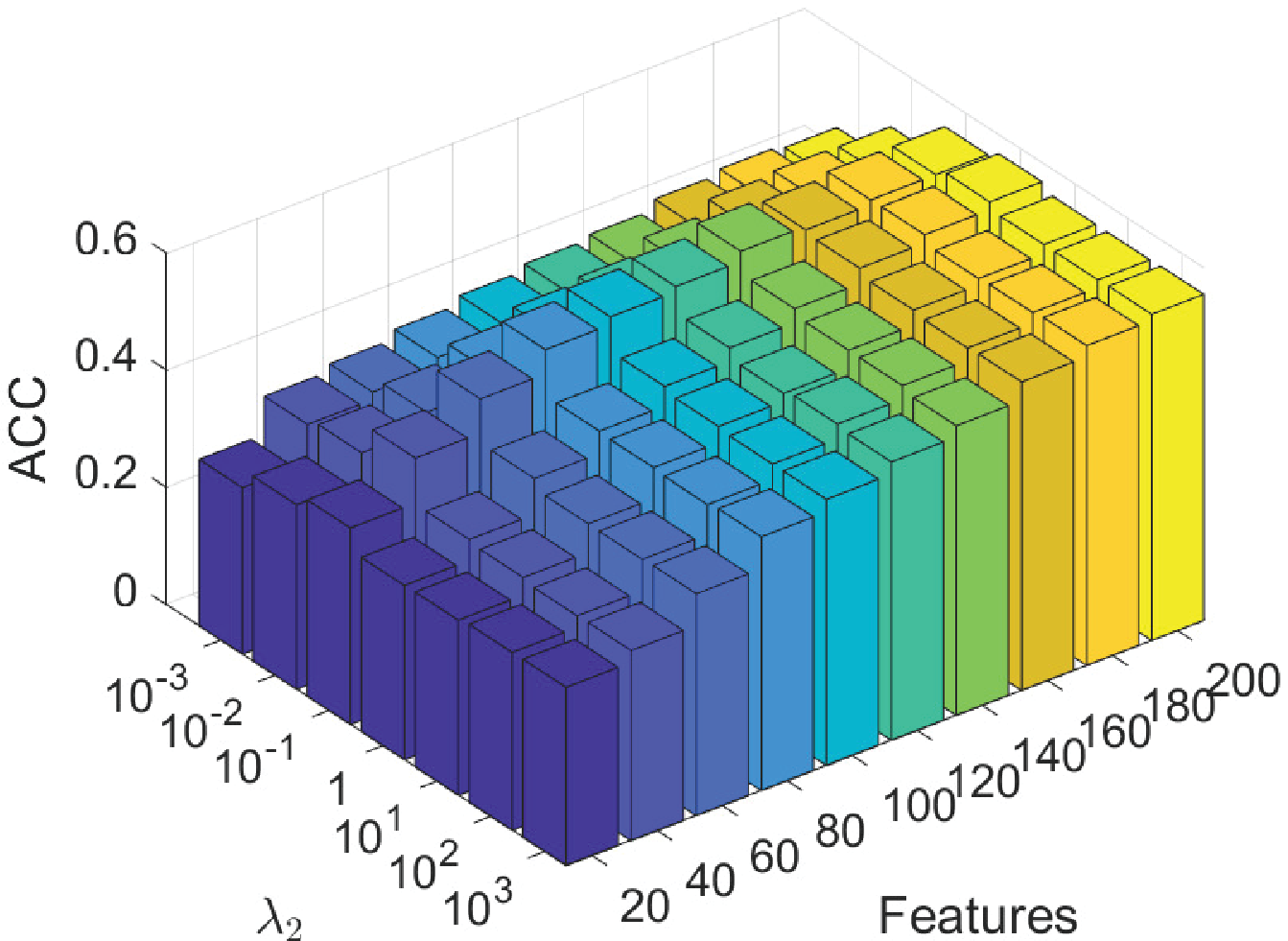}
\end{minipage}
}
\subfigure[Colon]{
\begin{minipage}{0.31\linewidth}
\centering
  \includegraphics[width=\textwidth]{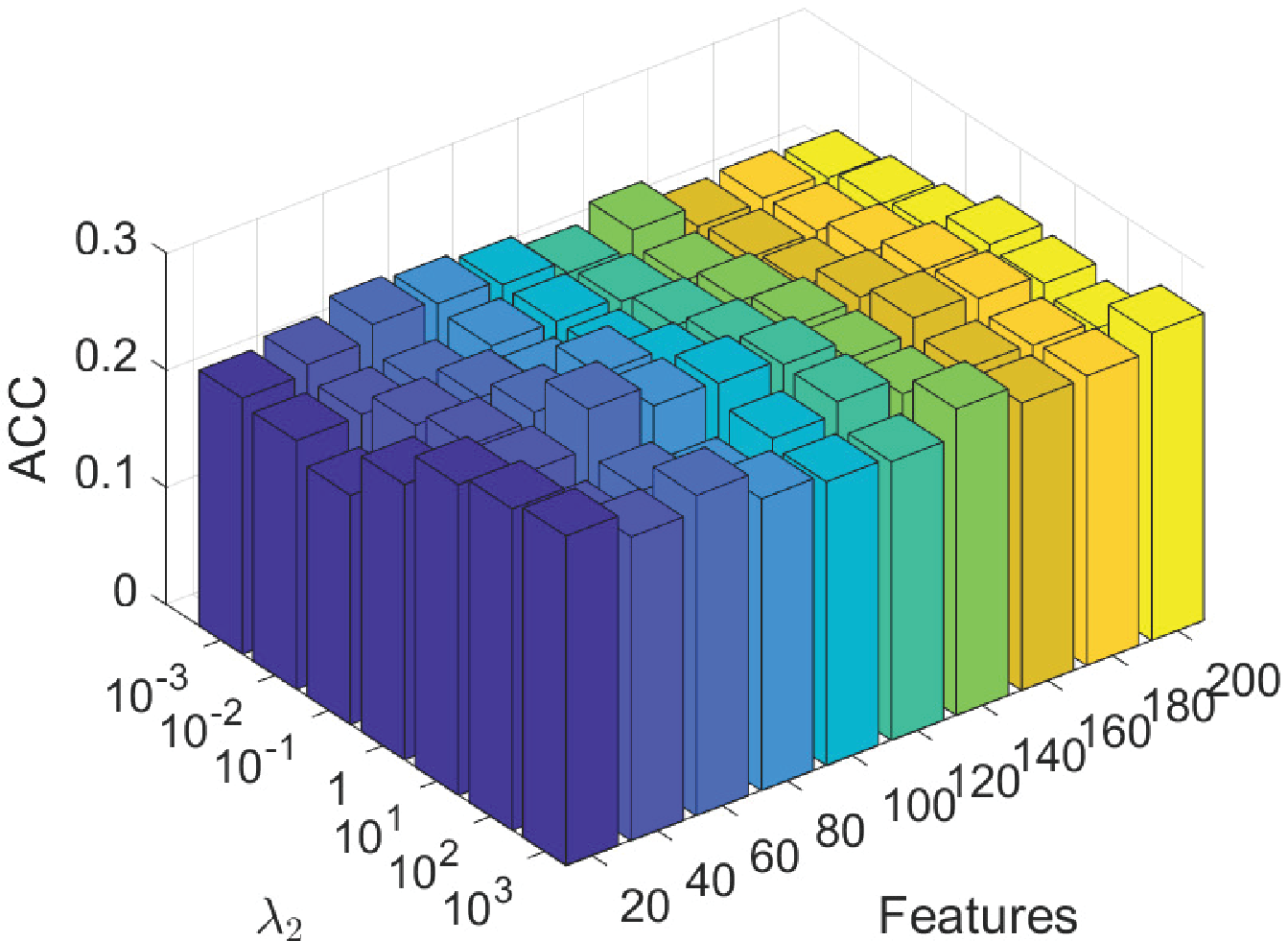}
\end{minipage}
}
\subfigure[warpPIE10P]{
\begin{minipage}{0.31\linewidth}
\centering
  \includegraphics[width=\textwidth]{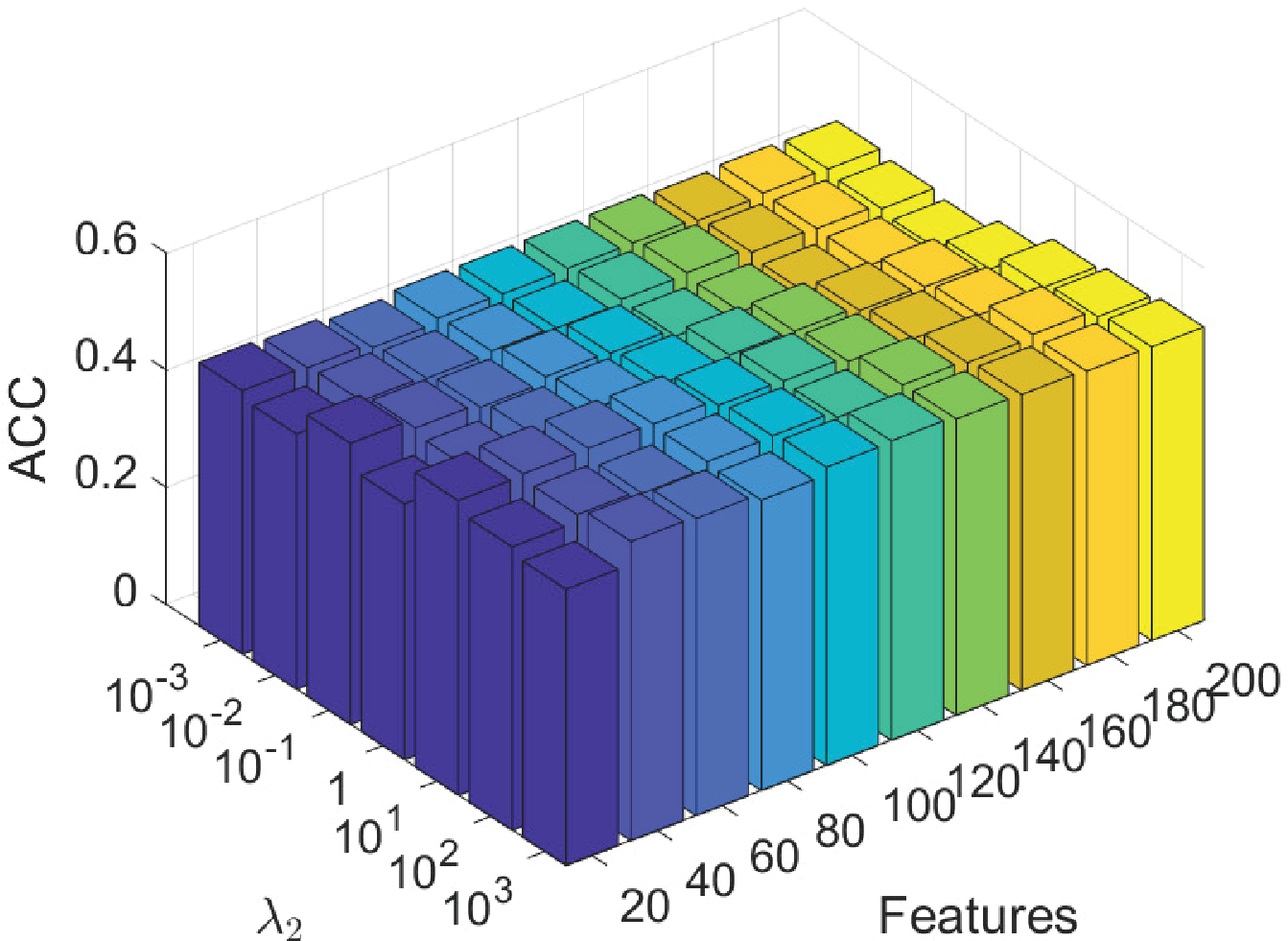}
\end{minipage}
}
\subfigure[GLIOMA]{
\begin{minipage}{0.31\linewidth}
\centering
  \includegraphics[width=\textwidth]{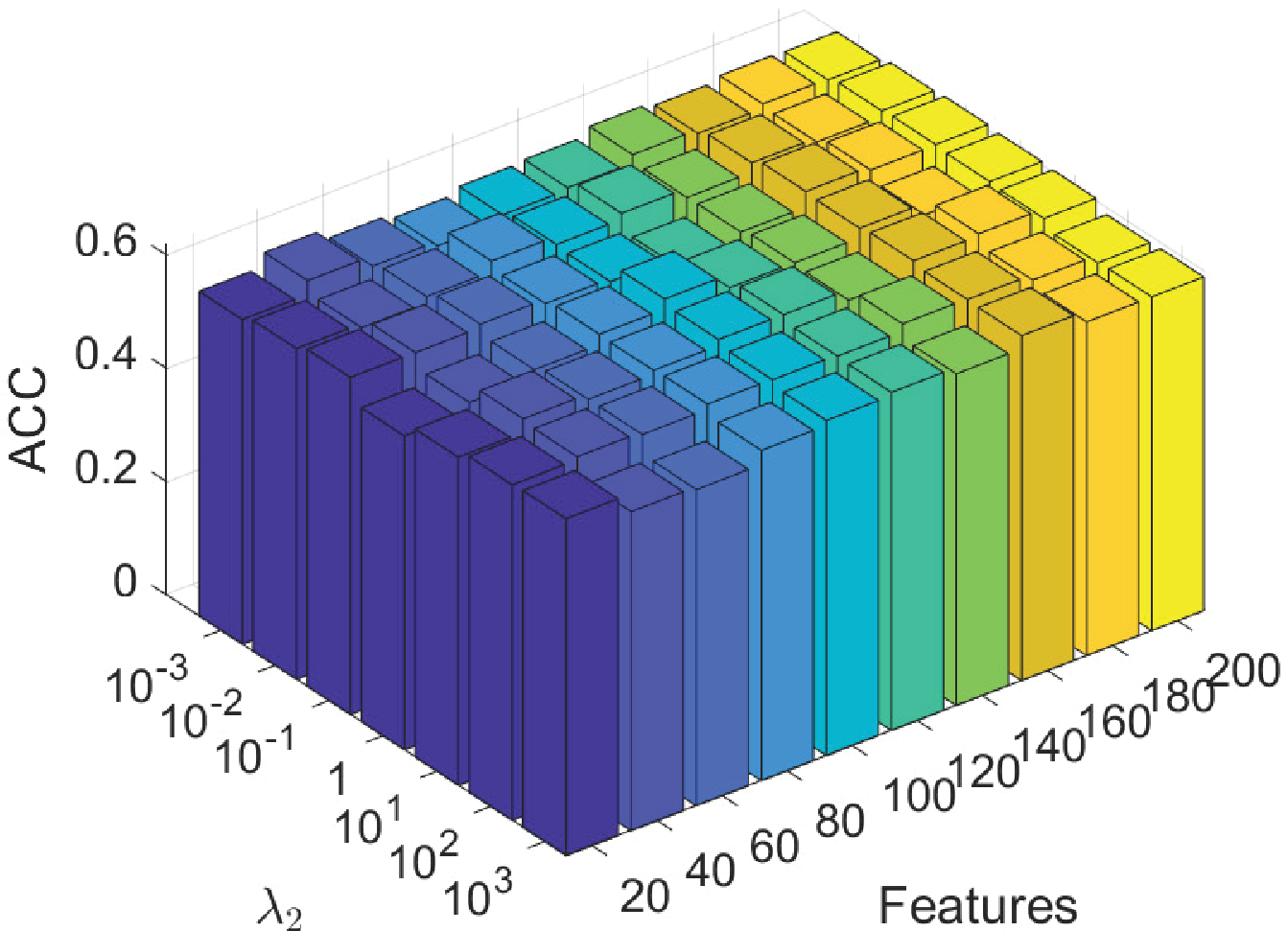}
\end{minipage}
}

\centering
\caption{Parameter sensitivity with respect to ${\lambda _2}$ in terms of ACC ($\alpha  = 1$, ${\lambda _1} = 1$, ${\lambda _{\rm{3}}} = 1$)}\label{fig::picture6}
\end{figure*}

\begin{figure*}[htbp]  
\centering
\subfigure[USPS]{
\begin{minipage}{0.31\linewidth}
\centering
  \includegraphics[width=\textwidth]{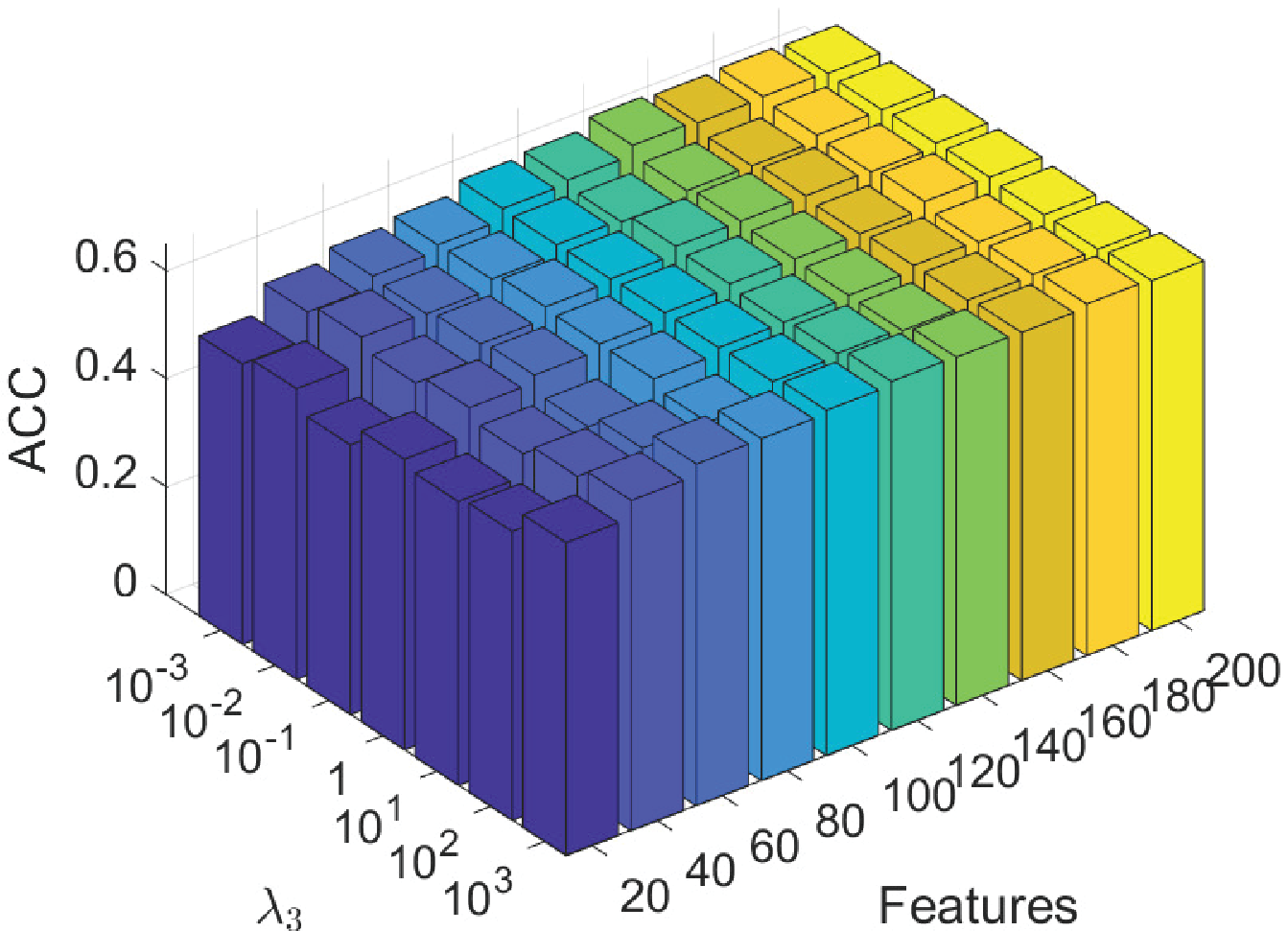}
\end{minipage}
}
\subfigure[Madelon]{
\begin{minipage}{0.31\linewidth}
\centering
  \includegraphics[width=\textwidth]{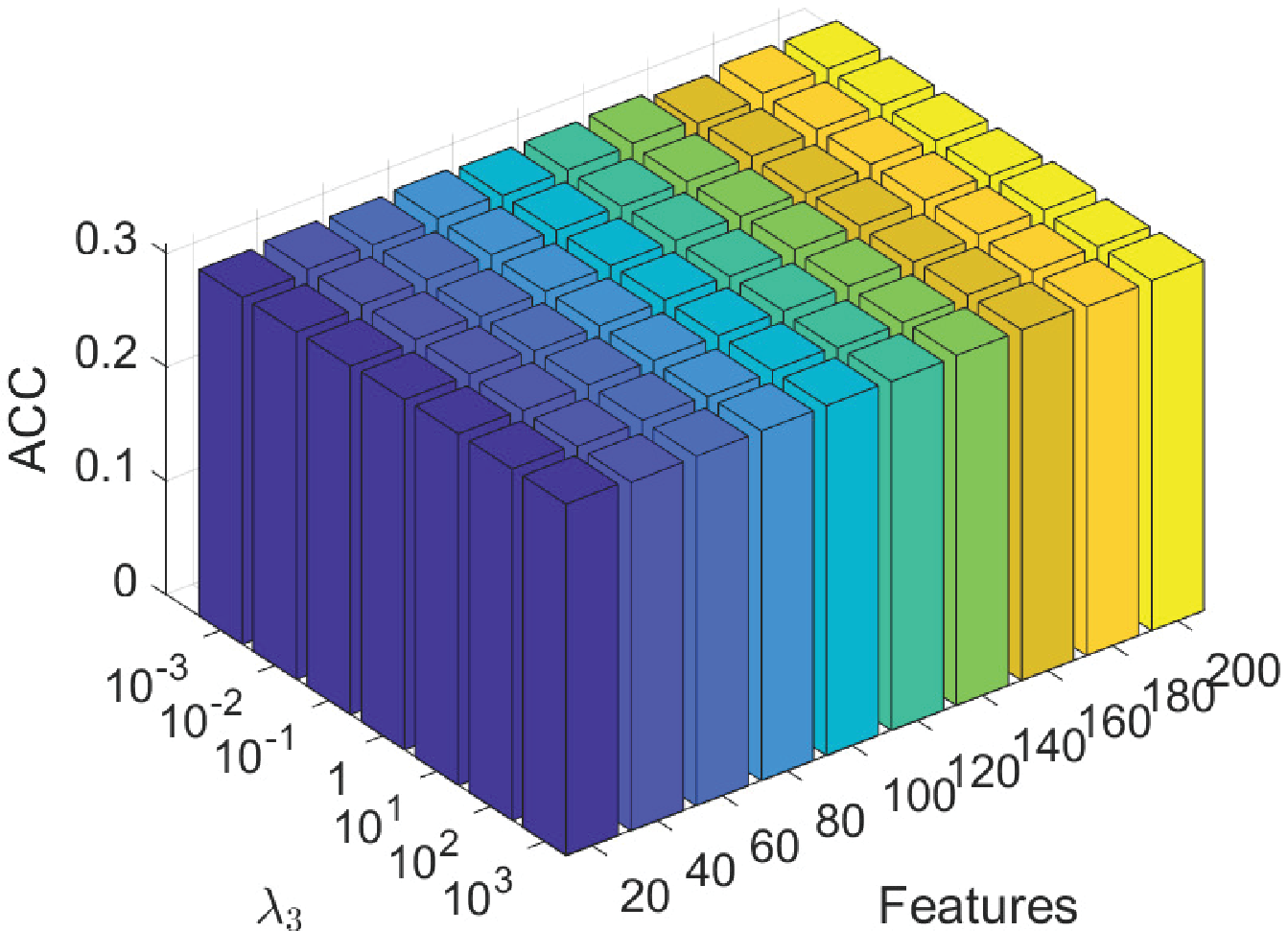}
\end{minipage}
}
\subfigure[Isolet]{
\begin{minipage}{0.31\linewidth}
\centering
  \includegraphics[width=\textwidth]{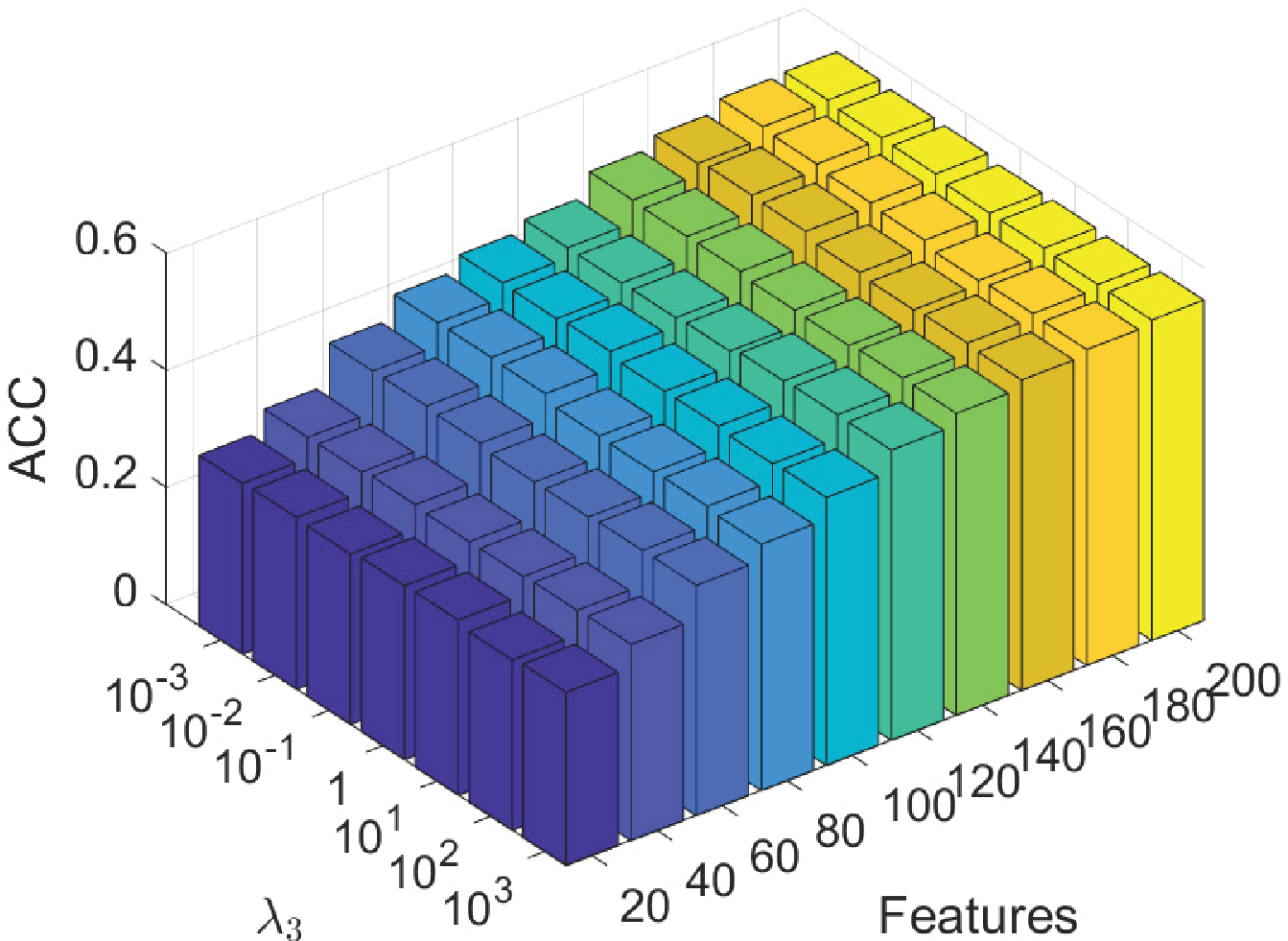}
\end{minipage}
}
\subfigure[Colon]{
\begin{minipage}{0.31\linewidth}
\centering
  \includegraphics[width=\textwidth]{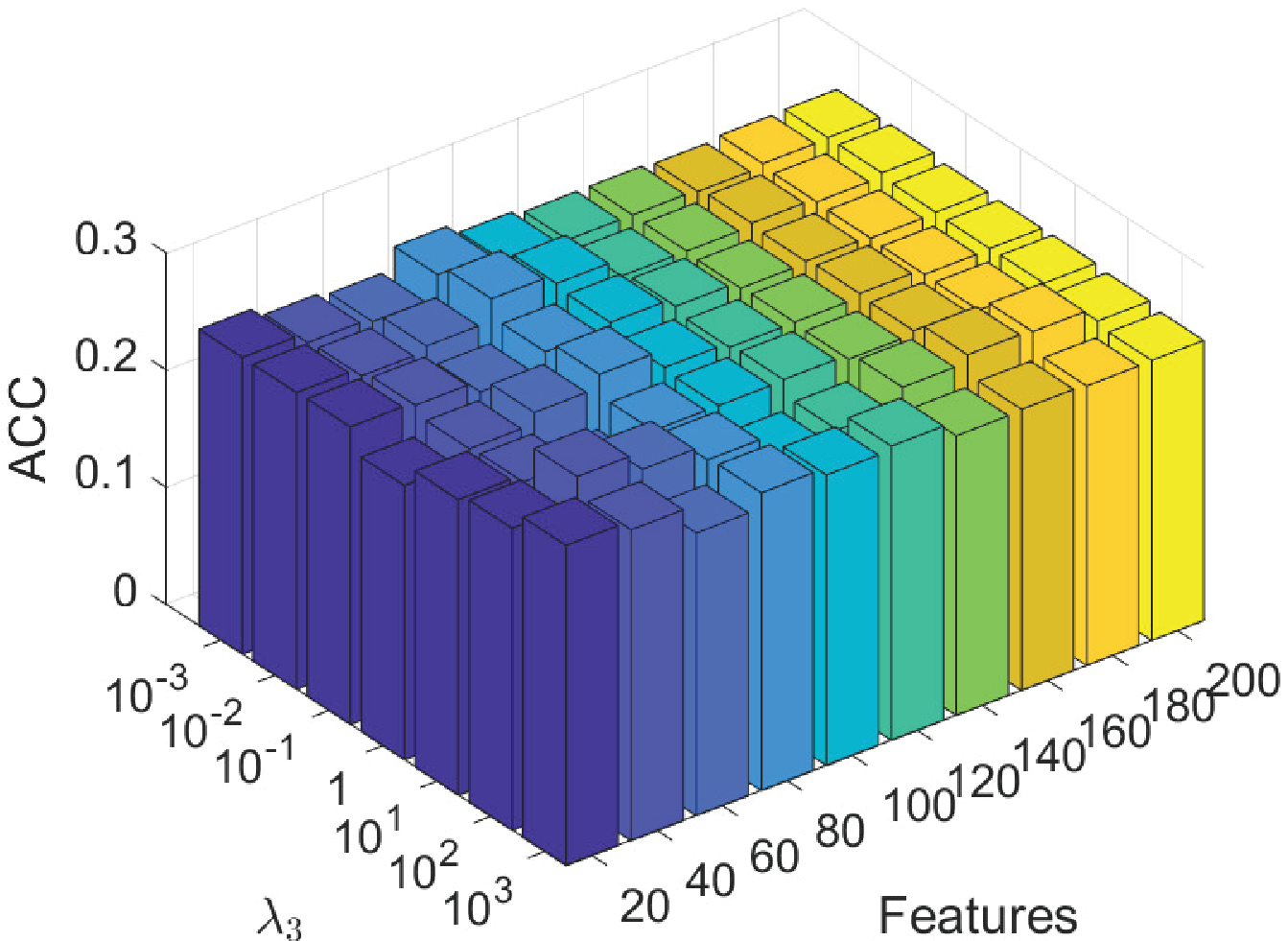}
\end{minipage}
}
\subfigure[warpPIE10P]{
\begin{minipage}{0.31\linewidth}
\centering
  \includegraphics[width=\textwidth]{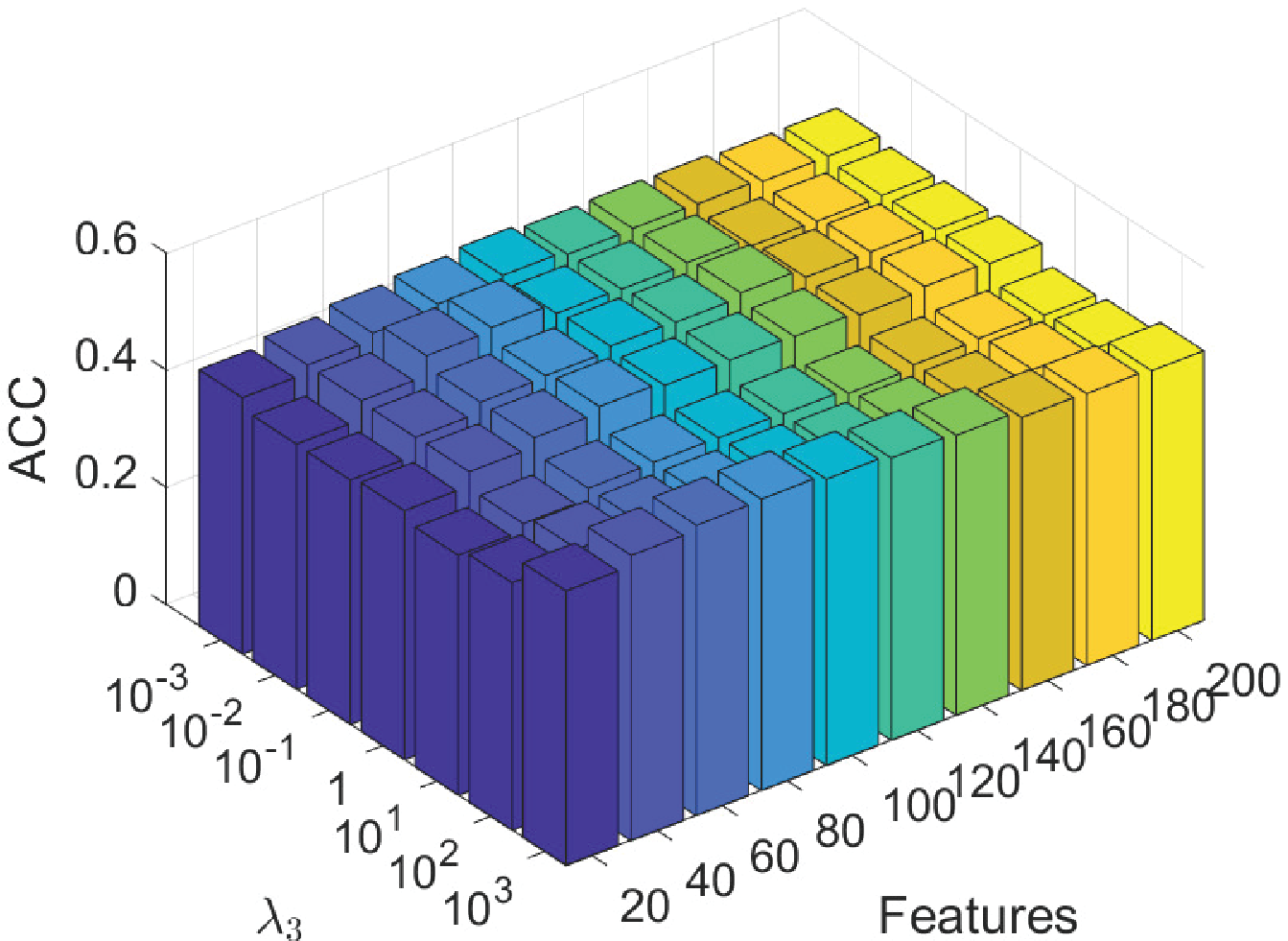}
\end{minipage}
}
\subfigure[GLIOMA]{
\begin{minipage}{0.31\linewidth}
\centering
  \includegraphics[width=\textwidth]{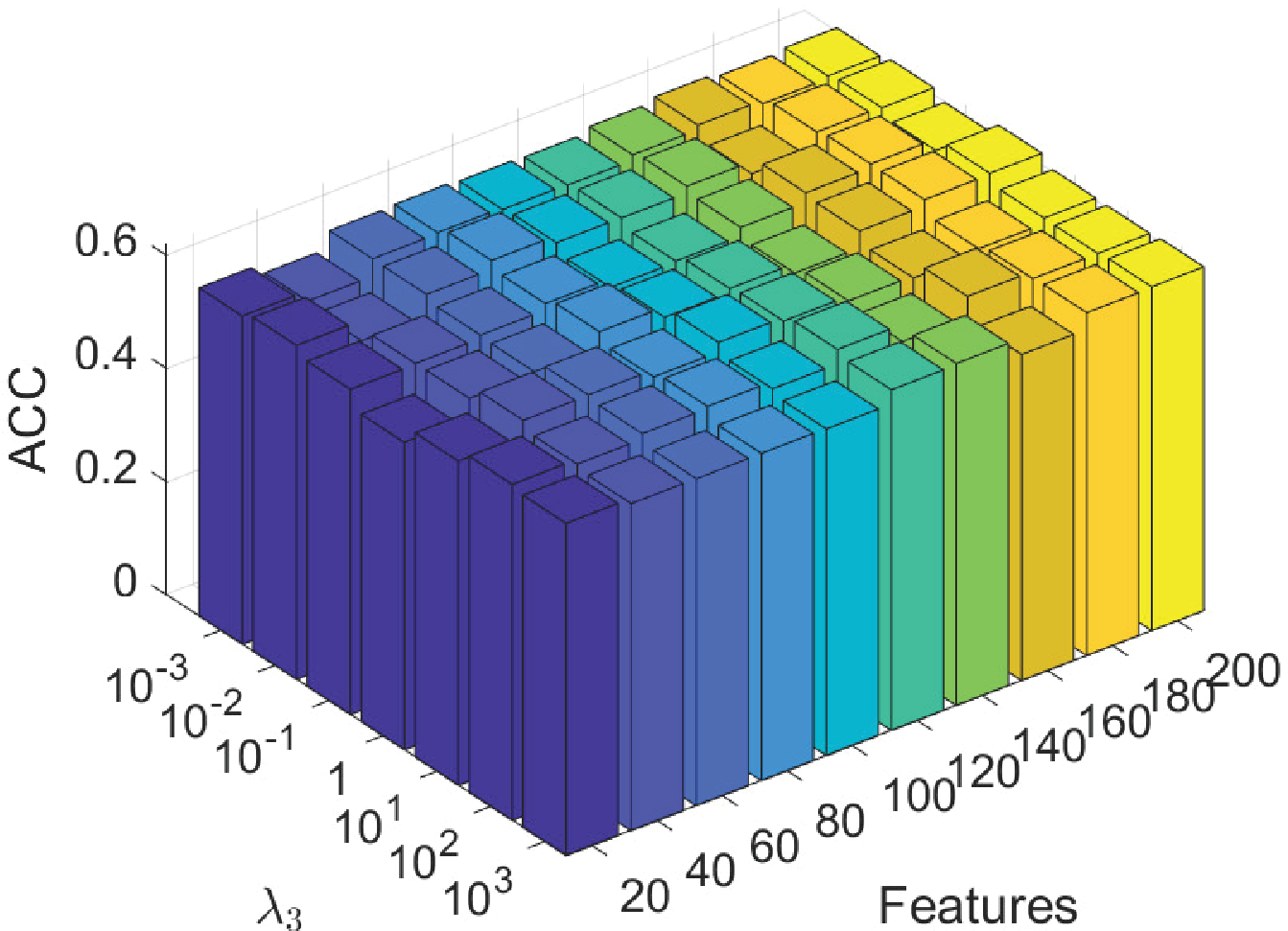}
\end{minipage}
}

\centering
\caption{Parameter sensitivity with respect to ${\lambda _3}$ in terms of ACC ($\alpha  = 1$, ${\lambda _1} = 1$, ${\lambda _{\rm{2}}} = 1$)}\label{fig::picture7}
\end{figure*}

It can be observed that the clustering accuracy is relatively stable with the change of $\alpha $ and ${\lambda _{\rm{3}}}$. While it fluctuates to a certain extent under different values of ${\lambda _1}$ and ${\lambda _{\rm{2}}}$, especially on datasets Isolet, Colon and warpPIE10P. That is to say, SPLR is insensitive to $\alpha $ and ${\lambda _{\rm{3}}}$, but sensitive to ${\lambda _1}$ and ${\lambda _{\rm{2}}}$. In general, as the values of ${\lambda _1}$ and ${\lambda _2}$ increase, the corresponding clustering accuracy goes up first and then goes down no matter how many features are selected. Since a smaller ${\lambda _1}$ means that more redundant features seem to be chosen and a larger ${\lambda _1}$ means that the redundancy between features is minimized as much as possible, the above variation trend verifies that redundant features will indeed degrade the performance of the algorithm but the performance will also decrease if the regularization term is too strong, which may cause overfitting. Likewise, a smaller ${\lambda _2}$ indicates weaker local geometric structure preservation, and a larger ${\lambda _2}$ indicates stronger local geometric structure preservation, which demonstrates the vital part local manifold structure preservation takes in feature selection. For instance, the proposed SPLR can achieve promising results on datasets USPS, Isolet and GLIOMA when ${\lambda _1}$ is set in the range of ${10^{ - 1}}$ to ${10^2}$. And it can achieve promising results on datasets Madelon, Isolet, Colon and warpPIE10P when ${\lambda _2}$ is set in the range of ${10^{ - 1}}$ to ${10^2}$. Therefore, there is no denying the fact that how to determine values of parameters in SPLR is of critical importance.
\section{Conclusions}
\label{sec:Conclusions}

In this paper, we propose a novel algorithm called unsupervised feature selection via self-paced learning and low-redundant regularization (SPLR). To avoid the negative influence of outliers and retain the global reconstruction information of data, self-paced learning is incorporated into the framework of subspace learning. Besides, with a view to redundancy reduction, a diversity term is designed, where inner product of features is deployed to judge the relevance. In addition, a regularization term is introduced to keep the local manifold structure of data unchanged under the assumption that if two samples are close in the original space, they should also be close when embedded into the subspace. What’s more, by virtue of the superiority of ${l_{2,{1 \mathord{\left/
 {\vphantom {1 2}} \right.
 \kern-\nulldelimiterspace} 2}}}$-norm over ${l_{2,1}}$-norm in sparsity, ${l_{2,{1 \mathord{\left/
 {\vphantom {1 2}} \right.
 \kern-\nulldelimiterspace} 2}}}$-norm is utilized to constrain the projection matrix. The optimization problem is addressed by an effective iterative algorithm. Experiments on nine benchmark datasets are conducted not only to confirm the excellent performance of SPLR in comparison with seven state-of-the-art algorithms, but also to validate the convergence of SPLR empirically.

In the future work, owing to the fact that random search, which considers that different parameters play a different role in the performance of the algorithm, has been testified to be more effective than gird search, a novel random search strategy for hyperparameter optimization can be used to obtain the optimal results to further verify the effectiveness of the algorithm. Moreover, since a hypergraph can capture the high order relationships between features rather than the simple pairwise ones, it is possible to transfer the original approach based on the traditional simple graph to the one based on the hypergraph for the sake of admirable performance.
%

\section*{Acknowledgment}
This work is supported by the National Natural Science Foundation of China (Nos. 61976182, 62076171, 61876157, 61976245), Key program for International S\&T Cooperation of Sichuan Province (2019YFH0097), and Sichuan Key R\&D project (2020YFG0035).
\bibliographystyle{elsarticle-num} 

\end{document}